\newtheorem{theorem}{Theorem}
\newtheorem{corollary}{Corollary}
\newtheorem{lemma}{Lemma}
\newtheorem{definition}{Definition}
\newtheorem{proposition}{Proposition}
\newtheorem{example}{Example}
\newenvironment{proof}{\textbf{Proof.}\ }{\hspace{\stretch{1}}$\square$\\}
\journal{Artificial Intelligence}
\begin{document}

\begin{frontmatter}

\title{Theoretical Foundation of Co-Training and Disagreement-Based Algorithms}

\author{Wei Wang}

\author{Zhi-Hua Zhou\corref{mycorrespondingauthor}}
\cortext[mycorrespondingauthor]{Corresponding author}
\ead{zhouzh@nju.edu.cn}

\address{National Key Laboratory for Novel Software Technology,\\
        Nanjing University, Nanjing 210023, China}

\begin{abstract}
Disagreement-based approaches generate multiple classifiers and exploit the disagreement among them with unlabeled data to improve learning performance. Co-training is a representative paradigm of them, which trains two classifiers separately on two sufficient and redundant views; while for the applications where there is only one view, several successful variants of co-training with two different classifiers on single-view data instead of two views have been proposed. For these disagreement-based approaches, there are several important issues which still are unsolved, in this article we present theoretical analyses to address these issues, which provides a theoretical foundation of co-training and disagreement-based approaches.
\end{abstract}

\begin{keyword}
machine learning \sep semi-supervised learning \sep disagreement-based learning \sep co-training \sep multi-view classification \sep combination


\end{keyword}

\end{frontmatter}

\section{Introduction}\label{sec:Introduction}
Learning from labeled training data is well-established in traditional machine learning, but labeling the data is time-consuming, sometimes may be very expensive since it requires human efforts. In many practical applications, unlabeled data can be obtained abundantly and cheaply. For example, in the task of web page classification, it is easy to get abundant unlabeled web pages in the Internet, while a few labeled ones are available since obtaining the labels requires human interaction. Many semi-supervised learning approaches which exploit unlabeled data to complement labeled data for improving learning performance have been developed. Those approaches can be roughly categorized into four classes, i.e., generative approaches, S3VMs
(Semi-Supervised Support Vector Machines), graph-based approaches and
disagreement-based approaches. Generative approaches use a generative model and typically employ EM to model the label estimation or parameter estimation process \cite{Miller97,NigamMTM00,Shahshahani:Landgrebe1994}. S3VMs use unlabeled data to adjust the SVM decision boundary learned from labeled data such that it goes through the less dense region while keeping the labeled data classified correctly \cite{Chapelle:Zien2005,Joachims99,Sindhwani:Keerthi2006}. Graph-based approaches define a graph on the training data and enforce the label smoothness over the graph as a regularization term
\cite{Belkin:Niyogi:Sindhwani2006,Blum:Chawla2001,Zhou:Bousquet:Lal:Weston:Scholkopf2004,Zhu03}.
Disagreement-based approaches generate multiple classifiers and exploit the
disagreement among them with unlabeled data, i.e., letting the multiple
classifiers label unlabeled instances to augment the training data \cite{Blum:Mitchell1998,Goldman:2000,Zhou:Li2005tkde,Zhou:Li2009}.

Research on disagreement-based approaches started from Blum and Mitchell's seminal work of co-training \cite{Blum:Mitchell1998}, which is a representative paradigm of
disagreement-based approaches. When co-training was proposed, Blum and Mitchell assumed that there exist two \textit{views} (i.e., two disjoint sets of features), each of which is sufficient for learning the target concept. For example, the web page classification task has two views, i.e., the text appearing on the page itself and the anchor text attached to hyper-links pointing to this page \cite{Blum:Mitchell1998}; the speech recognition task also has two views, i.e., sound and lip motion \cite{DeSa98}. Co-training learns two classifiers with initial labeled data on the two views respectively and lets them label unlabeled instances for each other to augment the training data. Unfortunately, in real-world applications, the requirement of two views is hard to satisfy. Although Nigam and Ghani \cite{Nigam:Ghani2000} have shown that a feature split can be used to enable co-training to work when there are many redundant features, it is more desirable to develop algorithms that can be applied to single-view data. Several successful variants of co-training have been proposed along this direction. For example, Goldman and Zhou \cite{Goldman:2000} proposed a method which generates two classifiers by using two different learning algorithms that can partition the example space into a number of equivalence classes; Zhou and Li \cite{Zhou:Li2007TKDE} proposed a semi-supervised regression method which generates two regressors by using different parameter configurations for the same learning algorithm. Different relevant algorithms have been developed with different names and the name disagreement-based semi-supervised learning was coined \cite{Zhou:Li2009,ZhouZH2008} to reflect the fact that co-training and its variants are actually in the same family, and the key for the learning process to proceed is to maintain a large disagreement among the classifiers\footnote{If there is no disagreement among the classifiers, the learning process would degenerate into self-training.}. Co-training \cite{Blum:Mitchell1998} is the famous algorithm which relies on two views, while the algorithms which rely on multiple classifiers generated from single-view data is referred to as single-view disagreement-based approaches. In disagreement-based approaches, multiple classifiers are trained for the same task and the disagreement among them is exploited during the learning process. Here, unlabeled data serve as a kind of ``platform'' for information exchange. If one classifier is much more confident on a disagreed unlabeled instance than other classifier(s), then this classifier will teach other(s) with this instance\footnote{In real-world applications, the disagreement-based approaches may consist of more than two classifiers, in this article we mainly focus on the two-classifier setting.}. It does not matter where these classifiers come from, they can be trained on multi-view data with the same learning algorithm or on single-view data with different learning algorithms. The disagreement-based algorithms have achieved success in many domains such as natural language processing \cite{Hwa2003,Sarkar2001,Steedman2003} and image retrieval \cite{Zhou:Chen:Dai2006,Wang:Zhou2008}.

There is another famous semi-supervised learning approach called \textit{co-regularization}
\cite{Brefeld2006,Farquhar2006,Sindhwani:Niyogi:Belkin2005}, which also exploits unlabeled data with two views. It directly minimizes the error rate on labeled data and the disagreement on unlabeled data over two views with the intuition that the optimal classifiers in the two views are compatible with each other. It is worth noting that co-training exploits unlabeled data very differently from co-regularization, and no pseudo-labels are assigned to unlabeled instances in co-regularization.

By comparing these discriminative semi-supervised learning approaches, it can be found that unlabeled data help in two distinct ways. One is starting with unlabeled data to generate a constraint (regularization) and then learning a classifier with labeled data and the constraint (regularization), i.e., S3VMs, graph-based approaches and co-regularization. In S3VMs, unlabeled data are exploited as a constraint such that the SVM decision boundary goes through the less dense region; in graph-based approaches, unlabeled data are exploited to construct a graph Laplacian regularization; while in co-regularization, unlabeled data are exploited to generate a regularization of disagreement over two views. Balcan and Blum \cite{Balcan:Blum2010} provided a unified framework for these approaches to interpret why and when unlabeled data can help, in which they defined a notion of compatibility and assumed that the target concept should have high compatibility with the underlying data distribution. Unlabeled data are exploited to estimate the
compatibility of all hypotheses and the size of hypothesis space could be reduced by removing the hypotheses which have low compatibility. Then labeled data is used to find a good hypothesis within the reduced hypothesis space, which will lead to good sample-size bounds. The other is starting with labeled data to generate multiple weak classifiers and then letting them label unlabeled instances to augment the training data, i.e., disagreement-based approaches. There has been a long-term theoretical study on this.
When Blum and Mitchell \cite{Blum:Mitchell1998} proposed co-training, they proved that when the two views are conditionally independent, co-training can boost the performance of weak classifiers to arbitrarily high by exploiting unlabeled data. Dasgupta et al.
\cite{Dasgupta:Littman:McAllester2002} analyzed the generalization bound for co-training with two conditionally independent views
and showed that the error rate of co-training is bounded by the disagreement between two co-trained classifiers. To relax the conditional independence assumption, Abney \cite{Abney2002} found that weak dependence can also lead to successful co-training. Later, Balcan et al. \cite{Balcan:Blum:Yang2005} pointed out that if a PAC classifier can be obtained on each view, the conditional independence assumption or even weak dependence assumption is unnecessary; a weaker assumption of ``expansion'' over the underlying data distribution is sufficient for co-training to succeed. However, all these results focus on co-training which relies on two views, there are several important issues on disagreement-based approaches which still are unsolved.

\subsection{Our Focus and Main Results}\label{sec:MainResults}

We present a theoretical foundation of co-training and disagreement-based approaches to address the unsolved issues in this article. The issues and results can be summarized as follows. (1) One basic issue is why and when the disagreement-based approaches can improve learning performance by exploiting unlabeled data. We present a general analysis, which shows that if the two initial classifiers trained with the initial labeled data have large disagreement, the disagreement-based approaches can improve learning performance (Section~\ref{sec:upper-bound}); (2) it is often observed that the performance of the classifiers in disagreement-based approaches can not be improved further after a number of rounds in empirical studies. Up to now, there is no theoretical explanation to this. We prove that the disagreement and the error rates of the classifiers will converge after a number of rounds, which theoretically explains why the classifiers can not be improved further (Section~\ref{sec:no-further-improvement}); (3) all previous theoretical analyses focused on the sufficient condition, so a fundamental issue may arise: what is the sufficient and necessary condition for co-training to succeed? To the best of our knowledge, this has not been touched. We present a theoretical graph-based analysis on co-training, based on which we get the sufficient and necessary condition for co-training to succeed (Section~\ref{sec:sufficient-necessary}); (4) all previous theoretical analyses assumed that each view is sufficient for learning the target concept. So another issue may arise: what can co-training do with insufficient views? We present a theoretical analysis on co-training with insufficient views which is much more challenging but practical, especially when the two views provide diverse information (Section~\ref{sec:insufficient-co-training}); (5) the classifiers in disagreement-based approaches are usually combined to make predictions, unfortunately, there is no theoretical analysis about this. We study margin-based classifiers and present a theoretical analysis to explain why and when the combination can be better than the individual classifiers (Section~\ref{sec:combination}).

\subsection{Organization}\label{sec:Organization}
The rest of this article is organized as follows. We present a general analysis on disagreement-based approaches to explain why they can improve learning performance by exploiting unlabeled data in Section~\ref{sec:upper-bound} and why the learning performance can not be improved further after a number of rounds in Section~\ref{sec:no-further-improvement}. In Section~\ref{sec:sufficient-necessary}, we study the sufficient and necessary condition and give an interesting implication, such as combination of weight matrices. We analyze co-training with insufficient views in Section~\ref{sec:insufficient-co-training}, and analyze when the combination can be better than the individual classifiers in Section~\ref{sec:combination}. Finally, we make a conclusion in Section~\ref{sec:conclusions}.

\section{General Analysis on Disagreement-Based Approaches}\label{sec:general-analysis}
Given the labeled data $L$, unlabeled data $U$ and two hypothesis spaces $\mathcal{H}_{1}$ and $\mathcal{H}_{2}$, in this section we consider the following disagreement-based process whose pseudo-codes
are in Algorithm~\ref{alg:co-training-process}.

\noindent \textbf{Disagreement-Based Process:} \textit{Given the labeled data $L$ and unlabeled data $U$, at first, we train two
initial classifiers $h_{1}^{0} \in {\mathcal{H}}_{1}$ and $h_{2}^{0} \in
{\mathcal{H}}_{2}$ using $L$ which contains $l$ labeled examples with two
different learning algorithms (if the data have two views, we can train two
classifiers $h_{1}^{0} \in {\mathcal{H}}_{1}$ and $h_{2}^{0} \in
{\mathcal{H}}_{2}$ using $L$ in the two views with the
same learning algorithm, respectively). Then, $h_{1}^{0}$ selects $u$ unlabeled instances from $U$ to label and puts these newly labeled examples
into $\sigma_{2}$ which contains the initial labeled examples in $L$; at
the same time, $h_{2}^{0}$ selects $u$ unlabeled instances from $U$ to label and puts these newly labeled examples into $\sigma_{1}$
which contains the initial labeled examples in $L$. Then, $h_{1}^{1} \in
{\mathcal{H}}_{1}$ and $h_{2}^{1} \in {\mathcal{H}}_{2}$ are trained with $\sigma_{1}$
and $\sigma_{2}$, respectively. After that, $h_{1}^{1}$ selects $u$ unlabeled
instances to label and uses these newly labeled examples to update $\sigma_{2}$; while
$h_{2}^{1}$ also selects $u$ unlabeled instances to label and uses these
newly labeled examples to update $\sigma_{1}$. Such a process is repeated for a pre-set
number of learning round.}

\begin{algorithm}[t]
\caption{Disagreement-Based Process}\label{alg:co-training-process}
\begin{algorithmic}
   \STATE {\bfseries Input:} Labeled data $L$, unlabeled data $U$, two hypothesis spaces $\mathcal{H}_{1}$ and $\mathcal{H}_{2}$, and pre-set number of learning round $s$.
   \STATE {\bfseries Output:} $h_{1}^{s}$ and $h_{2}^{s}$.
   \STATE {\bfseries Initialize:} Set $\sigma_{1}=\sigma_{2}=L$;
   \FOR{$i=0,1,\cdots,s$}
   \STATE Train a classifier $h_{v}^{i} \in {\mathcal{H}}_{v}$ ($v=1,2$) with $\sigma_{v}$ by minimizing the empirical risk;
   \STATE $h_{v}^{i}$ selects $u$ unlabeled instances from $U$ to label, then add them into $\sigma_{3-v}$ and delete them from $U$.
   \ENDFOR
\end{algorithmic}
\end{algorithm}

It is easy to see that Algorithm~\ref{alg:co-training-process} reassembles the main process of existing disagreement-based approaches including co-training\footnote{Algorithm~\ref{alg:co-training-process} is almost the same as co-training in \cite{Blum:Mitchell1998} except one place: Algorithm~\ref{alg:co-training-process} uses $L$ and the examples labeled by classifier $h_{v}$ ($v=1,2$) to
retrain classifier $h_{3-v}$, while co-training in \cite{Blum:Mitchell1998} uses $L$ and the
examples labeled by both classifiers $h_{1}$ and $h_{2}$ to retrain each of them. To
exclude the examples labeled by a classifier itself is helpful in reducing the risk of over-fitting and many recent algorithms use the paradigm described in Algorithm~\ref{alg:co-training-process}.} \cite{Blum:Mitchell1998} which requires that the data should have two views and single-view disagreement-based algorithms \cite{Goldman:2000,Zhou:Li2007TKDE,Chawla2005}. The key procedure is that one classifier labels some unlabeled instances for the other, it does not matter where the two classifiers come from. The two
classifiers can be trained on two-view data with the same learning algorithm or on single-view data with two different learning algorithms.

\subsection{Upper Bounds on Error rates of Classifiers}\label{sec:upper-bound}
Suppose that $\mathcal {X}$ is the instance space, $\mathcal {Y}=\{-1, +1\}$ is the label space, $L=\{
(x^{1}, y^{1}), \cdots, (x^{l}, y^{l}) \} \subset {\mathcal {X} \times \mathcal {Y}}$ are the labeled data, $U=\{ x^{l+1}, x^{l+2}, \cdots, x^{l+|U|} \}\\ \subset
\mathcal{X}$ are the unlabeled data. Suppose that the labeled data $L$ independently and identically come from some unknown distribution $\mathcal {D}$, whose marginal distribution on $\mathcal {X}$ is $\mathcal {D}_{\mathcal {X}}$, and the unlabeled data $U$ independently and identically come from $\mathcal {D}_{\mathcal {X}}$. $\mathcal {H}_{v}:  \mathcal{X} \to \mathcal{Y}$ ($v=1,2$) denotes the hypothesis space. Suppose that $|{\mathcal{H}}_{v}|$ is finite \footnote{If ${\mathcal{H}}_{v}$ is infinite hypothesis space with finite VC-dimension $d_{v}$, we can replace ${\mathcal {H}}_{v}$ with its $\epsilon$-cover ${\mathcal
{H}}_{v}^{\epsilon}$: a set of hypotheses ${\mathcal {H}}_{v}^{\epsilon}\subseteq
{\mathcal {H}}_{v}$ such that for any $h_{v}\in{\mathcal {H}}_{v}$ there is a $h_{v}^{\epsilon}\in{\mathcal {H}}_{v}^{\epsilon}$ satisfying
$P(h_{v}(x) \neq h_{v}^{\epsilon}(x))\leq\epsilon$. It is well-known that such an ${\mathcal
{H}}_{v}^{\epsilon}$ always exists with size $|{\mathcal
{H}}_{v}^{\epsilon}|\leq2(\frac{2e}{\epsilon}\ln\frac{2e}{\epsilon})^{d_{v}}$
\cite{Haussler1992}.} and the target concept (ground truth) $c$ which is perfectly consistent with the distribution $\mathcal {D}$ belongs to ${\mathcal{H}}_{1}$ and ${\mathcal{H}}_{2}$, and the error rate $\texttt{err}(h_{v})$ of $h_{v} \in
\mathcal{H}_{v}$ and the disagreement $d(f,g)$ between two hypotheses $f$ and $g$ are defined as follows.
\begin{eqnarray*}
    \texttt{err}(h_{v})&\!\!=\!\!&P_{(x,y)\in\mathcal {X}\times\mathcal {Y}}\big(h_{v}(x) \neq y\big);\\
    d(f,g)&\!\!=\!\!&P_{x\in \mathcal {X}}(f(x)\neq g(x)).
\end{eqnarray*}

In disagreement-based approaches shown in Algorithm~\ref{alg:co-training-process}, one classifier selects some unlabeled instances to label for the other. Here comes the question: how to select these unlabeled instances?
In co-training \cite{Blum:Mitchell1998}, Blum and Mitchell did not specify how to select unlabeled instances (see Page 8, Table 1 in their paper), though in their experiments they selected the most confident unlabeled instances to label with the intuition that confident instances bring high label quality. Nevertheless, there is no guarantee that selecting confident instances is better than selecting random instances, and it needs strong assumptions to characterize the relationship between confidence and label quality. Sometimes, selecting confident instances may be no better than selecting random instances if the confidence is unreliable. Actually, if the learning paradigm believes that labeling confident instances is helpful, it can select confident instances to label; otherwise, it can select random instances to label. Thus, selecting random instances to label can be thought of as the worst case. To make our theoretical analysis general and without any specific assumption on the learning process, we will prove upper bounds on the error rates of the classifiers by considering that each classifier selects random instances to label. In fact, these upper bounds also hold in the case where selecting confident instances helps; this is easy to understand: labeling confident instances may reduce label noise, according to the standard PAC learning theory, learning from the data with small label noise is no harder than learning from the data with large label noise.

Two classifiers trained with different views or different learning algorithms have different biases, it is the intuition why the disagreement-based approaches can work. Two classifiers having different biases implies that they classify some
unlabeled instances with different labels. The disagreement $d(f,g)$ can be used to estimate the difference. In disagreement-based approaches, $f$ selects some unlabeled instances from $U$ to label and adds them into the training data of $g$. If these newly labeled
examples are helpful in updating $g$, $f$ should know some information that $g$ does not know, i.e., $f$ can correctly classify some unlabeled instances which are mistakenly
classified by $g$. Obviously, this helpful information is a part of the
disagreement between $f$ and $g$. Unfortunately, sometimes $f$ may provide some
mistakenly classified examples to $g$ due to its non-perfect performance, and these mistakenly classified examples from $f$ would degrade the performance of $g$. So we must carefully characterize the newly labeled examples.

Let $\varepsilon_{v}^{i}$ denote the error rate of $h_{v}^{i}$, i.e., $\varepsilon_{v}^{i}=\texttt{err}(h_{v}^{i})$ ($i=0,\ldots,s$ and $v=1,2$). In the beginning, two classifiers $h_{1}^{0}$
and $h_{2}^{0}$ are trained with the initial $l$ labeled examples by minimizing the
empirical risk. In real-world applications, it has been found that these initial labeled examples play an important role, i.e., when $l$ is too
large, $h_{1}^{0}$ and $h_{2}^{0}$ are so good that they could
improve each other hardly; while $l$ is too small, $h_{1}^{0}$ and
$h_{2}^{0}$ are so weak that they may degenerate each other due to large noise in
the newly labeled examples \cite{Pierce2001,KrogelS03}. In our analysis, we suppose that by minimizing the empirical risk on $l$ labeled examples we can train two classifiers $h_{1}^{0}$ and $h_{2}^{0}$ with $\varepsilon_{1}^{0}<\frac{1}{2}$ and $\varepsilon_{2}^{0}<\frac{1}{2}$, and the value of $l$ satisfies the model of learning from noisy examples in \cite{AngluinLaird1988} with noise rate $\eta$ shown in Equation \ref{eq:intial-l}. The reason why we use this noise model to characterize $l$ is that updating the classifiers is a process of learning from noisy examples.
\begin{eqnarray}
\label{eq:intial-l}
    l\geq\frac{2}{(\varepsilon_{1}^{0})^{2}(1-2\eta)^{2}}\ln\frac{2|{\mathcal{H}}_{1}|}{\delta},
    ~~~~~~
    l\geq\frac{2}{(\varepsilon_{2}^{0})^{2}(1-2\eta)^{2}}\ln\frac{2|{\mathcal{H}}_{2}|}{\delta}.
\end{eqnarray}
Considering that the initial labeled examples are clean, i.e., $\eta=0$, we get that $l$ should be no less than
\begin{eqnarray*}
    \max\Big[\frac{2}{(\varepsilon_{1}^{0})^{2}}\ln\frac{2|{\mathcal{H}}_{1}|}{\delta},~
    \frac{2}{(\varepsilon_{2}^{0})^{2}}\ln\frac{2|{\mathcal{H}}_{2}|}{\delta}\Big].
\end{eqnarray*}
Let $\xi_{v}^{i}$ denote the
upper bound on the error rate of $h_{1}^{i}$, i.e., $\varepsilon_{v}^{i}\leq\xi_{v}^{i}$. In order to show whether the performance of the classifiers could be improved, we need to analyze the upper bounds $\xi_{1}^{i}$ and $\xi_{2}^{i}$ for $i\geq 1$. In detail, considering the $i$-th round, $h_{v}^{i} \in {\mathcal{H}}_{v}$ randomly selects $u$ unlabeled instances from $U$ to label and adds them into the training data $\sigma_{3-v}$,
then $h_{3-v}^{i+1}$ is trained with $\sigma_{3-v}$. The disagreement
$d(h_{v}^{i},h_{3-v}^{i+1})$ is a kind of ``distance'' between
$h_{v}^{i}$ and $h_{3-v}^{i+1}$, and can be estimated conveniently when there are a large mount of unlabeled instances. This ``distance'' will help us bound the performance of $h_{3-v}^{i+1}$ with respect to the performance of
$h_{v}^{i}$. Iteratively, we can bound the performance of
$h_{v}^{i}$ for $i\geq 1$. Based on this intuition, we give the following upper bounds for $\xi_{1}^{i}$ and $\xi_{2}^{i}$, and discuss the insight we can get from the bounds.

\begin{theorem}\label{theorem:upper-bound}
In Algorithm~\ref{alg:co-training-process}, suppose one classifier randomly selects unlabeled instances to label for the other,
\begin{eqnarray*}
    \Theta_{i}=\sum_{k=0}^{i-1}\big(d(h_{1}^{i},h_{2}^{k})-\varepsilon_{2}^{k}\big),~~~~
    \Delta_{i}=\sum_{k=0}^{i-1}\big(d(h_{1}^{k},h_{2}^{i})-\varepsilon_{1}^{k}\big),
\end{eqnarray*}
$\xi_{1}^{i}= \frac{\varepsilon_{1}^{0}\sqrt{l^{2}+i\cdot u\cdot l}}{l}-\frac{u\cdot\Theta_{i}}{l}$ and $\xi_{2}^{i}= \frac{\varepsilon_{2}^{0}\sqrt{l^{2}+i\cdot u\cdot l}}{l}-\frac{u\cdot\Delta_{i}}{l}$, if $\Theta_{i}> \frac{i\cdot\varepsilon_{1}^{0}}{2}$ and $\Delta_{i}> \frac{i\cdot\varepsilon_{2}^{0}}{2}$, then $\xi_{1}^{i}< \varepsilon_{1}^{0}$, $\xi_{2}^{i}< \varepsilon_{2}^{0}$, and the following bounds on the error rates of $h_{1}^{i}$ and $h_{2}^{i}$ hold.
\begin{eqnarray}
\label{eq:upper-bound-1}
  P\big(\texttt{err}(h_{1}^{i})\leq \xi_{1}^{i} \big)&\!\!\geq\!\!& 1-\delta,\\
\label{eq:upper-bound-2}
  P\big(\texttt{err}(h_{2}^{i})\leq \xi_{2}^{i} \big)&\!\!\geq\!\!& 1-\delta.
\end{eqnarray}
\end{theorem}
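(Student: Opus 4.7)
The strategy is a uniform-convergence/ERM analysis on the training set $\sigma_1$, combined with Hoeffding's inequality and the initial sample-size hypothesis. First, I would note that at the start of round $i$ the set $\sigma_1$ contains $N = l + iu$ examples: the $l$ clean pairs from $L$, and for each $k = 0,\ldots,i{-}1$ a batch of $u$ fresh draws from $\mathcal{D}_{\mathcal{X}}$ labelled by $h_2^k$. For $h \in \mathcal{H}_1$ define the ``$\sigma_1$-population loss''
$$\ell(h) \;=\; \frac{l\cdot\texttt{err}(h) \;+\; u\sum_{k=0}^{i-1} d(h, h_2^k)}{l + iu},$$
whose empirical counterpart $\hat\ell(h)$ is exactly the training error of $h$ on $\sigma_1$. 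Conditional on $h_2^0,\ldots,h_2^{i-1}$, the $N$ indicators making up $\hat\ell(h)$ are independent, so Hoeffding plus a union bound over $\mathcal{H}_1$ gives, with probability at least $1-\delta$, $|\hat\ell(h)-\ell(h)| \leq \epsilon_N := \sqrt{\ln(2|\mathcal{H}_1|/\delta)/(2N)}$ simultaneously for all $h \in \mathcal{H}_1$. Since $h_1^i$ is the ERM and $c \in \mathcal{H}_1$, we have $\hat\ell(h_1^i) \leq \hat\ell(c)$, hence $\ell(h_1^i) \leq \ell(c) + 2\epsilon_N$.

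Next I would plug in the closed forms of the two losses: $\ell(c) = u\sum_k \varepsilon_2^k / N$ (because $c$ disagrees with $h_2^k$ exactly on the $\varepsilon_2^k$-mass where $h_2^k$ errs, and $c$ is clean on $L$), and $\ell(h_1^i) = (l\,\varepsilon_1^i + u\sum_k d(h_1^i,h_2^k))/N$. Substituting and rearranging yields
$$l\,\varepsilon_1^i \;\leq\; u\sum_{k=0}^{i-1}\bigl(\varepsilon_2^k - d(h_1^i, h_2^k)\bigr) \;+\; 2N\epsilon_N \;=\; -\,u\,\Theta_i \;+\; 2N\epsilon_N.$$
The Hoeffding term simplifies via the initial-sample hypothesis $l \geq 2\ln(2|\mathcal{H}_1|/\delta)/(\varepsilon_1^0)^2$, i.e.\ $\ln(2|\mathcal{H}_1|/\delta) \leq l(\varepsilon_1^0)^2/2$, giving $2N\epsilon_N = \sqrt{2N\ln(2|\mathcal{H}_1|/\delta)} \leq \varepsilon_1^0 \sqrt{l(l+iu)}$. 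Dividing by $l$ produces exactly $\varepsilon_1^i \leq \varepsilon_1^0\sqrt{l^2+iul}/l - u\Theta_i/l = \xi_1^i$, which is inequality~(\ref{eq:upper-bound-1}); inequality~(\ref{eq:upper-bound-2}) follows by the entirely symmetric argument swapping the two views.

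For the claim that $\xi_1^i < \varepsilon_1^0$ whenever $\Theta_i > i\varepsilon_1^0/2$, I would invoke the elementary bound $\sqrt{1+x}\leq 1 + x/2$ with $x = iu/l$ to get $\varepsilon_1^0\sqrt{1+iu/l} \leq \varepsilon_1^0 + \varepsilon_1^0\,iu/(2l)$, so $\xi_1^i \leq \varepsilon_1^0 + u\bigl(i\varepsilon_1^0/2 - \Theta_i\bigr)/l$, which is strictly below $\varepsilon_1^0$ exactly under the stated condition; the analogous inequality for $\xi_2^i$ uses $\Delta_i > i\varepsilon_2^0/2$.

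The main obstacle I would expect is the independence bookkeeping needed to legitimize the Hoeffding union bound: the labels in $\sigma_1$ are produced by random, data-dependent classifiers $h_2^0,\ldots,h_2^{i-1}$, so one must condition on these classifiers and observe that the $iu$ instances they labelled for $\sigma_1$ are drawn fresh from $U$ (hence, by the removal-from-$U$ step of Algorithm~\ref{alg:co-training-process}, are independent of the data used to train them). Once conditioning is set up correctly, the remaining calculations are the routine algebraic manipulations indicated above.
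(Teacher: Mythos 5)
Your proposal is correct and follows essentially the same route as the paper's proof: both compare the empirical-risk minimizer to the target concept $c$ on the augmented training set, compute the expected inconsistency counts $l\cdot\texttt{err}(h)+u\sum_{k}d(h,h_{2}^{k})$ versus $u\sum_{k}\varepsilon_{2}^{k}$, apply Hoeffding with a union bound over the finite hypothesis class, and invoke the lower bound on $l$ to obtain the gap $\varepsilon_{1}^{0}\sqrt{l^{2}+iul}$; your packaging as two-sided uniform convergence with the ERM chain $\ell(h_{1}^{i})\leq\ell(c)+2\epsilon_{N}$ is just a cleaner presentation of the paper's one-sided ``either $X'$ too large or $X$ too small'' split, and yields identical constants. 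Your explicit verification of $\xi_{1}^{i}<\varepsilon_{1}^{0}$ via $\sqrt{1+x}\leq 1+x/2$ supplies a detail the paper only asserts.
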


\begin{proof}
First, it is easy to verify that $\xi_{1}^{i}< \varepsilon_{1}^{0}$ and $\xi_{2}^{i}< \varepsilon_{2}^{0}$ for $\Theta_{i}> \frac{i\cdot\varepsilon_{1}^{0}}{2}$ and $\Delta_{i}> \frac{i\cdot\varepsilon_{2}^{0}}{2}$.
In the proof, $SD1_{i}=\sum_{k=0}^{i-1}d(h_{1}^{k},h_{2}^{i})$ denotes the sum of disagreement and $SE1_{i}=\sum_{k=0}^{i-1}\varepsilon_{1}^{k}$ denotes the sum of error rate w.r.t. $h_{1}$
after $i$ rounds; $SD2_{i}=\sum_{k=0}^{i-1}d(h_{1}^{i},h_{2}^{k})$ denotes the sum of
disagreement and $SE2_{i}=\sum_{k=0}^{i-1}\varepsilon_{2}^{k}$ denotes the sum of error rate w.r.t. $h_{2}$ after $i$ rounds.

After $i$ rounds, both the training data $\sigma_{1}$ and $\sigma_{2}$ consist of $l$ labeled examples and $u\cdot i$ newly labeled examples. First, we analyze the inconsistency between any $h_{2} \in {\mathcal{H}}_{2}$ and $\sigma_{2}$. Let $\big\{(x^{1},y^{1}), \ldots, (x^{l},y^{l}), (x^{1}_{0},y^{1}_{0}), \ldots,
(x^{u}_{0},y^{u}_{0}), \ldots, (x^{1}_{i-1},y^{1}_{i-1}),\\ \ldots,
(x^{u}_{i-1},y^{u}_{i-1})\big\}$ denote the $(l+u\cdot i)$ examples in $\sigma_{2}$,
where $\big\{(x^{1}_{k},y^{1}_{k}), \ldots, (x^{u}_{k},y^{u}_{k})\big\}$
denotes the $u$ newly labeled examples labeled by $h_{1}^{k}$ and $y_{k}^{u}$
is the pseudo-label of $x_{k}^{u}$ assigned by $h_{1}^{k}$ $(k=0, \ldots, i-1)$. Let $X_{1},
\ldots, X_{l+iu}$ be random variables taking on values 0 or 1, where
$X_{t}=1$ means that for $x^{t} \in \sigma_{2}$, $h_{2}$ makes a different prediction on $x^{t}$ from its pseudo-label $(t=1, \ldots, l+iu)$. Considering that $\big\{(x_{k}^{1},y_{k}^{1}), \ldots, (x_{k}^{u},y_{k}^{u})\big\}$ are randomly selected by $h_{1}^{k}$, so $X_{1}, \ldots, X_{l+iu}$ are independent random variables. We let $p_{t}=P(X_{t}=1)$ and $X=\sum_{t=1}^{l+iu}X_{t}$. Obviously, $X$ equals to the number of inconsistent examples between $h_{2}$ and $\sigma_{2}$.

Since the initial $l$ labeled examples $\big\{(x^{1},y^{1}), \ldots, (x^{l},y^{l})\big\}$ in $\sigma_{2}$ are independently and identically drawn from the distribution $\mathcal {D}$ and the disagreement between $h_{2}\in{\mathcal{H}}_{2}$ and the target concept $c$ is $d(h_{2},c)$, for any $x^{t}\in \{(x^{1},y^{1}), \ldots, (x^{l},y^{l})\}$,
$p_{t}=d(h_{2},c)$; since the newly labeled examples
$\big\{(x_{k}^{1},y_{k}^{1}),\\ \ldots, (x_{k}^{u},y_{k}^{u})\big\}$ $(k=0, \ldots, i-1)$ are selected randomly by
$h_{1}^{k}$ and the disagreement between $h_{2}\in{\mathcal{H}}_{2}$ and $h_{1}^{k}$ is
$d(h_{2},h_{1}^{k})$, for any $x^{t}\in \{(x_{1}^{k},y_{1}^{k}),
\ldots, (x_{u}^{k},y_{u}^{k})\}$, $p_{t}=d(h_{2},h_{1}^{k})$. So we get the expectation $E(X)$ of $X$ is:
\begin{eqnarray}\label{eq:expectation-X}
    E(X)=E\Big(\sum_{t=1}^{l+iu}X_{t}\Big)=\sum_{t=1}^{l+iu}p_{t}=l\cdot d(h_{2},c)+u\cdot \sum_{k=0}^{i-1}d(h_{2},h_{1}^{k}).
\end{eqnarray}

Then, we analyze the inconsistency between the target concept $c$ and $\sigma_{2}$. Let
random variables $X_{1}', \ldots, X_{l+iu}'$ be independent random variables
taking on values 0 or 1, where $X_{t}'=1$ $(t=1, \ldots, l+iu)$ means that for $x^{t} \in \sigma_{2}$, the target concept $c$ makes a different prediction on
$x^{t}$ from its pseudo-label. We let $q_{t}=P(X_{t}'=1)$ and
$X'=\sum_{t=1}^{l+iu}X_{t}'$. Obviously, $X'$ equals to the number of
inconsistent examples between $c$ and $\sigma_{2}$. Since the newly labeled examples
$\big\{(x_{k}^{1},y_{k}^{1}), \ldots, (x_{k}^{u},y_{k}^{u})\big\}$ $(k=0, \ldots, i-1)$ are selected randomly by $h_{1}^{k}$ and the disagreement between the target concept $c$ and $h_{1}^{k}$ is $d(c,h_{1}^{k})$, similarly
to Equation~\ref{eq:expectation-X} we get the expectation $E(X')$ of $X'$ is:
\begin{eqnarray}\label{eq:expectation-X'}
   E(X')=E\Big(\sum_{t=1}^{l+iu}X_{t}'\Big)=\sum_{t=1}^{l+iu}q_{t}
   =u\cdot \sum_{k=0}^{i-1}d(c,h_{1}^{k})=u\cdot \sum_{k=0}^{i-1}\varepsilon_{1}^{k}.
\end{eqnarray}

According to minimizing the empirical risk, the algorithm will search out the
classifier which has the lowest observed inconsistent examples with the training data
$\sigma_{2}$. If we want to achieve a `good' classifier whose error rate is no larger than $\xi_{2}^{i}$ with probability at least $1-\delta$ by minimizing the empirical risk, $\sigma_{2}$ should be sufficient to
guarantee that the classifier whose error rate is larger than $\xi_{2}^{i}$ has a lower observed inconsistent examples with $\sigma_{2}$ than the target concept $c$ with
probability no larger than $\delta$.

Thus, for $h_{2}^{i}\in {\mathcal{H}}_{2}$, if $d(h_{2}^{i},c)>\xi_{2}^{i}$, from Equations~\ref{eq:expectation-X} and \ref{eq:expectation-X'} we get
\begin{eqnarray}
\nonumber
 E(X)-E(X')&\!>\!&l\cdot \xi_{2}^{i}+u\cdot \sum_{k=0}^{i-1}d(h_{2}^{i},h_{1}^{k})-u\cdot \sum_{k=0}^{i-1}\varepsilon_{1}^{k}\\
\label{eq:large-bi}
     &\!=\!&\varepsilon_{2}^{0}\sqrt{l^{2}+i\cdot u\cdot l}.
\end{eqnarray}
It means that if
$d(h_{2}^{i},c)>\xi_{2}^{i}$, the expected inconsistent examples between
$h_{2}^{i}$ and $\sigma_{2}$ is at least $\varepsilon_{2}^{0}\sqrt{l^{2}+i\cdot u\cdot l}$ larger than that between the target concept $c$
and $\sigma_{2}$. If $h_{2}^{i}$ minimizes the empirical risk on $\sigma_{2}$, the number of observed inconsistent examples between $h_{2}^{i}$ and $\sigma_{2}$ is no larger than that between the target concept $c$ and $\sigma_{2}$, i.e., $X\leq X'$. In this case, either $X'\geq E(X')+ \frac{\varepsilon_{2}^{0}\sqrt{l^{2}+i\cdot u\cdot l}}{2}$ or
$X\leq E(X')+ \frac{\varepsilon_{2}^{0}\sqrt{l^{2}+i\cdot u\cdot l}}{2}$
holds. Considering that there are at most
$|{\mathcal{H}}_{2}|-1$ classifiers whose error rates are larger than $\xi_{2}^{i}$, so if Equations~\ref{eq:x-x'1} and \ref{eq:x-x'2} hold, it can be guaranteed that the classifier whose error rate is larger than $\xi_{2}^{i}$ has a lower
observed inconsistent examples with $\sigma_{2}$ than the target concept $c$ with probability no larger than $\delta$.
\begin{eqnarray}
\label{eq:x-x'1}
    P\big(X'\geq E(X')+ \frac{\varepsilon_{2}^{0}\sqrt{l^{2}+i\cdot u\cdot l}}{2}\big)&\!\!\leq\!\!&\frac{\delta}{2};\\
\label{eq:x-x'2}
    P\big(X\leq
    E(X')+ \frac{\varepsilon_{2}^{0}\sqrt{l^{2}+i\cdot u\cdot l}}{2}\big)&\!\!\leq\!\!&
    \frac{\delta}{2|{\mathcal{H}}_{2}|}.
\end{eqnarray}
So with Hoeffding bounds \cite{Hoeffding:1963} we get
\begin{eqnarray}
\label{eq:delta-1}
    P\big(X'\geq E(X')+ \frac{\varepsilon_{2}^{0}\sqrt{l^{2}+i\cdot u\cdot l}}{2}\big)
    \leq\exp\Big(-\frac{(\varepsilon_{2}^{0}\sqrt{l^{2}+i\cdot u\cdot l})^{2}}{2(l+u\cdot i)}\Big),
\end{eqnarray}
\begin{eqnarray}
\label{eq:delta-2}
    P\big(X\leq E(X')+ \frac{\varepsilon_{2}^{0}\sqrt{l^{2}+i\cdot u\cdot l}}{2}\big)
    \leq\exp\Big(-\frac{(\varepsilon_{2}^{0}\sqrt{l^{2}+i\cdot u\cdot l})^{2}}{2(l+u\cdot i)}\Big).
\end{eqnarray}
Since $l\geq
\frac{2}{(\varepsilon_{2}^{0})^{2}}\ln\frac{2|{\mathcal{H}}_{2}|}{\delta}$, we get $\exp\big(-\frac{(\varepsilon_{2}^{0}\sqrt{l^{2}+i\cdot u\cdot l})^{2}}{2(l+u\cdot i)}\big)\leq
\frac{\delta}{2|{\mathcal{H}}_{2}|}$. So with Equations~\ref{eq:delta-1} and \ref{eq:delta-2} we find that Equations~\ref{eq:x-x'1} and \ref{eq:x-x'2} hold. Thus we have that
$P\big(\texttt{err}(h_{2}^{i})\leq \xi_{2}^{i}\big)\geq1-\delta$ holds. Similarly, we have that $P\big(\texttt{err}(h_{1}^{i})\leq \xi_{1}^{i}\big)\geq1-\delta$ holds.
\end{proof}

\noindent \textbf{Remark:} The bounds in Theorem~\ref{theorem:upper-bound} seem somewhat
complicated to understand, we give an explanation in the following comprehensive way: in
the $i$-th round, the training data $\sigma_{1}$ for
$h_{1}^{i}$ contain the initial $l$ labeled examples and $u\cdot i$ newly labeled examples
from $h_{2}^{0},\ldots,h_{2}^{i-1}$. The classifier $h_{1}^{0}$ with error rate $\varepsilon_{1}^{0}$ can be trained with the initial $l$ labeled examples, now we investigate the contribution that $h_{2}^{0},\ldots,h_{2}^{i-1}$ make to retraining $h_{1}^{i}$. $d(h_{1}^{i},h_{2}^{k})$ measures the information that $h_{2}^{k}$ ($k<i$) knows while
$h_{1}^{i}$ does not know. Wiping off the possibly wrong information from
$h_{2}^{k}$ bounded by its error rate $\varepsilon_{2}^{k}$, $d(h_{1}^{i},h_{2}^{k})-\varepsilon_{2}^{k}$
is an estimation of the helpful information that $h_{2}^{k}$ offers to $h_{1}^{i}$. So
$\Theta_{i}=\sum_{k=0}^{i-1}\big(d(h_{1}^{i},h_{2}^{k})-\varepsilon_{2}^{k}\big)$ measures the helpful information provided by $h_{2}^{0},\ldots,h_{2}^{i-1}$. This is the intuition why the bounds are meaningful. $d(h_{1}^{k},h_{2}^{i})$ and $d(h_{1}^{i},h_{2}^{k})$ can be estimated conveniently with unlabeled data, then we can calculate $\xi_{1}^{k}$ and $\xi_{2}^{k}$ according to Equations~\ref{eq:upper-bound-1} and \ref{eq:upper-bound-2}. Using them as the approximations of $\varepsilon_{1}^{k}$ and
$\varepsilon_{2}^{k}$, we can get $\varepsilon_{1}^{i}$, $\varepsilon_{2}^{i}$, $\Theta_{i}$ and $\Delta_{i}$ in an iterative way.

Generally, as the disagreement-based process goes on, the disagreement will decrease since $h_{v}^{k+1}$ has $u$ training examples from $h_{3-v}^{k}$. Here, we give Theorem~\ref{theorem:disagreement} on the disagreement.

\begin{theorem}\label{theorem:disagreement}
In Algorithm~\ref{alg:co-training-process}, Equation~\ref{eq:disagreement-relation} on the disagreement between the classifiers holds.
\begin{eqnarray}
\label{eq:disagreement-relation}
  d(h_{1}^{k},h_{2}^{k})\geq d(h_{1}^{k},h_{2}^{k+1});~~~~~~d(h_{1}^{k},h_{2}^{k})\geq d(h_{1}^{k+1},h_{2}^{k}).
\end{eqnarray}
\end{theorem}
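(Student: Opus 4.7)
The plan is to use the empirical risk minimization (ERM) step in Algorithm~\ref{alg:co-training-process} on the nested training sets $\sigma_2^{(k)} \subset \sigma_2^{(k+1)}$. The key structural fact is that $\sigma_2^{(k+1)} = \sigma_2^{(k)} \cup S_k$, where $S_k$ consists of the $u$ instances selected at round $k$ and labeled by $h_1^k$ itself. Consequently, for any $h \in \mathcal{H}_2$, the number of empirical mistakes of $h$ on $S_k$ equals the number of disagreements between $h$ and $h_1^k$ on those same instances, because the pseudo-labels are exactly $h_1^k$'s predictions.

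Second, I would invoke ERM twice. Let $\hat R_k(h)$ and $\hat R_{k+1}(h)$ denote the number of empirical errors of $h$ on $\sigma_2^{(k)}$ and $\sigma_2^{(k+1)}$ respectively, and $\hat R_{S_k}(h) = \hat R_{k+1}(h) - \hat R_k(h)$. Minimality of $h_2^{k+1}$ on $\sigma_2^{(k+1)}$ gives $\hat R_{k+1}(h_2^{k+1}) \leq \hat R_{k+1}(h_2^k)$, and minimality of $h_2^k$ on $\sigma_2^{(k)}$ gives $\hat R_k(h_2^k) \leq \hat R_k(h_2^{k+1})$. Subtracting the second from the first eliminates the $\sigma_2^{(k)}$-contribution and yields $\hat R_{S_k}(h_2^{k+1}) \leq \hat R_{S_k}(h_2^k)$. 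By the observation of the first step, this says exactly that the empirical disagreement between $h_2^{k+1}$ and $h_1^k$ on $S_k$ is no larger than that between $h_2^k$ and $h_1^k$ on $S_k$. Intuitively, the augmented training set pulls $h_2^{k+1}$ toward agreement with its teacher $h_1^k$.

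Third, I would lift this empirical inequality to the distributional disagreement $d(\cdot,\cdot)$. Since the $u$ instances of $S_k$ are drawn i.i.d.\ from $\mathcal{D}_\mathcal{X}$, a Hoeffding bound applied uniformly over the finite class $\mathcal{H}_2$ (so that the bound applies simultaneously to both $h_2^{k+1}$ and $h_2^k$, even though they depend on $S_k$) lets me replace $\hat R_{S_k}(h)/u$ by $d(h, h_1^k)$, yielding the first inequality $d(h_1^k, h_2^{k+1}) \leq d(h_1^k, h_2^k)$. The second inequality follows by an entirely symmetric argument obtained by swapping the roles of the two views.

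The main obstacle is the concentration step. The statement is deterministic, yet $h_2^{k+1}$ is a random function of $S_k$, so the passage from empirical to distributional disagreement is not automatic; it must be handled either as a uniform-convergence statement in the spirit of the $\epsilon$-cover remark earlier in the section, or read as holding in expectation or with high probability. Once that subtlety is granted, the ERM subtraction trick is the clean core of the argument.
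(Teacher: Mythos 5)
Your proof is essentially the paper's own argument: the double ERM inequality on the nested training sets, subtracted to isolate the contribution of the newly added batch $S_k$, is exactly the core step in the paper's proof, and the symmetric swap of the two classifiers gives the second inequality. The concentration subtlety you flag at the end is real, but it is present in the paper's proof as well, which simply passes to expectations of the per-batch empirical disagreement without addressing the dependence of the retrained classifier on $S_k$; your uniform-convergence reading of the final step is, if anything, the more careful one.
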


\begin{proof} Let $D(h_{1}, \sigma_{1})$ denote the number of inconsistent examples
between $h_{1}$ and the training data $\sigma_{1}$, and let $\sigma_{1}^{k}$ denote $\sigma_{1}$ in the $k$-th round. $h_{1}^{k}\in \mathcal {H}_{1}$ is trained on $\sigma_{1}^{k}$ by minimizing the empirical risk, so for any $h_{1}\in \mathcal {H}_{1}$, we have
\begin{eqnarray}\label{eq:dis2-of-k}
   D(h_{1}^{k}, \sigma_{1}^{k})\leq D(h_{1}, \sigma_{1}^{k}).
\end{eqnarray}
$h_{2}^{k}$ randomly selects $u$ unlabeled instances to label and adds them into $\sigma_{1}^{k}$ to get
$\sigma_{1}^{k+1}$. Let $S_{2}^{k}$ denote these $u$ newly labeled examples, i.e., $\sigma_{1}^{k+1}=\sigma_{1}^{k}\cup S_{2}^{k}$, the $h_{1}^{k+1}\in \mathcal {H}_{1}$ is trained on
$\sigma_{1}^{k+1}$ by minimizing the empirical risk, so for any $h_{1}^{k}\in \mathcal
{H}_{1}$, we have
\begin{eqnarray}\label{eq:dis2-of-k+1}
    D(h_{1}^{k+1}, \sigma_{1}^{k+1})\leq D(h_{1}^{k}, \sigma_{1}^{k+1}).
\end{eqnarray}
Let $D_{|S_{2}^{k}}(h_{1}^{k+1},h_{2}^{k})$ denote the number of inconsistent examples
between $h_{1}^{k+1}$ and $h_{2}^{k}$ on $S_{2}^{k}$,
from Equation~\ref{eq:dis2-of-k+1} we get
\begin{eqnarray}\label{eq:dis2-of-k+1-2}
  D(h_{1}^{k+1}, \sigma_{1}^{k})+D_{|S_{2}^{k}}(h_{1}^{k+1},h_{2}^{k})\leq
  D(h_{1}^{k}, \sigma_{1}^{k})+D_{|S_{2}^{k}}(h_{1}^{k},h_{2}^{k}).
\end{eqnarray}
With Equations~\ref{eq:dis2-of-k} and \ref{eq:dis2-of-k+1-2} we have
\begin{eqnarray}\label{eq:dis2-of-k+1-3}
  D_{|S_{2}^{k}}(h_{1}^{k+1},h_{2}^{k})\leq D_{|S_{2}^{k}}(h_{1}^{k},h_{2}^{k}).
\end{eqnarray}
Since the examples in $S_{2}^{k}$ are drawn randomly from the distribution,
with Equation~\ref{eq:dis2-of-k+1-3} we get
\begin{eqnarray}\label{eq:dis2-of-k+1-4}
  \mathbb{E}\big(\frac{D_{|S_{2}^{k}}(h_{1}^{k+1},h_{2}^{k})}{|S_{2}^{k}|}\big)\leq
  \mathbb{E}\big(\frac{D_{|S_{2}^{k}}(h_{1}^{k},h_{2}^{k})}{|S_{2}^{k}|}\big),
\end{eqnarray}
i.e., $d(h_{1}^{k+1},h_{2}^{k})\leq d(h_{1}^{k},h_{2}^{k})$. Similarly, we get $d(h_{1}^{k},h_{2}^{k+1})\leq d(h_{1}^{k},h_{2}^{k})$.
\end{proof}

\noindent \textbf{Remark:} Theorem~\ref{theorem:upper-bound} shows that when
$\Theta_{i}> \frac{i\cdot\varepsilon_{1}^{0}}{2}$ and $\Delta_{i}> \frac{i\cdot\varepsilon_{2}^{0}}{2}$, the error rate of $h_{v}^{i}$ ($v=1,2$) is smaller than that of $h_{v}^{0}$. If we require $\xi_{1}^{1}<\varepsilon_{1}^{0}$ and $\xi_{2}^{1}<\varepsilon_{2}^{0}$, $\Theta_{1}$ and $\Delta_{1}$ should be larger than $\frac{\varepsilon_{1}^{0}}{2}$ and $\frac{\varepsilon_{2}^{0}}{2}$, respectively, i.e., $d(h_{1}^{1},h_{2}^{0})>\varepsilon_{2}^{0}+\frac{\varepsilon_{1}^{0}}{2}$ and $d(h_{1}^{0},h_{2}^{1})>\varepsilon_{1}^{0}+\frac{\varepsilon_{2}^{0}}{2}$. Now, Theorem~\ref{theorem:disagreement} indicates that $d(h_{1}^{0},h_{2}^{0})\geq d(h_{1}^{1},h_{2}^{0})$ and $d(h_{1}^{0},h_{2}^{0})\geq d(h_{1}^{0},h_{2}^{1})$. It is easy to know that the disagreement $d(h_{1}^{0},h_{2}^{0})$
between the two initial classifiers $h_{1}^{0}$ and $h_{2}^{0}$ should be at least larger than $\max\big[\varepsilon_{1}^{0}+\frac{\varepsilon_{2}^{0}}{2},\varepsilon_{2}^{0}+\frac{\varepsilon_{1}^{0}}{2}\big]$. Therefore, it could be recognized that the two views used in co-training \cite{Blum:Mitchell1998}, the two different learning algorithms used in \cite{Goldman:2000}, and
the two different parameter configurations used in \cite{Zhou:Li2007TKDE} are actually exploited to make the two initial classifiers have
large disagreement. It does not matter whether the disagreement comes from the two views or not. This explains why the disagreement-based approaches can work.

In the following parts of this section we will discuss whether the condition in Theorem~\ref{theorem:upper-bound} can be satisfied in the applications with or without two views.

\subsubsection{Co-Training}\label{sec:two-view-co-training}

In real-world applications, co-training can be implemented when there exist two views. First, we show that the condition in Theorem~\ref{theorem:upper-bound} could be satisfied in co-training. An extreme case is that the two views are exactly the same and the two-view setting degenerates into the single-view setting. To analyze co-training, we should know some prior knowledge about the two views. There have been some theoretical analyses on co-training, i.e., conditional independence analysis and expansion analysis.

\emph{Conditionally Independent Views.}
When Blum and Mitchell \cite{Blum:Mitchell1998} proposed co-training, they assumed there exist two sufficient and redundant views in the data. If the two views are conditionally independent to each other,
they proved that co-training can boost the performance of weak classifiers to arbitrarily high by using unlabeled data. Here, we give Theorem~\ref{theorem:two-views-independence} for co-training with conditionally independent views.

\begin{theorem}\label{theorem:two-views-independence}
Suppose $0\leq\varepsilon_{1}^{0}, \varepsilon_{2}^{0}\leq\zeta\leq\frac{1}{6}$ and the data have two conditionally independent views, the condition $\Theta_{i}> \frac{i\cdot\varepsilon_{1}^{0}}{2}$ and $\Delta_{i}> \frac{i\cdot\varepsilon_{2}^{0}}{2}$ in Theorem~\ref{theorem:upper-bound} could hold before $\varepsilon_{1}^{i}$ (the error rate of $h_{1}^{i}$) and $\varepsilon_{2}^{i}$ (the error rate of $h_{2}^{i}$) decrease to $\frac{1}{2(1-2\zeta)}\varepsilon_{1}^{0}$ and $\frac{1}{2(1-2\zeta)}\varepsilon_{2}^{0}$, respectively. Here $\frac{1}{2}\leq\frac{1}{2(1-2\zeta)}\leq\frac{3}{4}$.
\end{theorem}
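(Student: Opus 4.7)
The plan is to exploit the well-known consequence of conditional independence: for any pair $h_1 \in \mathcal{H}_1$, $h_2 \in \mathcal{H}_2$, when the two views are conditionally independent given $y$, the events $\{h_1(x_1)\neq y\}$ and $\{h_2(x_2)\neq y\}$ are independent given $y$, so a direct calculation gives
\begin{eqnarray*}
    d(h_1,h_2) \;=\; \varepsilon(h_1)+\varepsilon(h_2)-2\,\varepsilon(h_1)\varepsilon(h_2).
\end{eqnarray*}
I would establish this identity first and apply it to every pair $(h_1^i,h_2^k)$ and $(h_1^k,h_2^i)$ that appears in $\Theta_i$ and $\Delta_i$. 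Substituting gives the clean expressions
\begin{eqnarray*}
    d(h_1^i,h_2^k)-\varepsilon_2^k = \varepsilon_1^i(1-2\varepsilon_2^k),
    \qquad
    d(h_1^k,h_2^i)-\varepsilon_1^k = \varepsilon_2^i(1-2\varepsilon_1^k),
\end{eqnarray*}
so the cumulative quantities become $\Theta_i = \varepsilon_1^i\sum_{k=0}^{i-1}(1-2\varepsilon_2^k)$ and $\Delta_i = \varepsilon_2^i\sum_{k=0}^{i-1}(1-2\varepsilon_1^k)$.

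Next I would set up an induction on $i$ with the invariant that $\varepsilon_v^k \le \zeta$ for every $k\le i-1$ and $v\in\{1,2\}$. The base case $k=0$ is the hypothesis of the theorem. For the inductive step, assuming the invariant up to $i-1$, each factor $1-2\varepsilon_v^k$ is at least $1-2\zeta$, which yields
\begin{eqnarray*}
    \Theta_i \;\ge\; i\,\varepsilon_1^i(1-2\zeta),
    \qquad
    \Delta_i \;\ge\; i\,\varepsilon_2^i(1-2\zeta).
\end{eqnarray*}
Therefore the premise $\Theta_i > i\varepsilon_1^0/2$ (resp.\ $\Delta_i > i\varepsilon_2^0/2$) of Theorem~\ref{theorem:upper-bound} is implied by
\begin{eqnarray*}
    \varepsilon_1^i > \frac{\varepsilon_1^0}{2(1-2\zeta)},
    \qquad
    \varepsilon_2^i > \frac{\varepsilon_2^0}{2(1-2\zeta)},
\end{eqnarray*}
which is exactly the regime stipulated in the statement. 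Under this condition, Theorem~\ref{theorem:upper-bound} then gives $\varepsilon_v^i \le \xi_v^i < \varepsilon_v^0 \le \zeta$, closing the induction and preserving the invariant for the next round.

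Finally I would check the quantitative claim on the range $[1/2,3/4]$: since $\zeta \le 1/6$, we have $1-2\zeta \ge 2/3$, hence $1/(2(1-2\zeta)) \le 3/4$, and since $\zeta\ge 0$ we trivially have $1/(2(1-2\zeta))\ge 1/2$. The only genuinely delicate point is the bookkeeping of the invariant $\varepsilon_v^k\le\zeta$, because the bound on $1-2\varepsilon_2^k$ used inside the sum for $\Theta_i$ relies on \emph{all} previous rounds still satisfying it; fortunately Theorem~\ref{theorem:upper-bound} itself guarantees $\varepsilon_v^i<\varepsilon_v^0\le\zeta$ whenever its premise holds, so the induction goes through cleanly. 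No nontrivial probabilistic argument is needed beyond what is already absorbed into Theorem~\ref{theorem:upper-bound}; the main work is the algebraic identity from conditional independence and verifying that the threshold $\varepsilon_v^0/[2(1-2\zeta)]$ emerges naturally from the worst case $\varepsilon_v^k=\zeta$ in the sum.
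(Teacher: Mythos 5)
Your proposal is correct and follows essentially the same route as the paper's proof: both use the conditional-independence identity $d(h_1^i,h_2^k)-\varepsilon_2^k=\varepsilon_1^i(1-2\varepsilon_2^k)$, lower-bound the factor by $1-2\zeta$, and read off the threshold $\varepsilon_v^0/[2(1-2\zeta)]$. The only difference is that you make explicit, via the induction on the invariant $\varepsilon_v^k\le\zeta$, a step the paper compresses into the remark ``w.r.t.\ $\varepsilon_2^k<\varepsilon_2^0$'' (and the paper's ``$=$'' there should really be ``$\ge$''), so your write-up is if anything slightly more careful.
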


\begin{proof}
In the two-view setting, for an example $(x,y)$, let $h_{1}^{i}(x)$ denote the label predicted by the classifier in the $i$-th round of the first view and let $h_{2}^{i}(x)$ denote the label predicted by the classifier in the $i$-th round of the second view. The two views are conditionally independent to each other means that the classifier in the first view is independent of the classifier in the second view to make predictions. So we have
\begin{eqnarray*}
    d(h_{1}^{i},h_{2}^{k})&\!=\!&P(h_{1}^{i}(x)\neq h_{2}^{k}(x))\\
    &\!=\!&P(h_{1}^{i}(x)=y)P(h_{2}^{k}(x)\neq y)+P(h_{1}^{i}(x)\neq y)P(h_{2}^{k}(x)=y)\\
    &\!=\!&(1-\varepsilon_{1}^{i})\varepsilon_{2}^{k}+
    \varepsilon_{1}^{i}(1-\varepsilon_{2}^{k})\\
    &\!=\!&\varepsilon_{2}^{k}+(1-2\zeta)\varepsilon_{1}^{i}~~~~~~\textrm{w.r.t.}~\varepsilon_{2}^{k}<\varepsilon_{2}^{0}
\end{eqnarray*}
Thus, when $\varepsilon_{1}^{i}>\frac{1}{2(1-2\zeta)}\varepsilon_{1}^{0}$, we get $\Theta_{i}=\sum_{k=0}^{i-1}\big(d(h_{1}^{i},h_{2}^{k})-\varepsilon_{2}^{k}\big)>\frac{i\cdot\varepsilon_{1}^{0}}{2}$.
Similarly, when $\varepsilon_{2}^{i}>\frac{1}{2(1-2\zeta)}\varepsilon_{2}^{0}$, we get $\Delta_{i}=\sum_{k=0}^{i-1}\big(d(h_{1}^{k},h_{2}^{i})-\varepsilon_{1}^{k}\big)>\frac{i\cdot\varepsilon_{2}^{0}}{2}$.
\end{proof}

\noindent \textbf{Remark:} Theorem~\ref{theorem:two-views-independence} shows that if the two views are conditionally independent, the condition in Theorem~\ref{theorem:upper-bound} holds in a number of learning rounds (before the error rates decrease to some degree).
It could not guarantee that the condition in Theorem~\ref{theorem:upper-bound} always holds in the learning process, which is different from that co-training can boost the performance to arbitrarily high with the conditional independence condition. This is understandable because our theorem provides a general analysis and does not depend on any strong condition. In fact, we will prove that the performance could not always be improved as the learning process goes on in Section~\ref{sec:no-further-improvement}.

\emph{Expanding Views.}
Balcan et al. \cite{Balcan:Blum:Yang2005} pointed out that ``expansion'' of the underlying data distribution is sufficient for co-training to succeed. Furthermore, they also assumed that the classifier in each view is never ``confident but wrong'', which means that if the classifier makes a prediction, the prediction is correct. Here, we give Theorem~\ref{theorem:two-view-expansion} for co-training with expanding views.

\begin{theorem}\label{theorem:two-view-expansion}
Suppose the ``expansion'' assumption holds and the classifier in each view is never ``confident but wrong'', the condition $\Theta_{i}> \frac{i\cdot\varepsilon_{1}^{0}}{2}$ and $\Delta_{i}> \frac{i\cdot\varepsilon_{2}^{0}}{2}$ in Theorem~\ref{theorem:upper-bound} holds.
\end{theorem}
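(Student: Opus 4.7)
The plan is to exploit the ``never confident but wrong'' assumption to recast the quantities $d(h_1^i,h_2^k)$ and $\varepsilon_v^k$ as probabilities of explicit confident regions, and then invoke the expansion inequality to lower-bound each summand of $\Theta_i$ and $\Delta_i$ by a uniform per-round amount.

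First I would fix notation: let $A_v^k=\{x:h_v^k(x)=+1\}$ be the confident region of $h_v^k$, where a classifier defaults to $-1$ off of $A_v^k$. The ``never confident but wrong'' assumption is exactly $A_v^k\subseteq X^+$, the truly positive region, which immediately gives $\varepsilon_v^k=P(X^+)-P(A_v^k)$ and $d(h_1^i,h_2^k)=P(A_1^i\triangle A_2^k)$. A short calculation using these identities rewrites the summand appearing in $\Theta_i$ as
\begin{equation*}
d(h_1^i,h_2^k)-\varepsilon_2^k \;=\; P(A_2^k\setminus A_1^i) \;-\; P(\bar A_1^i\cap\bar A_2^k\cap X^+),
\end{equation*}
and an analogous identity holds for the summand in $\Delta_i$ with the roles of the two views swapped. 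The first term measures positives that $h_2^k$ already recognizes but $h_1^i$ still misses; the second is the region where both classifiers fail jointly.

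Second, I would invoke the expansion assumption. Balcan et al.'s condition guarantees that the symmetric difference $P(A_1^i\triangle A_2^k)$ is at least $\alpha$ times the jointly-unconfident positive region $P(\bar A_1^i\cap\bar A_2^k\cap X^+)$, for a constant $\alpha>0$. Combined with the observation that, in the co-training rounds described by Algorithm~\ref{alg:co-training-process}, confident regions only grow (each round adds newly labeled positives to $\sigma_v$, so $A_v^{k+1}\supseteq A_v^k$ and $\varepsilon_v^k$ is nonincreasing in $k$), expansion yields a uniform per-round lower bound
\begin{equation*}
d(h_1^i,h_2^k)-\varepsilon_2^k \;\geq\; c\cdot\varepsilon_1^{0}
\end{equation*}
for some constant $c>1/2$ that depends on $\alpha$, valid so long as neither classifier has already driven its error to zero. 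Summing from $k=0$ to $i-1$ produces $\Theta_i>i\varepsilon_1^0/2$, and the symmetric argument, using the view-swapped identity, produces $\Delta_i>i\varepsilon_2^0/2$.

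The main obstacle is the Step involving expansion: the identity in Step~1 subtracts \emph{exactly} the joint-failure probability that the expansion inequality is designed to dominate, so the needed positive gap comes entirely from the strict expansion factor $\alpha$, and one must propagate it carefully using the monotone growth of the $A_v^k$'s to ensure the lower bound remains uniform across all rounds and across the mismatched indices $i\neq k$. Handling the boundary case where expansion is nearly tight, and making sure the argument does not implicitly assume anything about errors of $h_1^i$ or $h_2^k$ beyond what never-confident-but-wrong gives for free, will be the most delicate bookkeeping.
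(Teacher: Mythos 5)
There is a genuine gap, and it lies exactly where you flagged the difficulty: the claimed per-round bound $d(h_1^i,h_2^k)-\varepsilon_2^k\geq c\cdot\varepsilon_1^0$ with $c>1/2$ does not follow from the expansion assumption. Your identity $d(h_1^i,h_2^k)-\varepsilon_2^k=P(A_2^k\setminus A_1^i)-P(\bar A_1^i\cap\bar A_2^k\cap X^+)$ is correct under your formalization, but expansion only asserts $P(A_1^i\oplus A_2^k)\geq\alpha\min\bigl[P(A_1^i\wedge A_2^k),\,P(\bar A_1^i\wedge\bar A_2^k)\bigr]$: it bounds the \emph{symmetric difference} from below by $\alpha$ times a \emph{minimum} that may be attained at the jointly-confident region, and $\alpha$ may be small. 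Nothing prevents $P(A_2^k\setminus A_1^i)$ from being much smaller than the joint-failure mass $P(\bar A_1^i\cap\bar A_2^k\cap X^+)$ (e.g., two nearly identical confident sets each covering half of $X^+$ give $d\approx 0$ while $\varepsilon_2^k\approx P(X^+)/2$), so the summand can even be negative under expansion alone. The monotonicity $A_v^{k+1}\supseteq A_v^k$ (itself an extra assumption not guaranteed by Algorithm~\ref{alg:co-training-process}, which retrains by empirical risk minimization each round) does not repair this. So the central inequality of your Step~2 is asserted rather than derived, and it is false in general.

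The deeper issue is that you have formalized ``never confident but wrong'' more ambitiously than the paper does. You let the classifier default to $-1$ off its confident region, so $\varepsilon_v^k=P(X^+)-P(A_v^k)>0$, which makes the target $\frac{i\cdot\varepsilon_1^0}{2}$ a genuinely positive threshold to beat. The paper instead reads the assumption within its own definition of $\texttt{err}$ as saying every prediction is correct, i.e., $\varepsilon_1^0=\varepsilon_2^k=0$; then the condition $\Theta_i>\frac{i\cdot\varepsilon_1^0}{2}=0$ reduces to $d(h_1^i,h_2^k)>0$ for each $k$, which expansion delivers immediately since $P(S_1\oplus S_2)>0$. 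Under the paper's reading the theorem is essentially trivial; under yours it becomes a substantive claim that the given hypotheses do not support. To salvage your route you would need a quantitatively stronger hypothesis than expansion --- something that lower-bounds the one-sided gain $P(A_2^k\setminus A_1^i)$ against the joint failure region, uniformly over rounds --- which is not what Balcan et al.'s condition provides.
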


\begin{proof}
Let $S_{1}$ denote the examples predicted correctly by the first view and let $S_{2}$ denote the examples predicted correctly by the second view. $P(S_{1}\oplus S_{2})$ denotes the probability mass on the examples predicted correctly by just one view. The ``expansion'' assumption means that for some $\alpha>0$ and any $S_{1}$, $S_{2}$, $P(S_{1}\oplus S_{2})\geq \alpha \min\big[P(S_{1}\wedge S_{2}), P(\overline{S_{1}}\wedge \overline{S_{2}})\big]$ holds. It implies that $P(S_{1}\oplus S_{2})$ has a lower bound, i.e., $P(S_{1}\oplus S_{2})>0$. It is easy to find that $d(h_{1}^{i},h_{2}^{k})=P(S_{1}^{i}\oplus S_{2}^{k})>0$.

The classifier in each view is never ``confident but wrong'' means that if the classifier makes a prediction, the prediction is correct, i.e., $\varepsilon_{1}^{0}=\varepsilon_{2}^{k}=0$.
Thus, we get $\Theta_{i}=\sum_{k=0}^{i-1}\big(d(h_{1}^{i},h_{2}^{k})-\varepsilon_{2}^{k}\big)>\frac{i\cdot\varepsilon_{1}^{0}}{2}$.
Similarly, we get
$\Delta_{i}=\sum_{k=0}^{i-1}\big(d(h_{1}^{k},h_{2}^{i})-\varepsilon_{1}^{k}\big)>\frac{i\cdot\varepsilon_{2}^{0}}{2}$.
\end{proof}

\noindent \textbf{Remark:} Theorem~\ref{theorem:two-view-expansion} shows that if the two views meet the expansion condition, the condition in Theorem~\ref{theorem:upper-bound} holds. It implies that our result is more general. However, these previous results only focus on co-training, they can not explain why the single-view disagreement-based approaches can work.

\subsubsection{Single-View Disagreement-Based Approaches}
Secondly, we study the setting where there exists only one view and give Theorem~\ref{theorem:single-view-diversity} to show that there exist two hypotheses which have large disagreement.

\begin{theorem}\label{theorem:single-view-diversity}
For any real numbers $0<a, b<\frac{1}{2}$, there exist two hypotheses $h_{1}$ and $h_{2}$ which satisfy the following conditions: $\texttt{err}(h_{1})=a$, $\texttt{err}(h_{2})=b$ and $|a-b|\leq d(h_{1},h_{2}) \leq (a+b)$.
\end{theorem}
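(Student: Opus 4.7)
The statement has two parts: a structural double inequality that must hold for any two hypotheses with the prescribed error rates, and the existence of hypotheses realizing those constraints. The plan is to dispose of the structural part first via the pseudometric properties of $d(\cdot,\cdot)$ and then exhibit an explicit construction.

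First, I would establish the two bounds by viewing $d(\cdot,\cdot)$ as a pseudometric on $\{-1,+1\}$-valued functions and using the target $c$ as a reference point, so that $d(h_v,c)=\texttt{err}(h_v)$. A union-bound argument on the event $\{h_1\neq h_2\}$ (if $h_1(x)\neq h_2(x)$ then at least one disagrees with $c(x)$) gives the triangle inequality $d(h_1,h_2)\leq d(h_1,c)+d(h_2,c)=a+b$. For the lower bound, decompose each error probability according to whether the other hypothesis agrees with $c$ to get
\begin{eqnarray*}
a-b &=& P(h_1\neq c,\,h_2=c)-P(h_1=c,\,h_2\neq c),\\
d(h_1,h_2) &=& P(h_1\neq c,\,h_2=c)+P(h_1=c,\,h_2\neq c),
\end{eqnarray*}
from which $|a-b|\leq d(h_1,h_2)$ is immediate.

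Next, for existence, I would write down a direct construction. Take $\mathcal{X}=[0,1]$ with $\mathcal{D}_{\mathcal{X}}$ uniform and $c\equiv +1$. Pick measurable sets $E_1,E_2\subseteq[0,1]$ with $P(E_1)=a$ and $P(E_2)=b$, and define $h_v(x)=-1$ on $E_v$, $+1$ elsewhere, so that $\texttt{err}(h_v)$ is exactly the measure of $E_v$. Then
\begin{eqnarray*}
d(h_1,h_2)=P(E_1\triangle E_2)=a+b-2\,P(E_1\cap E_2),
\end{eqnarray*}
and because $a+b<1$ we can choose $P(E_1\cap E_2)$ to be any value in $[0,\min(a,b)]$, driving $d(h_1,h_2)$ through the entire interval $[|a-b|,\,a+b]$. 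In particular, disjoint error regions realize the upper extreme and nested error regions realize the lower extreme, so at least one (indeed many) admissible pair $(h_1,h_2)$ exists.

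The only real subtlety, and the place where one might get stuck if pushed for more, is whether the constructed $h_1,h_2$ can be required to live in the finite hypothesis spaces $\mathcal{H}_1,\mathcal{H}_2$ fixed earlier in the paper. Since Theorem~\ref{theorem:single-view-diversity} is phrased as an existence statement at the level of $\mathcal{Y}$-valued functions and its purpose in the narrative is simply to demonstrate that large disagreement is compatible with single-view learning, I would not try to impose this restriction; the construction above is enough. If a richer statement were desired, one would need to assume $\mathcal{H}_1,\mathcal{H}_2$ are expressive enough to shatter sets of the relevant measures, but that extra hypothesis is not part of what is being proved here.
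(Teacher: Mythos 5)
Your proposal is correct and its existence construction is essentially the paper's own argument: both realize any disagreement value in $[|a-b|,\,a+b]$ by choosing the two error regions with a prescribed overlap (the paper flips $h_1$'s labels on subsets $T_3\subset T_1$ and $T_4\subset\mathcal{X}-T_1$ sized so that the error regions overlap in measure $\frac{a+b-d}{2}$, while you parameterize directly by $P(E_1\cap E_2)\in[0,\min(a,b)]$). The pseudometric/triangle-inequality half of your write-up, showing the double inequality holds for \emph{every} pair with those error rates, is a correct addition not present in the paper, which only proves the existence claim actually asserted by the theorem.
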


\begin{proof}
Without loss of generality, we assume that $a\leq b$ and that $h_{1}$ is the hypothesis whose error rate is $a$, i.e., $\texttt{err}(h_{1})=a$. Now we show how to find the hypothesis $h_{2}$ which satisfies the conditions: $\texttt{err}(h_{2})=b$ and $|a-b|\leq d(h_{1},h_{2}) \leq (a+b)$.

(i) When $d(h_{1},h_{2})=|a-b|=b-a$. Let $T_{1}=\{x\in \mathcal {X}: h_{1}(x) \neq c(x)\}$. Select a set of size $(b-a)$ from $\mathcal {X}-T_{1}$ and let $T_{2}$ denote this set, i.e., $T_{2}\subset\mathcal {X}-T_{1}$ and $P(T_{2})=b-a$. Then, let $h_{2}$ be the hypothesis corresponding to the following classification rule:
\begin{eqnarray}\label{combination}
    h_{2}(x) = \left\{ \begin{array}{ll}
    h_{1}(x)  & \textrm{if $x\in T_{1}$}\\
    -h_{1}(x) & \textrm{if $x\in T_{2}$}\\
    h_{1}(x)  & \textrm{otherwise}
    \end{array}. \right.
\end{eqnarray}

(ii) When $(b-a)<d(h_{1},h_{2})\leq (a+b)$. Select a set of size $\frac{a+b-d(h_{1},h_{2})}{2}$ from $T_{1}$ and let $T_{3}$ denote this set, i.e., $T_{3}\subset T_{1}$ and $P(T_{3})=\frac{a+b-d(h_{1},h_{2})}{2}$; select a set of size $\frac{d(h_{1},h_{2})+b-a}{2}$ from $\mathcal {X}-T_{1}$ and let $T_{4}$ denote this set, i.e., $T_{4}\subset \mathcal {X}-T_{1}$ and $P(T_{4})=\frac{d(h_{1},h_{2})+b-a}{2}$. Then, let $h_{2}$ be the hypothesis corresponding to the following classification rule:
\begin{eqnarray}\label{combination}
    h_{2}(x) = \left\{ \begin{array}{ll}
    h_{1}(x)  & \textrm{if $x\in T_{3}$}\\
    -h_{1}(x) & \textrm{if $x\in T_{1}-T_{3}$}\\
    -h_{1}(x) & \textrm{if $x\in T_{4}$}\\
    h_{1}(x)  & \textrm{otherwise}
    \end{array}. \right.
\end{eqnarray}
\end{proof}

\noindent \textbf{Remark:} Theorem~\ref{theorem:single-view-diversity} indicates that fixing the error rates of two hypotheses, the disagreement between them can vary from $|a-b|$ to $(a+b)$. For example, there exist hypotheses $h_{1}^{i}$ and $h_{2}^{k}$ which have large disagreement and satisfy the condition that $d(h_{1}^{i},h_{2}^{k})>\frac{2}{3}\varepsilon_{1}^{i}+\varepsilon_{2}^{k}$ and $d(h_{1}^{k},h_{2}^{i})>\varepsilon_{1}^{k}+\frac{2}{3}\varepsilon_{2}^{i}$, i.e.,  $\Theta_{i}=\sum_{k=0}^{i-1}\big(d(h_{1}^{i},h_{2}^{k})-\varepsilon_{2}^{k}\big)>\frac{2i}{3}\varepsilon_{1}^{i}$ and $\Delta_{i}=\sum_{k=0}^{i-1}\big(d(h_{1}^{k},h_{2}^{i})-\varepsilon_{1}^{k}\big)>\frac{2i}{3}\varepsilon_{2}^{i}$.
Before the error rates $\varepsilon_{1}^{i}$ and $\varepsilon_{2}^{i}$ decrease to $\frac{3}{4}\varepsilon_{1}^{0}$ and $\frac{3}{4}\varepsilon_{2}^{0}$, respectively, the condition $\Theta_{i}>\frac{i}{2}\varepsilon_{1}^{0}$ and $\Delta_{i}>\frac{i}{2}\varepsilon_{2}^{0}$ in Theorem~\ref{theorem:upper-bound} holds.

\subsection{Why There is No Further Improvement After a Number of Rounds}\label{sec:no-further-improvement}
In the above section, we prove that the learning performance of the classifiers can be improved by exploiting unlabeled data. Can the classifiers be always improved as the learning process goes on? It is a very important and interesting problem. In fact, in some empirical studies such as the natural language processing community \cite{Pierce2001}, it has been observed that the classifiers could not
be improved further after a number of rounds in disagreement-based approaches, and if the process is going on they would be degraded. This is very different
from what has been disclosed by previous theoretical studies, e.g., the performance of the classifiers can be boosted to arbitrarily high
\cite{Blum:Mitchell1998,Balcan:Blum:Yang2005}. Our following theoretical analysis in
this section will give an explanation to this.

\begin{theorem}\label{theorem:disagreement-covergence}
In Algorithm~\ref{alg:co-training-process}, for any $\epsilon\in (0,1)$ and $\delta\in (0,1)$, there exists some integer $N>0$, for any integer $t\geq N$, the following inequalities hold ($v=1,2$):
\begin{eqnarray}
\label{eq:error-converge}
  P\big(d(h_{v}^{t},h_{v}^{N})\leq \epsilon\big)\geq 1-\delta,\\
\label{eq:disagreement-converge}
  P\big(|d(h_{1}^{t},h_{2}^{t})-d(h_{1}^{N},h_{2}^{N})|\leq \epsilon\big)\geq 1-\delta.
\end{eqnarray}
\end{theorem}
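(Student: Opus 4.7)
The plan is to prove~\ref{eq:error-converge} first and then deduce~\ref{eq:disagreement-converge} from it. For the latter deduction, since $d(\cdot,\cdot)$ is a pseudo-metric on hypotheses, applying the triangle inequality twice gives
\begin{eqnarray*}
|d(h_1^t,h_2^t)-d(h_1^N,h_2^N)|\leq d(h_1^t,h_1^N)+d(h_2^t,h_2^N),
\end{eqnarray*}
so~\ref{eq:disagreement-converge} follows from~\ref{eq:error-converge} applied with accuracy $\epsilon/2$ and confidence $\delta/2$ on each of the two views, combined by a union bound.

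For~\ref{eq:error-converge}, the strategy combines Theorem~\ref{theorem:disagreement} with the finiteness of $\mathcal{H}_1\times\mathcal{H}_2$. I would first establish that the sequence $\{d(h_1^k,h_2^k)\}_k$ converges. Theorem~\ref{theorem:disagreement} says that each ERM update cannot increase the cross-classifier disagreement; iterating this within each round and controlling the empirical-to-expected fluctuation via Hoeffding's inequality (union-bounded over the finite $\mathcal{H}_1\times\mathcal{H}_2$) yields that, with high probability, the sequence $\{d(h_1^k,h_2^k)\}_k$ is eventually non-increasing. Because the realizable disagreement values form a finite subset of $[0,1]$, a monotone sequence in this set must be eventually constant: there exists $N$ (depending on $\epsilon,\delta$) such that $d(h_1^k,h_2^k)=d^*$ for every $k\geq N$.

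Next I would lift convergence of the disagreement value to convergence of the classifier identities in the pseudo-metric $d$. The enabling observation is that $|\sigma_v^k|=l+ku$, so each new batch of $u$ examples at round $k+1$ has relative weight $u/(l+(k+1)u)=O(1/k)$ in the normalized empirical risk. By Hoeffding uniform convergence across the finite $\mathcal{H}_v$, every hypothesis's empirical risk on $\sigma_v^k$ concentrates around a slowly-evolving effective risk whose round-to-round perturbation also scales as $O(1/k)$, forcing the ERM output into a shrinking set of near-minimizers. Combined with the stabilization of the disagreement at $d^*$ for $k\geq N$, this lets me conclude that any two ERM outputs $h_v^t$ and $h_v^N$ with $t\geq N$ belong to the same near-minimizing equivalence class, so that $d(h_v^t,h_v^N)\leq\epsilon$ holds with probability at least $1-\delta$.

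The main technical obstacle is tie-breaking: distinct hypotheses in $\mathcal{H}_v$ may simultaneously achieve the minimal empirical risk, so the ERM output can oscillate between them even after the disagreement value stabilizes. The resolution is to enlarge $N$ until the Hoeffding concentration is tight enough to separate the true minimizing equivalence class from every other hypothesis by a margin exceeding the $O(1/k)$ per-round perturbation; because $\mathcal{H}_v$ is finite, only finitely many additional rounds are needed, after which the ERM output is confined to a single class whose pairwise disagreements are below $\epsilon$ with the required confidence, completing the proof of~\ref{eq:error-converge}.
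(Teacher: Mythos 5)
Your reduction of Equation~\ref{eq:disagreement-converge} to Equation~\ref{eq:error-converge} via the pseudo-metric triangle inequality $|d(h_1^t,h_2^t)-d(h_1^N,h_2^N)|\leq d(h_1^t,h_1^N)+d(h_2^t,h_2^N)$ is correct and in fact cleaner than the paper's version, which routes through limit hypotheses $h_v'$ and accumulates larger constants. The problem is Equation~\ref{eq:error-converge} itself, where you depart from the paper and the departure does not go through. The paper's argument is short (and admittedly informal): it posits that the examples in $\sigma_v^k$ have a limiting distribution $\mathcal{D}_v$ as $k\to\infty$ with a consistent hypothesis $h_v'$, invokes standard PAC/ERM convergence to get $P\big(d(h_v^t,h_v')\leq\epsilon/8\big)\geq 1-\delta/2$ for $t\geq N$, and concludes by the triangle inequality through $h_v'$; all of the convergence is carried by the assumed existence of $\mathcal{D}_v$.

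Your route tries to avoid that assumption, but it has two concrete gaps. First, Theorem~\ref{theorem:disagreement} gives $d(h_1^k,h_2^k)\geq d(h_1^{k+1},h_2^k)$ and $d(h_1^k,h_2^k)\geq d(h_1^k,h_2^{k+1})$; chaining these does not yield $d(h_1^{k+1},h_2^{k+1})\leq d(h_1^k,h_2^k)$. For that you would need $d(h_1^{k+1},h_2^{k+1})\leq d(h_1^{k+1},h_2^k)$, which the ERM comparison in the proof of Theorem~\ref{theorem:disagreement} does not provide, because the new examples that produce $h_2^{k+1}$ are labeled by $h_1^k$, not by $h_1^{k+1}$. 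So monotonicity of the diagonal sequence $\{d(h_1^k,h_2^k)\}$ is unproved, and ``monotone in a finite set, hence eventually constant'' cannot be invoked. Second, even granting stabilization of the disagreement value, the lifting step fails: the quantity around which the empirical risk of a fixed $h\in\mathcal{H}_v$ concentrates is a running (Ces\`{a}ro-type) average $\big(l\,d(h,c)+u\sum_{j<k}d(h,h_{3-v}^{j})\big)/(l+ku)$. Its round-to-round change is indeed $O(1/k)$, but $\sum_k 1/k$ diverges, so these perturbations can accumulate without bound; a slowly varying effective risk need not converge, the ordering of hypotheses by effective risk can keep changing, and the ERM output can drift to hypotheses at $d$-distance greater than $\epsilon$ from $h_v^N$ arbitrarily late. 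Closing this requires exactly the convergence of the training distribution that the paper assumes; without it (or some substitute), your argument establishes only that consecutive iterates are close, not that the whole tail stays within $\epsilon$ of $h_v^N$.
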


\begin{proof} Let $\sigma_{v}^{k}$ denote the training data $\sigma_{v}$ in the
$k$-th round, let $\mathcal {D}_{v}$
denote the distribution of examples in $\sigma_{v}^{k}$ when $k\rightarrow +\infty$
and let $h_{v}'$ denote the hypothesis which is perfectly consistent with $\mathcal {D}_{v}$. Considering that $h_{v}^{k}$ is trained with $\sigma_{v}^{k}$ which contain $l+u\cdot k$ examples from $\mathcal {D}_{v}$, according to the standard PAC learning
theory, there exists some integer $N>0$, for any integer $t\geq N$,
$P\big(d(h_{v}^{t},h_{v}')\leq \epsilon/8\big)\geq 1-\delta/2$.
Considering that $d(h_{v}^{t},h_{v}^{N})\leq
d(h_{v}^{t},h_{v}')+d(h_{v}^{N},h_{v}')$, we get $P\big(d(h_{v}^{t},h_{v}^{N})\leq
\epsilon\big)\geq 1-\delta$.
Since
\begin{eqnarray}
\nonumber
  |d(h_{1}^{t},h_{2}^{t})-d(h_{1}',h_{2}')|&\!=\!& |d(h_{1}^{t},h_{2}^{t})-d(h_{1}^{t},h_{2}')
  +d(h_{1}',h_{2}^{t})-d(h_{1}',h_{2}')\\
\nonumber
  &\!\!&+d(h_{1}^{t},h_{2}')-d(h_{1}',h_{2}')+d(h_{1}',h_{2}')-d(h_{1}',h_{2}^{t})|\\
\nonumber
  &\!\leq\!& |d(h_{1}^{t},h_{2}^{t})-d(h_{1}^{t},h_{2}')|
  +|d(h_{1}',h_{2}^{t})-d(h_{1}',h_{2}')|\\
\nonumber
  &\!\!&+|d(h_{1}^{t},h_{2}')-d(h_{1}',h_{2}')|+
  |d(h_{1}',h_{2}')-d(h_{1}',h_{2}^{t})|\\
\nonumber
  &\!\leq\!& d(h_{1}^{t},h_{1}')+3\cdot d(h_{2}^{t},h_{2}'),
\end{eqnarray}
we have
\begin{eqnarray}
\nonumber
  |d(h_{1}^{t},h_{2}^{t})-d(h_{1}^{N},h_{2}^{N})|&\!\leq\!&
  |d(h_{1}^{t},h_{2}^{t})-d(h_{1}',h_{2}')|+|d(h_{1}^{N},h_{2}^{N})-
  d(h_{1}',h_{2}')|\\
\nonumber
  &\!\leq\!& 2\cdot d(h_{1}^{t},h_{1}')+ 6\cdot d(h_{2}^{t},h_{2}').
\end{eqnarray}
Thus, we get
$P\big(|d(h_{1}^{t},h_{2}^{t})-d(h_{1}^{N},h_{2}^{N})|\leq\epsilon\big)\geq1-\delta$. \end{proof}

\noindent \textbf{Remark:} Equation~\ref{eq:error-converge} indicates that the error rates of the classifiers will converge, which implies that the classifiers could not be improved further after a number of learning rounds. Actually, the training data for one classifier may contain noisy examples mistakenly labeled by the other classifier, so the surrogate distribution $\mathcal {D}_{v}$ $(v=1, 2)$ is different from the underlying distribution $\mathcal {D}$. Without strong assumptions, e.g., \textit{two views are conditionally independent} or \textit{the classifier is never ``confident but wrong''}, it is very hard to find a good approximation of the target concept $c$ with $\mathcal {D}_{v}$. Equation~\ref{eq:disagreement-converge} indicates that the disagreement between the classifiers will converge. As the classifiers label more and more unlabeled instances for each other, they will become more and more similar and the disagreement between them will decrease closely to $0$. In this case, the disagreement-based process degenerates into self-training. If we continue it, the risk of over-fitting will be great and the performance will be degraded as observed in the empirical study of \cite{Pierce2001}.

\subsection{Empirical Studies}

In this section, we provide empirical studies to verify our theoretical analyses, i.e., whether the disagreement and error rates of the classifiers will converge, and whether larger disagreement will lead to better improvement.

We use the \textit{course} data set \cite{Blum:Mitchell1998} and three UCI data sets, i.e. \textit{kr-vs-kp}, \textit{mushroom} and \textit{tic-tac-toe} \cite{Blake:Keogh:Merz1998}. The \textit{course} data set has two views (i.e., \textit{pages} view and \textit{links}
view) and contains 1,051 examples, among which there are 230 positive
examples (roughly 22\%). The UCI data sets do not have two views.
\textit{kr-vs-kp} contains 3,196 examples, among which there are 1,527 positive examples (roughly 48\%); \textit{mushroom} contains 8,124 examples, among which there are 3,916 positive examples (roughly 48\%); \textit{tic-tac-toe} contains 958 examples, among which there are 332 positive examples (roughly 35\%). On each data set, we randomly select 25\% data as the test set while using the remaining 75\% data to generate
a labeled data $L$ whose size will be mentioned in
Figures~\ref{Estimated-Round-a}, \ref{Estimated-Round-b} and \ref{Estimated-Round-c}, and the rest of the 75\% data are used to generate the unlabeled data $U$.

In the experiments, we let each classifier label its most confident unlabeled instances for the other in each round to reduce label noise. However, this way will cause a problem that the training set is not an i.i.d. sample from the underlying distribution. To reduce the influence of non-i.i.d. sample, we create a small pool $U'$ of size $75$ by drawing instances randomly from the large unlabeled data $U$ and select the confident instances from $U'$ to label, as that in \cite{Blum:Mitchell1998}. They reported that using a small pool can get better results than selecting confident instances directly from $U$, because it forces the classifiers to select
more representative instances. The number of newly labeled positive and negative examples is in proportion to the positive and negative examples in the data set. In our experiments, in order to study the convergence of disagreement and error rates, the disagreement-based process proceeds until no unlabeled instance in $U'$ is labeled as the positive class. The size of $L$ plays an important role in disagreement-based approaches, we run experiments with different $L$ on each data set. Each experiment is repeated for $100$ runs and the average performance is recorded. We run the experiments with various base classifiers including SMO, J48, MultilayerPerceptron and NaiveBayes in WEKA \cite{weka}.

\subsubsection{Convergence of Disagreement and Error rates}\label{sec:convergence-dis-error}
We run co-training on the \textit{course} data set by using NaiveBayes, and run the single-view disagreement-based approach on the UCI data sets by using two different classifiers, i.e., SMO and J48 on the \textit{kr-vs-kp} data set, SMO and NaiveBayes on the \textit{mushroom} data set, and SMO and MultilayerPerceptron on the \textit{tic-tac-toe} data set. The error rates, the disagreement between the two classifiers are depicted in Figures~\ref{Estimated-Round-a} and \ref{Estimated-Round-b}.

In Figure~\ref{Estimated-Round-a}(a), the disagreement between the classifiers increases in the first several rounds and then decreases. This is because in \textit{course}-3-9-1-3 the initial labeled training data set $L$ is too small, the two initial classifiers $h_{1}^{0}$ and $h_{2}^{0}$ are simple and only learn small part of the task. After co-training is executed, the amount of training examples for each classifier increases. Then each retrained classifier learns more about the task from its own perspective and the disagreement between the classifiers increases. As the two classifiers label more and more unlabeled instances for each other, they become more and more similar and the disagreement between them decreases.

\begin{figure*}[!h]
\centering
\begin{minipage}[c]{0.32\linewidth}
\centering
\includegraphics[width = \linewidth]{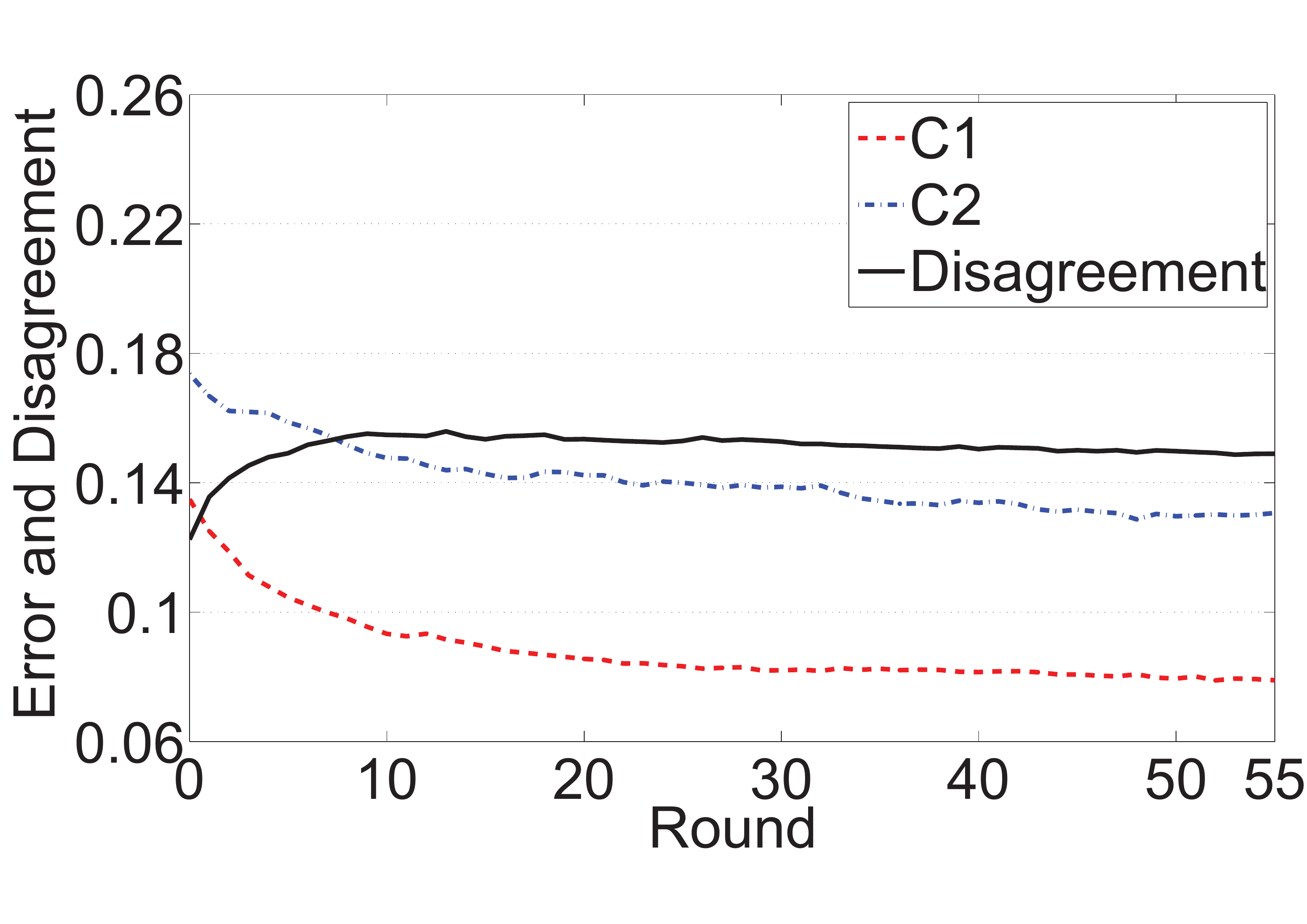}
\centering \mbox{\footnotesize (a)\textit{course}-3-9-1-3}
\end{minipage}
\centering
\begin{minipage}[c]{0.32\linewidth}
\centering
\includegraphics[width = \linewidth]{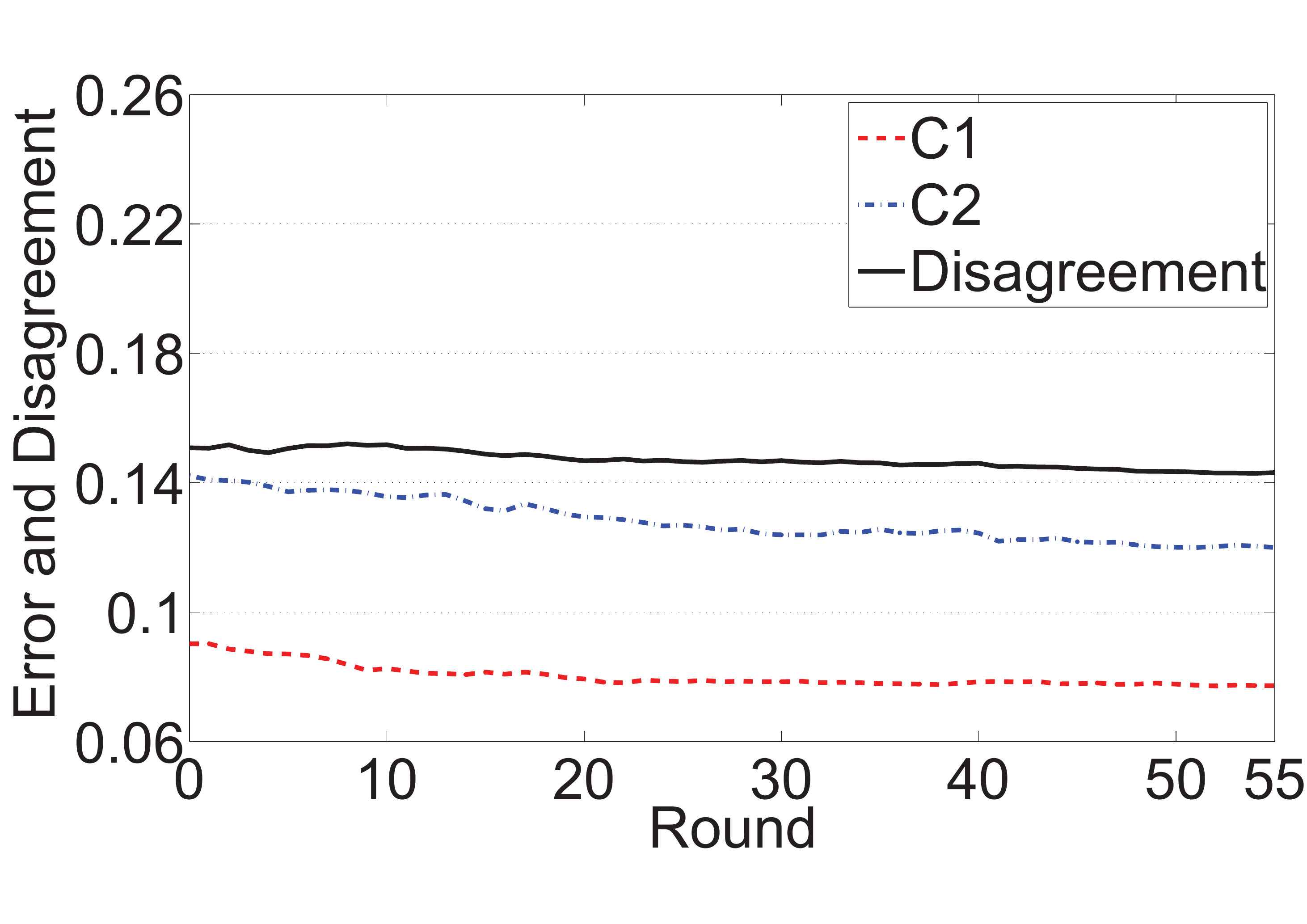}
\centering \mbox{\footnotesize (b)\textit{course}-11-33-1-3}
\end{minipage}
\centering
\begin{minipage}[c]{0.32\linewidth}
\centering
\includegraphics[width = \linewidth]{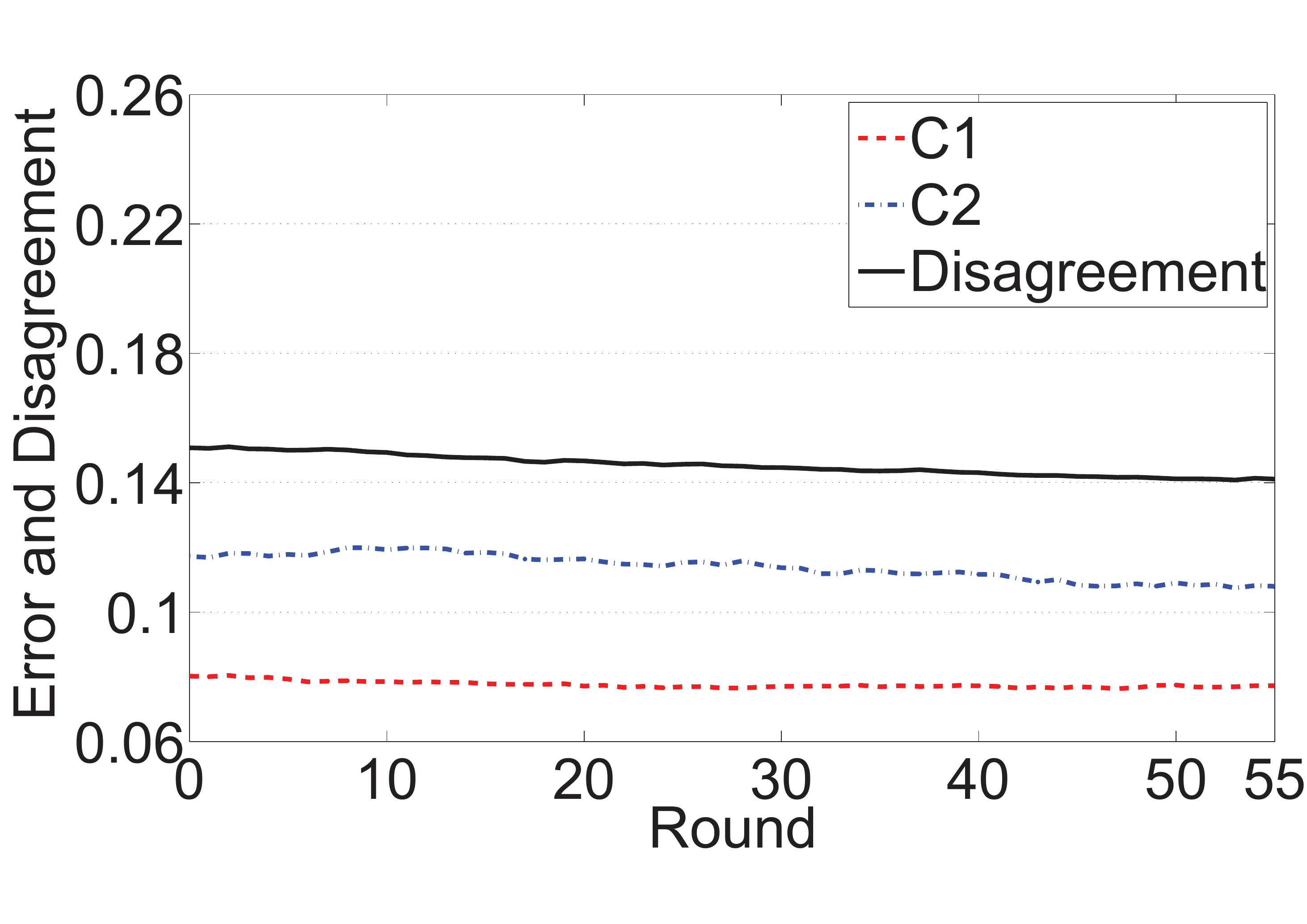}
\centering \mbox{\footnotesize (c)\textit{course}-27-81-1-3}
\end{minipage}\\[+3pt]
\caption{Experimental results of co-training. NaiveBayes is used to train the classifiers. \textit{C1} and \textit{C2} denote the two classifiers trained in the two views, respectively. \textit{Disagreement} denotes the disagreement of the two classifiers. \textit{data}-a-b-c-d means that on the data set \textit{data}, the
initial labeled example set contains $a$ positive examples and $b$ negative examples, and in each round each classifier labels $c$ positive and $d$ negative examples for the other classifier.}\label{Estimated-Round-a}
\end{figure*}

\begin{figure*}[!h]
\centering
\begin{minipage}[c]{0.32\linewidth}
\centering
\includegraphics[width = \linewidth]{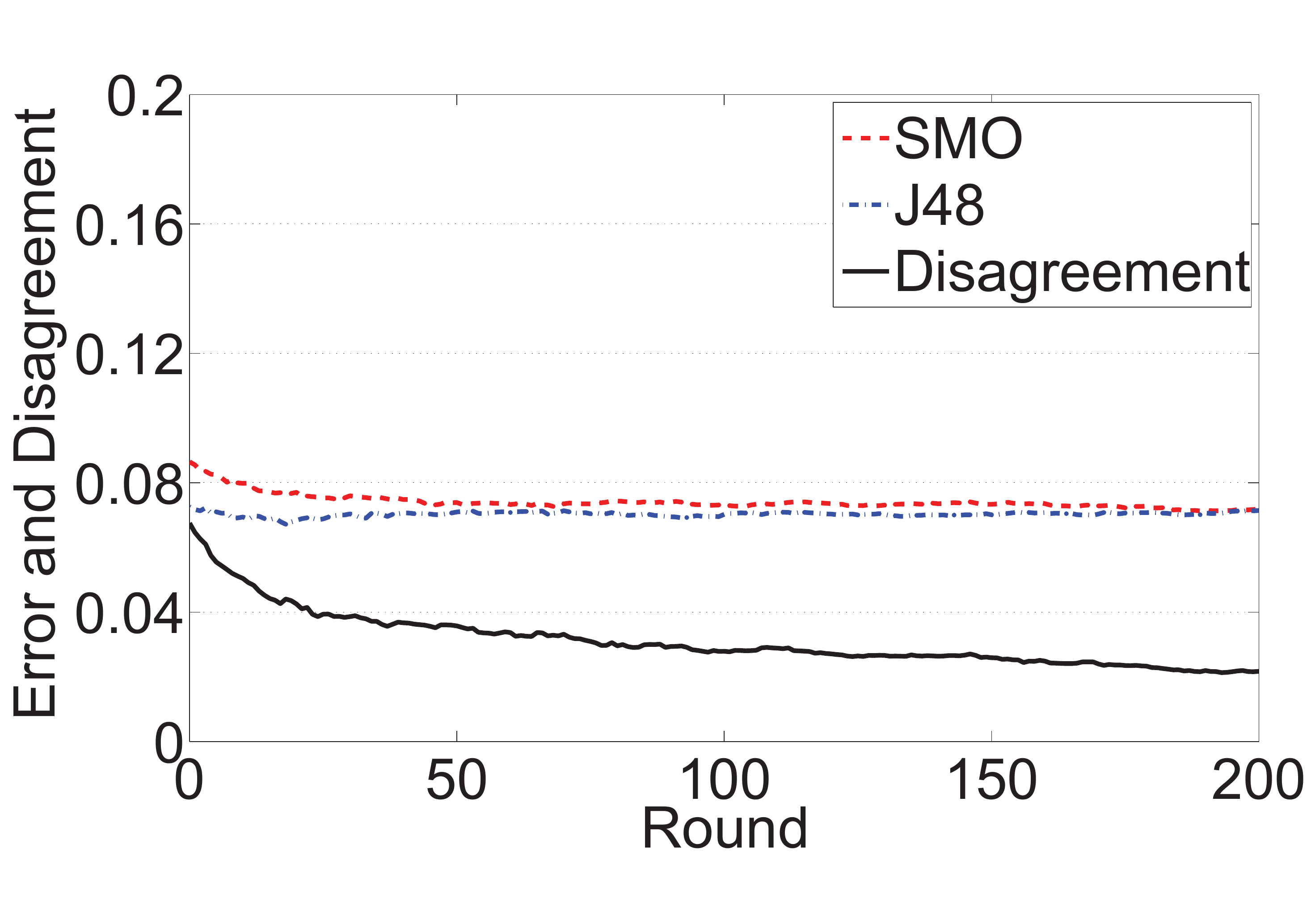}
\centering \mbox{\footnotesize (a)\textit{kr-vs-kp}-60-60-1-1}
\end{minipage}
\centering
\begin{minipage}[c]{0.32\linewidth}
\centering
\includegraphics[width = \linewidth]{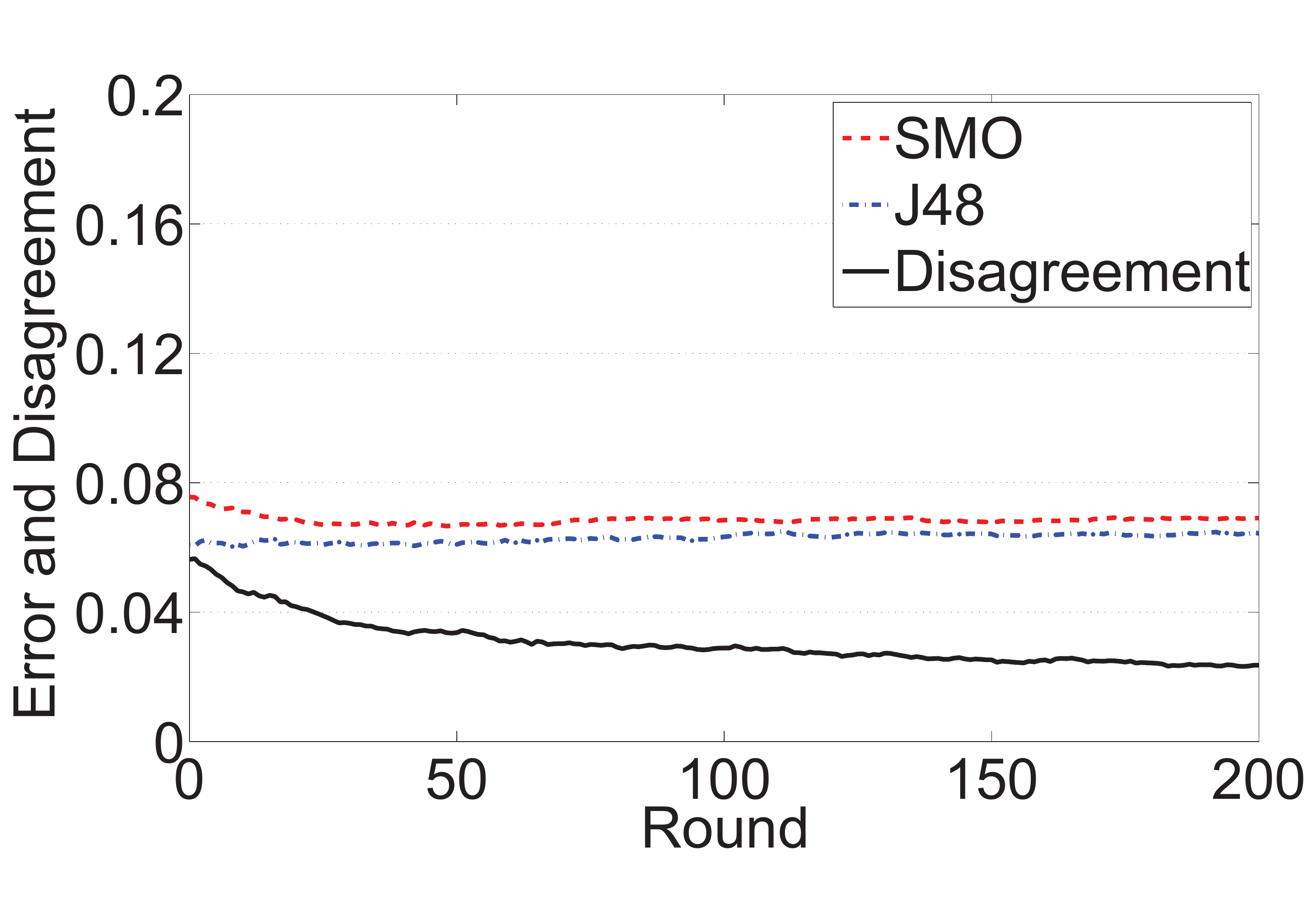}
\centering \mbox{\footnotesize (b)\textit{kr-vs-kp}-75-75-1-1}
\end{minipage}
\centering
\begin{minipage}[c]{0.32\linewidth}
\centering
\includegraphics[width = \linewidth]{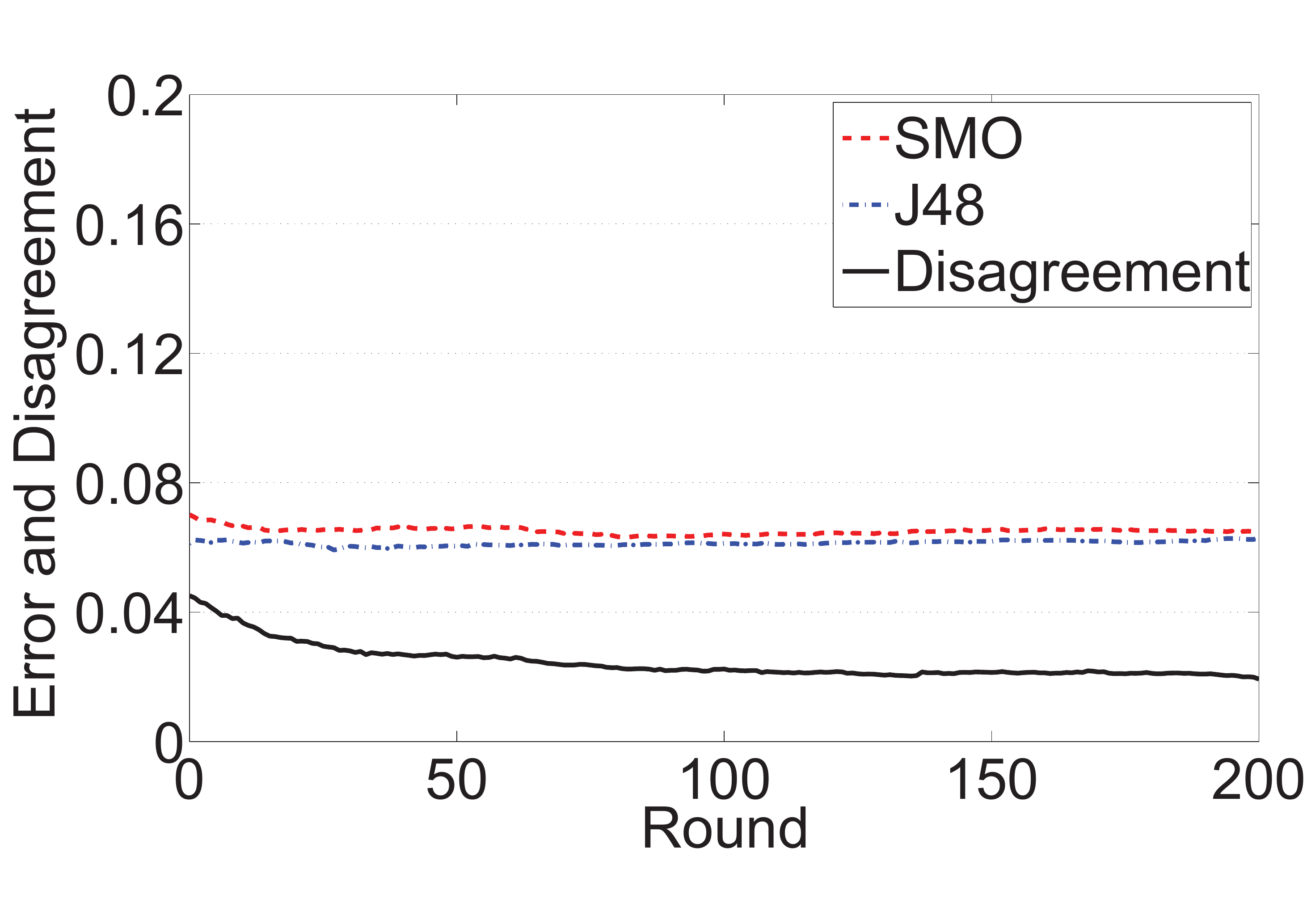}
\centering \mbox{\footnotesize (c)\textit{kr-vs-kp}-90-90-1-1}
\end{minipage}\\[+8pt]
\centering
\begin{minipage}[c]{0.32\linewidth}
\centering
\includegraphics[width = \linewidth]{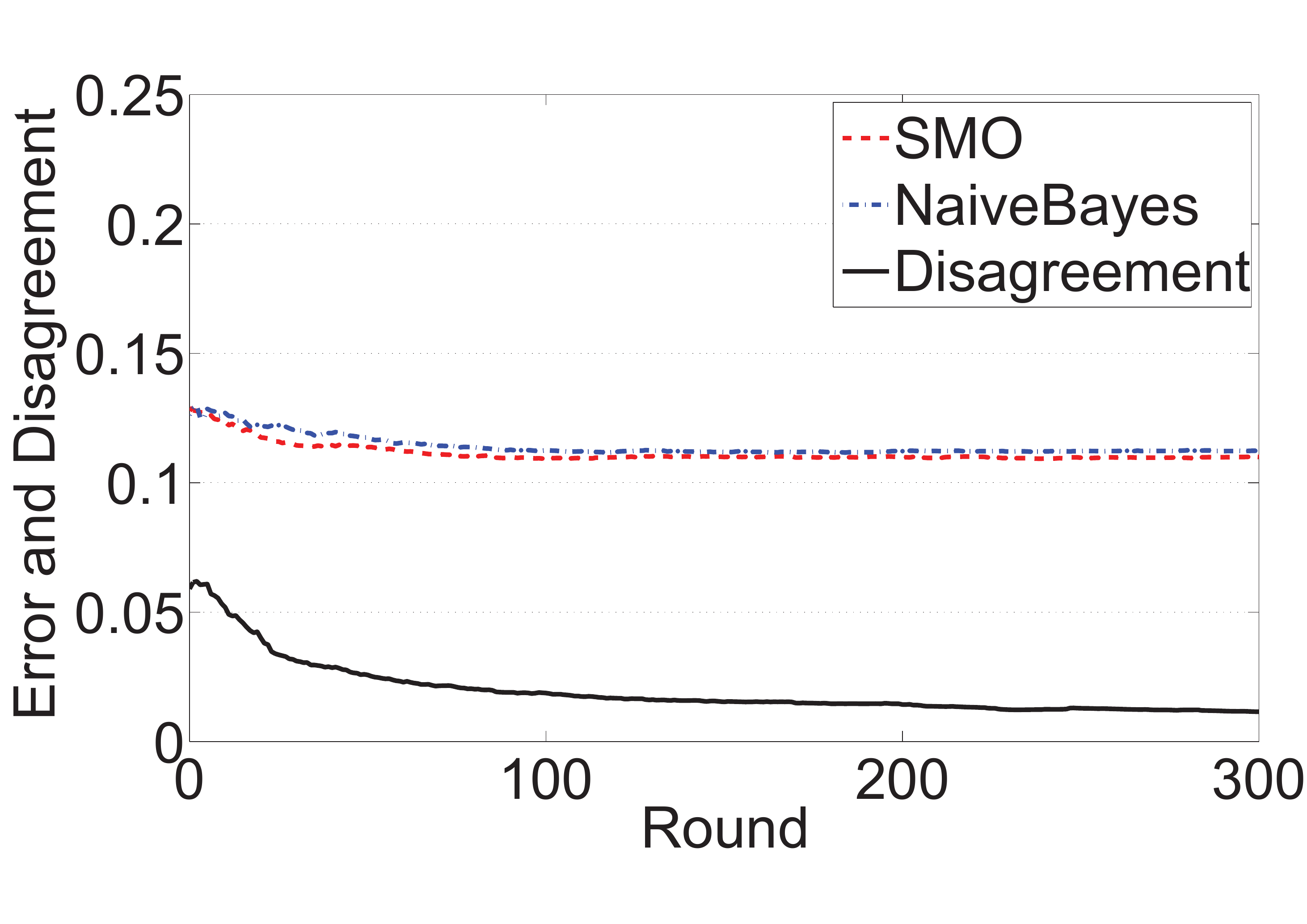}
\centering \mbox{\footnotesize (d)\textit{mushroom}-5-5-1-1}
\end{minipage}
\centering
\begin{minipage}[c]{0.32\linewidth}
\centering
\includegraphics[width = \linewidth]{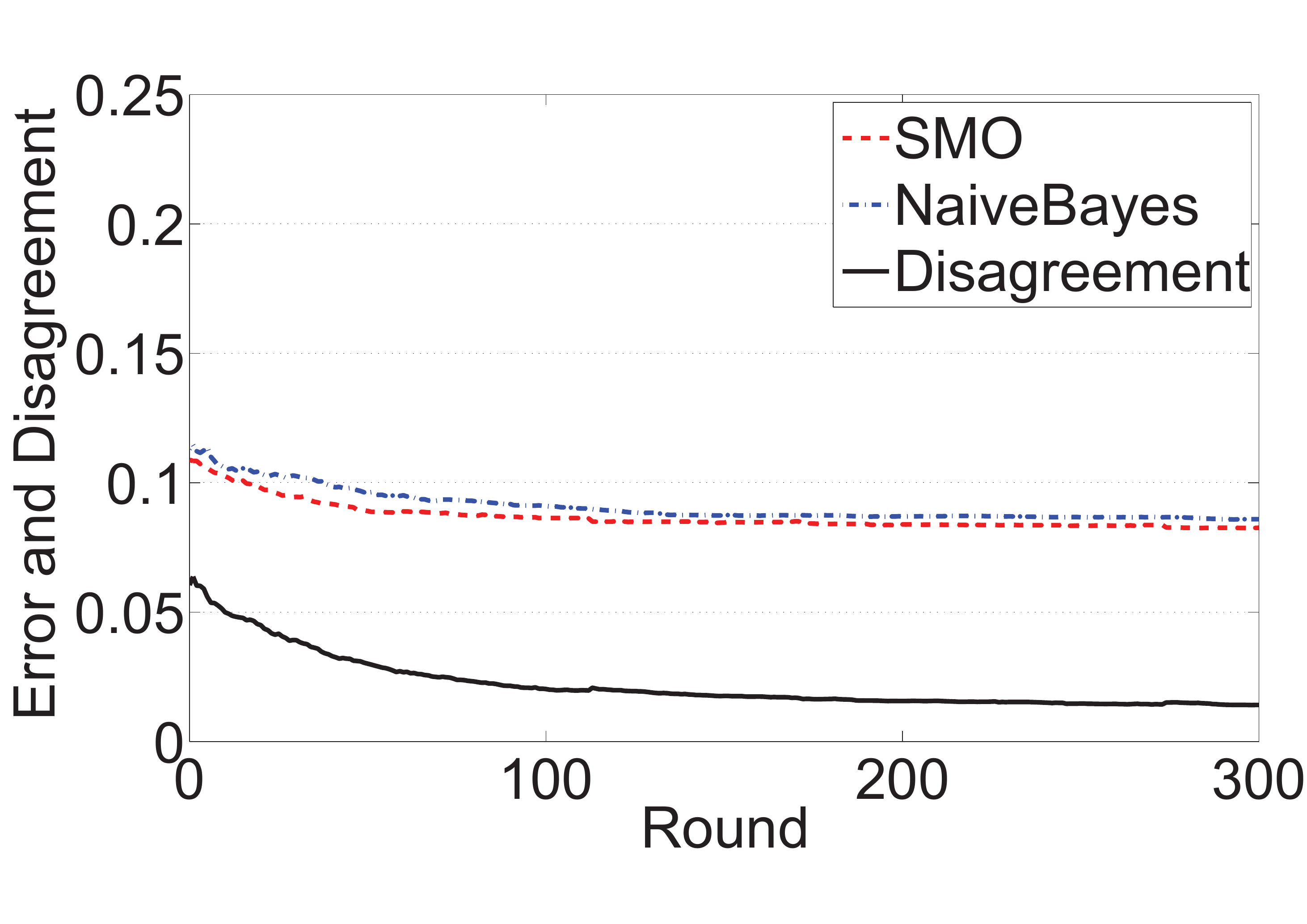}
\centering \mbox{\footnotesize (e)\textit{mushroom}-7-7-1-1}
\end{minipage}
\centering
\begin{minipage}[c]{0.32\linewidth}
\centering
\includegraphics[width = \linewidth]{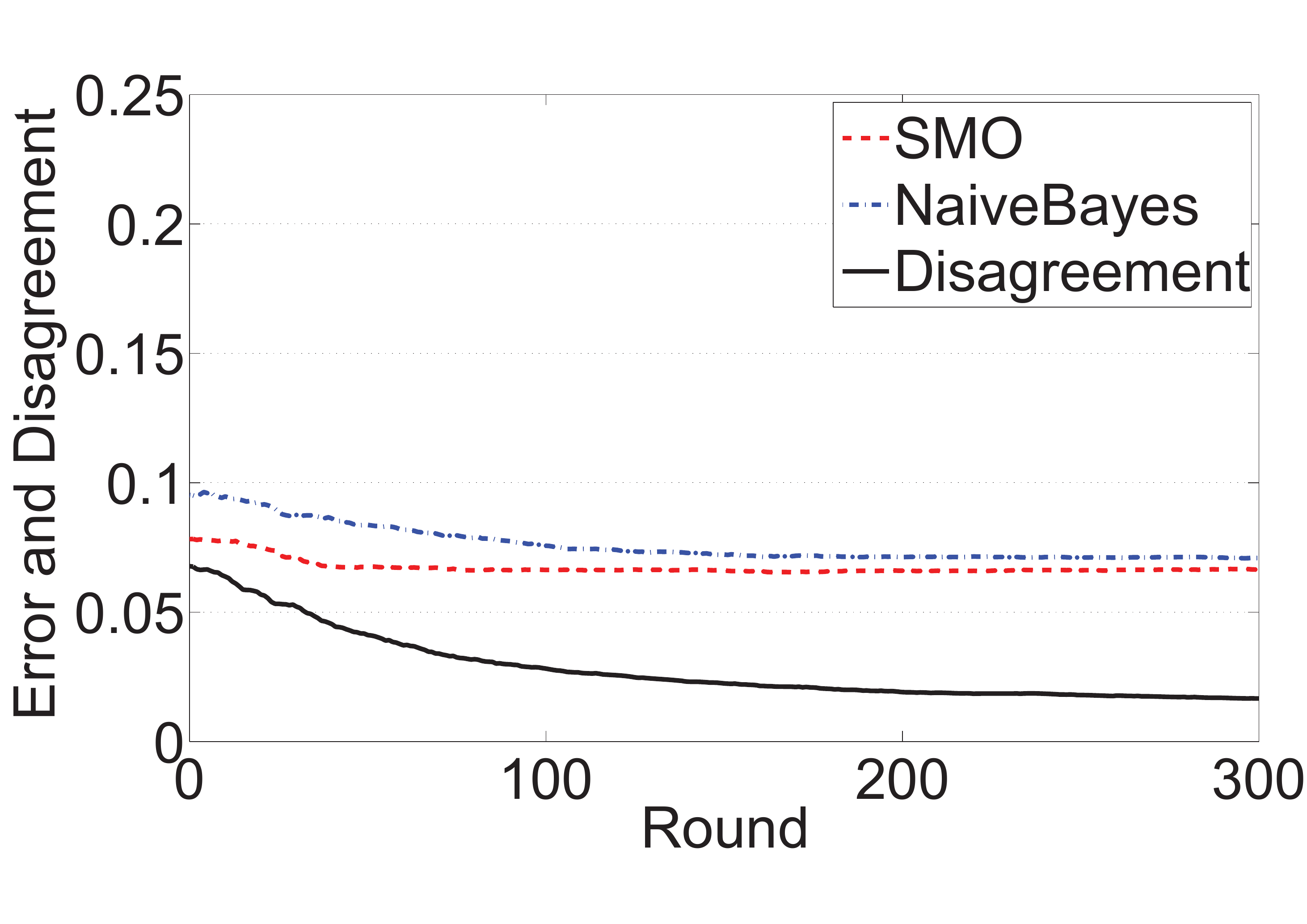}
\centering \mbox{\footnotesize (f)\textit{mushroom}-13-13-1-1}
\end{minipage}\\[+8pt]
\centering
\begin{minipage}[c]{0.32\linewidth}
\centering
\includegraphics[width = \linewidth]{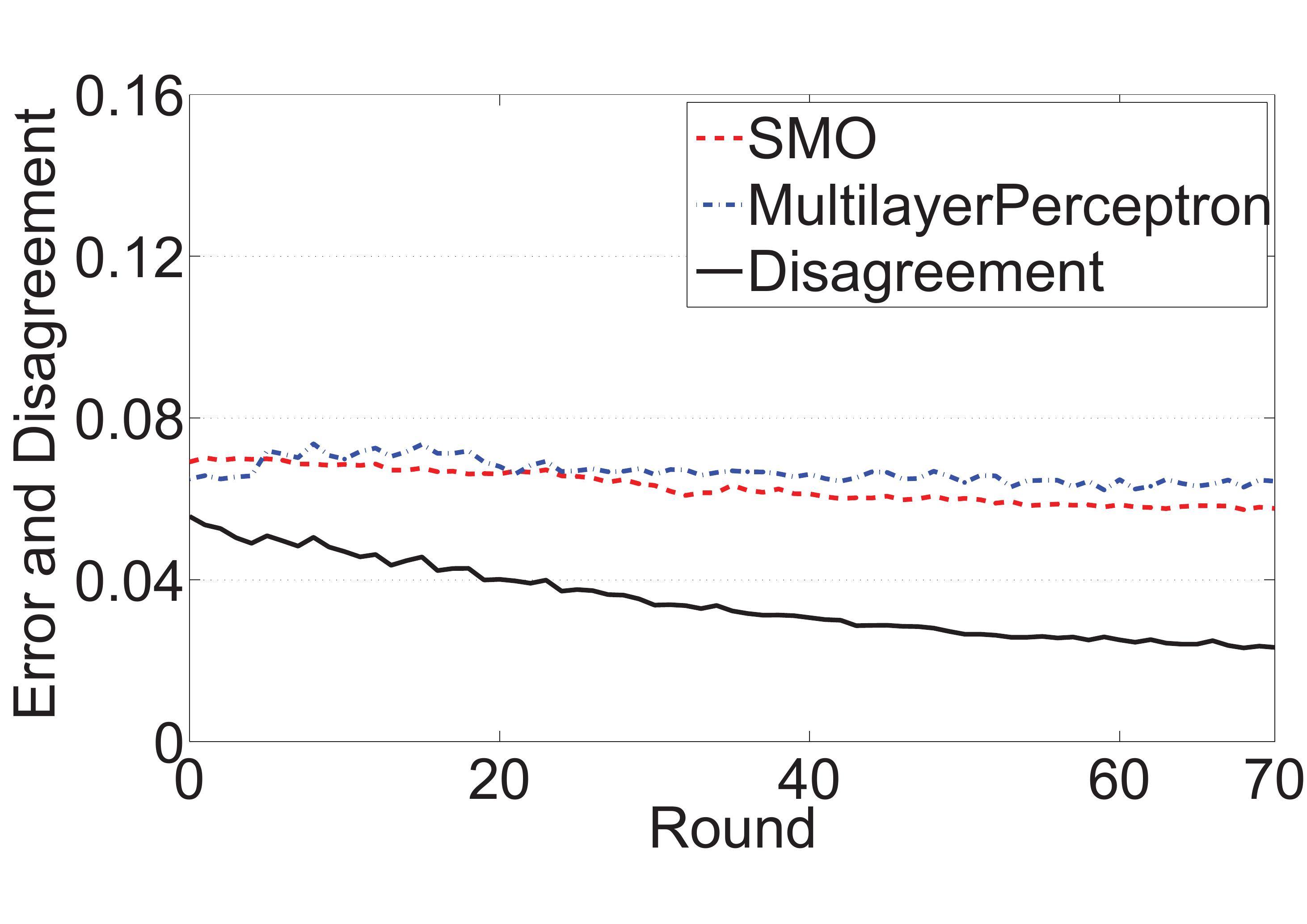}
\centering \mbox{\footnotesize (g)\textit{tic-tac-toe}-50-50-1-1}
\end{minipage}
\centering
\begin{minipage}[c]{0.32\linewidth}
\centering
\includegraphics[width = \linewidth]{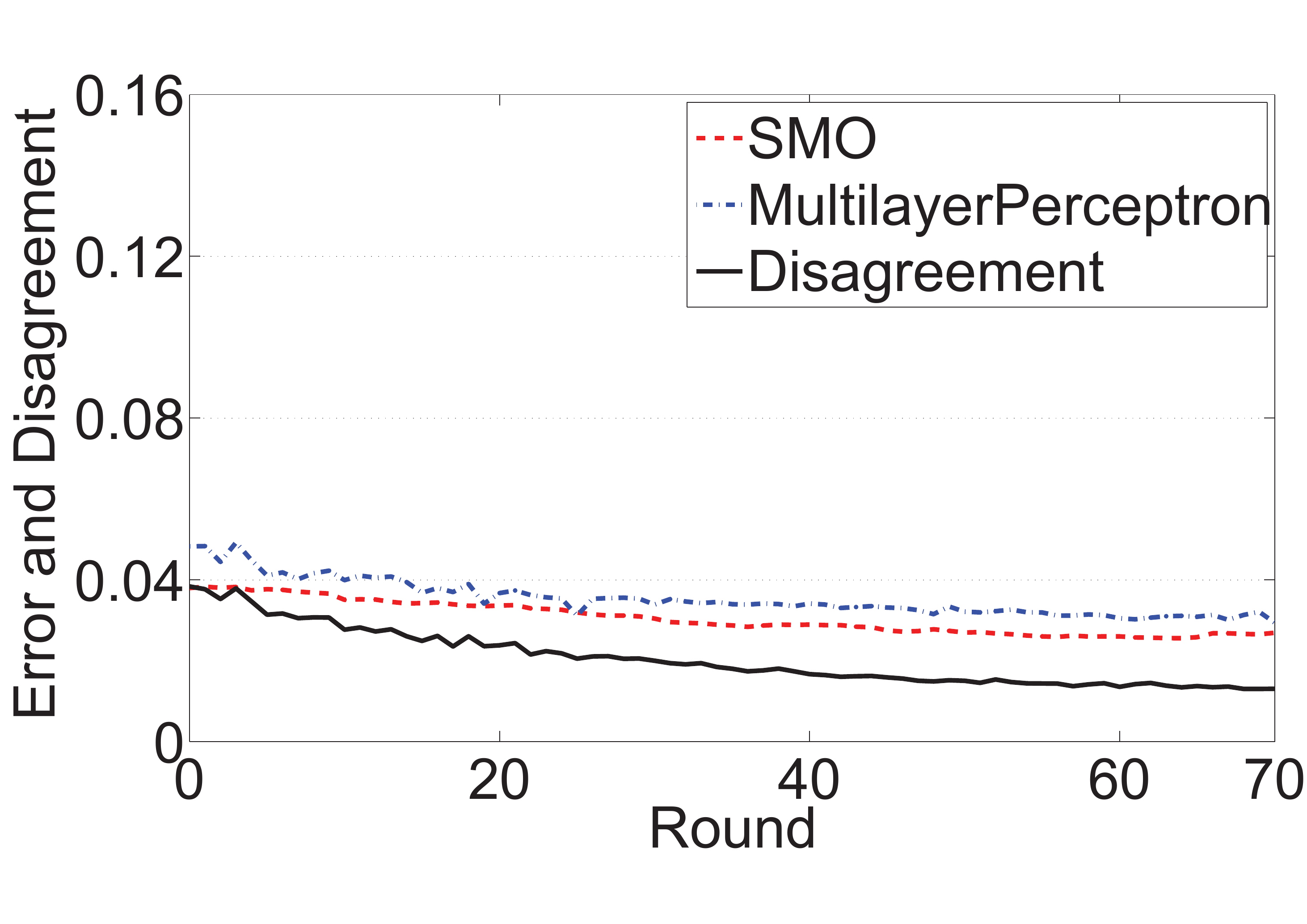}
\centering \mbox{\footnotesize (h)\textit{tic-tac-toe}-55-55-1-1}
\end{minipage}
\centering
\begin{minipage}[c]{0.32\linewidth}
\centering
\includegraphics[width = \linewidth]{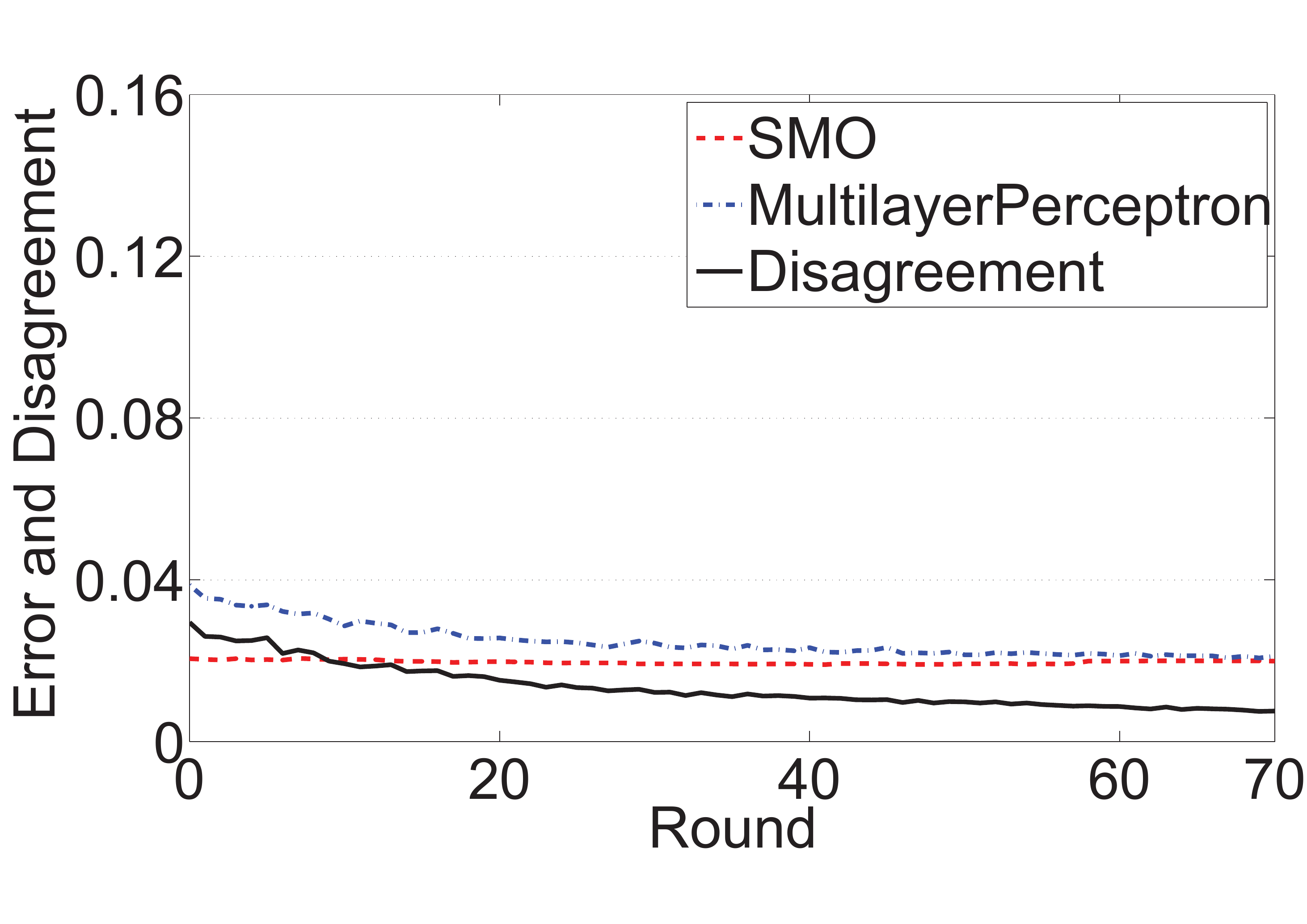}
\centering \mbox{\footnotesize (i)\textit{tic-tac-toe}-60-60-1-1}
\end{minipage}\\
\caption{Experimental results of single-view disagreement-based approach. \textit{SMO} and
\textit{J48} (or \textit{SMO} and \textit{NaiveBayes}, \textit{SMO} and
\textit{MultilayerPerceptron}) denote the two classifiers, respectively.
\textit{Disagreement} denotes the disagreement of the two classifiers.
\textit{data}-a-b-c-d means that on the data set \textit{data}, the initial labeled
example set contains $a$ positive examples and $b$ negative examples, and in each round
each classifier labels $c$ positive and $d$ negative examples for the other
classifier.}\label{Estimated-Round-b}
\end{figure*}

In general, Figures~\ref{Estimated-Round-a} and \ref{Estimated-Round-b} show that except the first several rounds, the disagreement between the classifiers decreases or converges as the process goes on. If the disagreement does not converge, the error rates of the classifiers seem to decrease as the disagreement decreases, e.g., Figures~\ref{Estimated-Round-b}(g) to (i). If the disagreement converges, the error rates of the classifiers also seem to converge, e.g., Figure~\ref{Estimated-Round-b}(d) to (f). This validates that our Theorem~\ref{theorem:disagreement-covergence} which shows that the disagreement and error rates of the classifiers in disagreement-based approaches will converge is meaningful.

\subsubsection{Larger Disagreement Leading to Better Improvement}\label{sec:disagreement-improvement}

\begin{figure*}[!h]
\centering
\begin{minipage}[c]{0.32\linewidth}
\centering
\includegraphics[width = \linewidth]{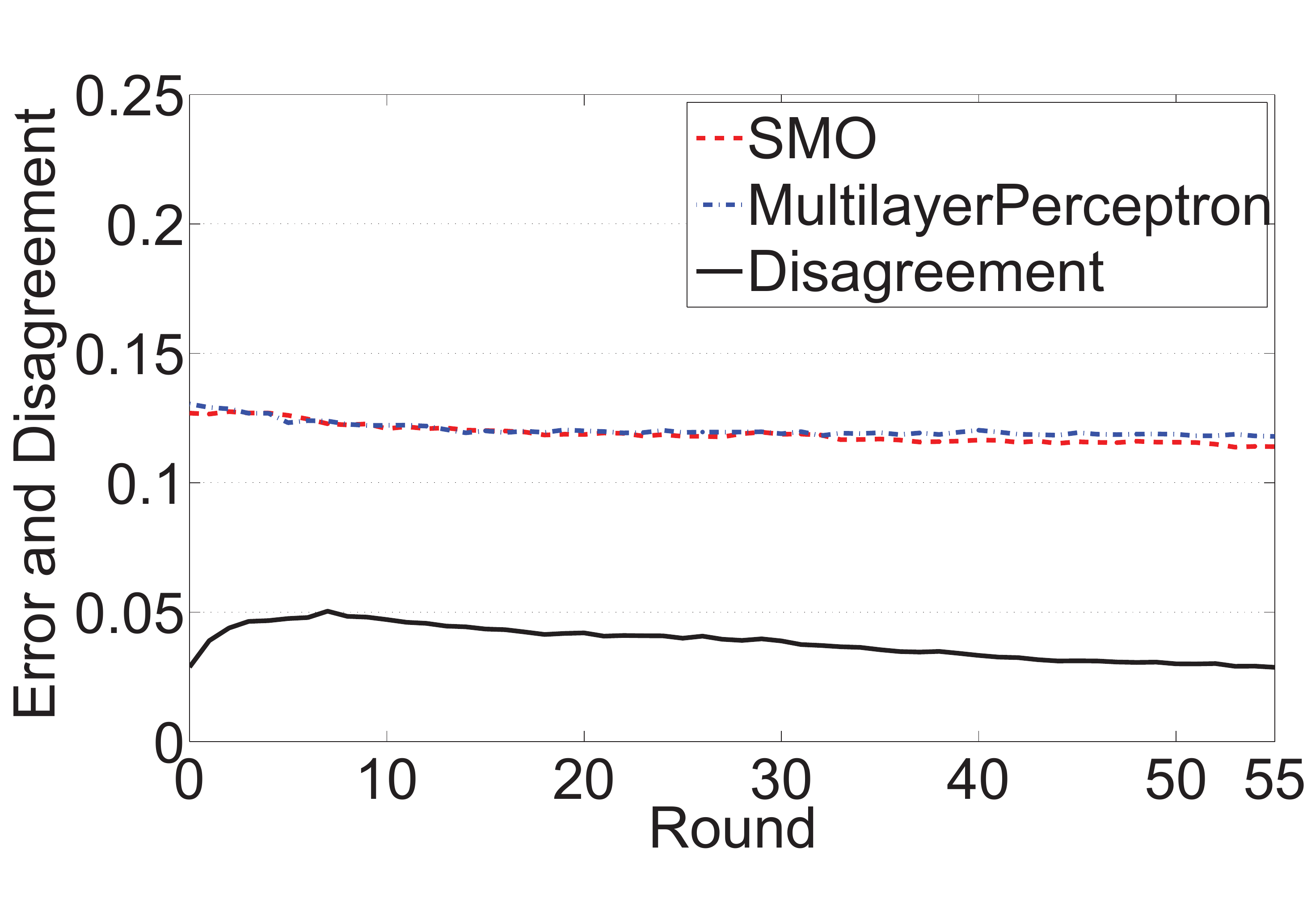}
\centering \mbox{\footnotesize (a)\textit{pages}view-5-15-1-3}
\end{minipage}
\centering
\begin{minipage}[c]{0.32\linewidth}
\centering
\includegraphics[width = \linewidth]{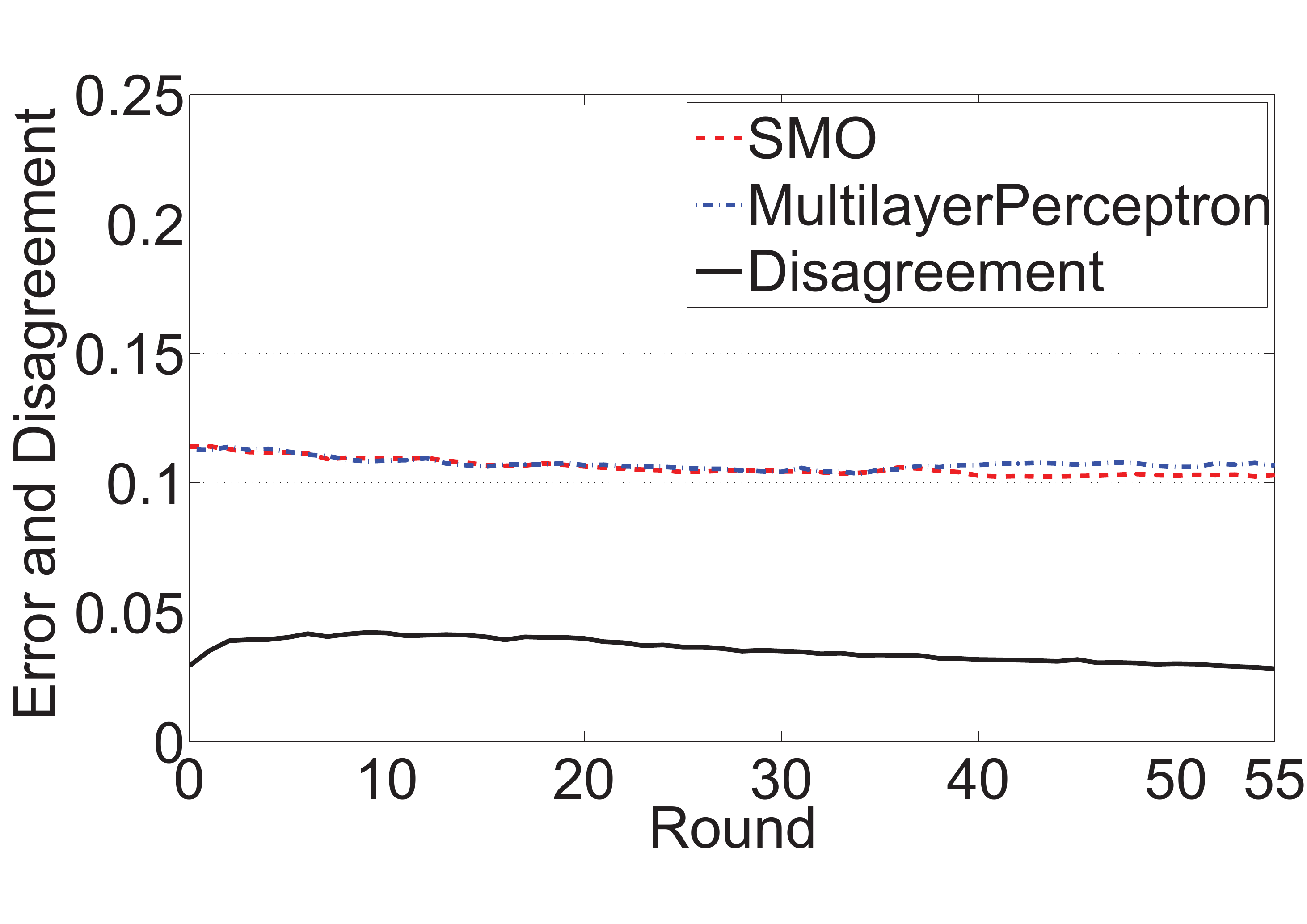}
\centering \mbox{\footnotesize (b)\textit{pages}view-7-21-1-3}
\end{minipage}
\centering
\begin{minipage}[c]{0.32\linewidth}
\centering
\includegraphics[width = \linewidth]{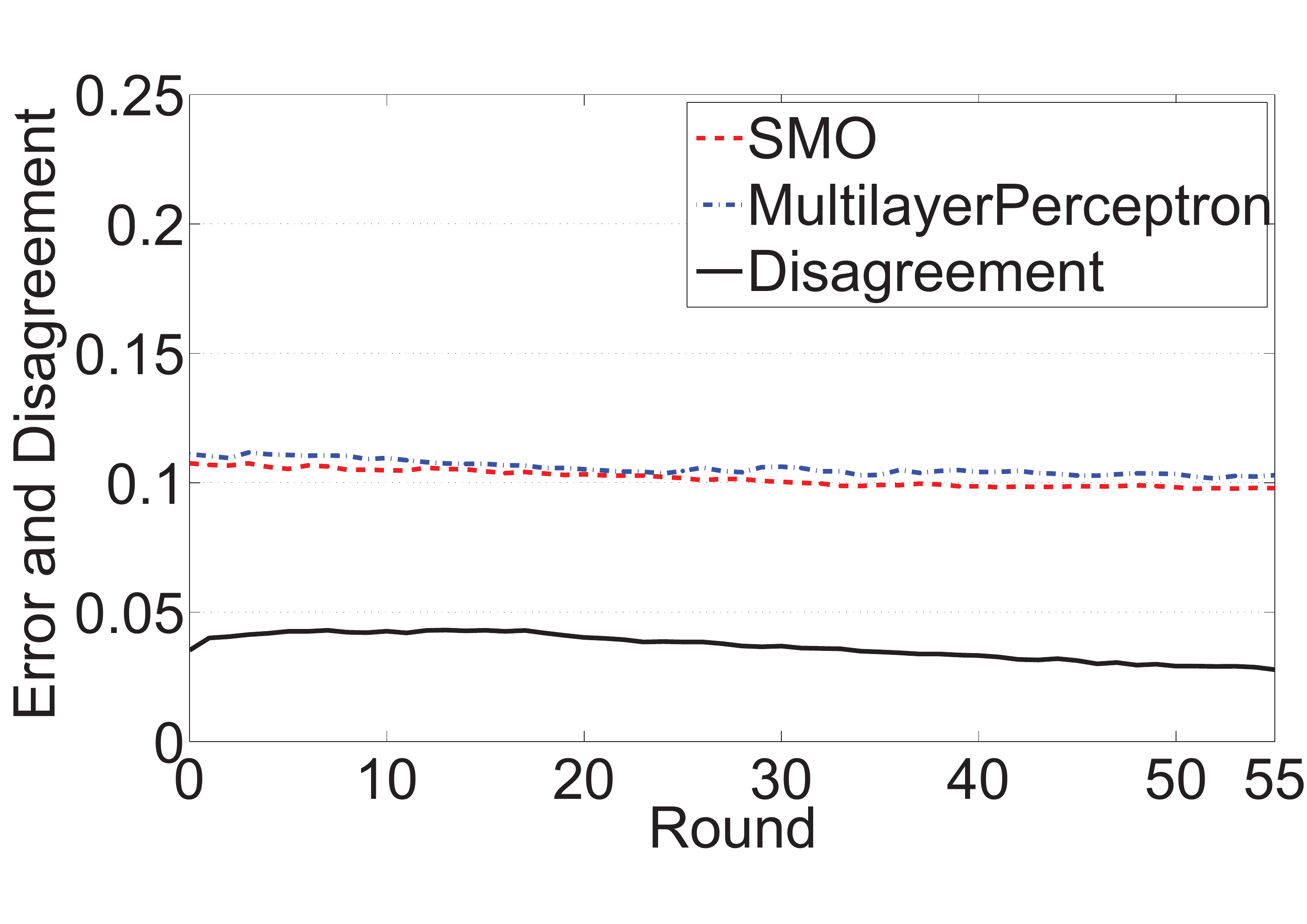}
\centering \mbox{\footnotesize (c)\textit{pages}view-12-36-1-3}
\end{minipage}\\[+10pt]
\centering
\begin{minipage}[c]{0.32\linewidth}
\centering
\includegraphics[width = \linewidth]{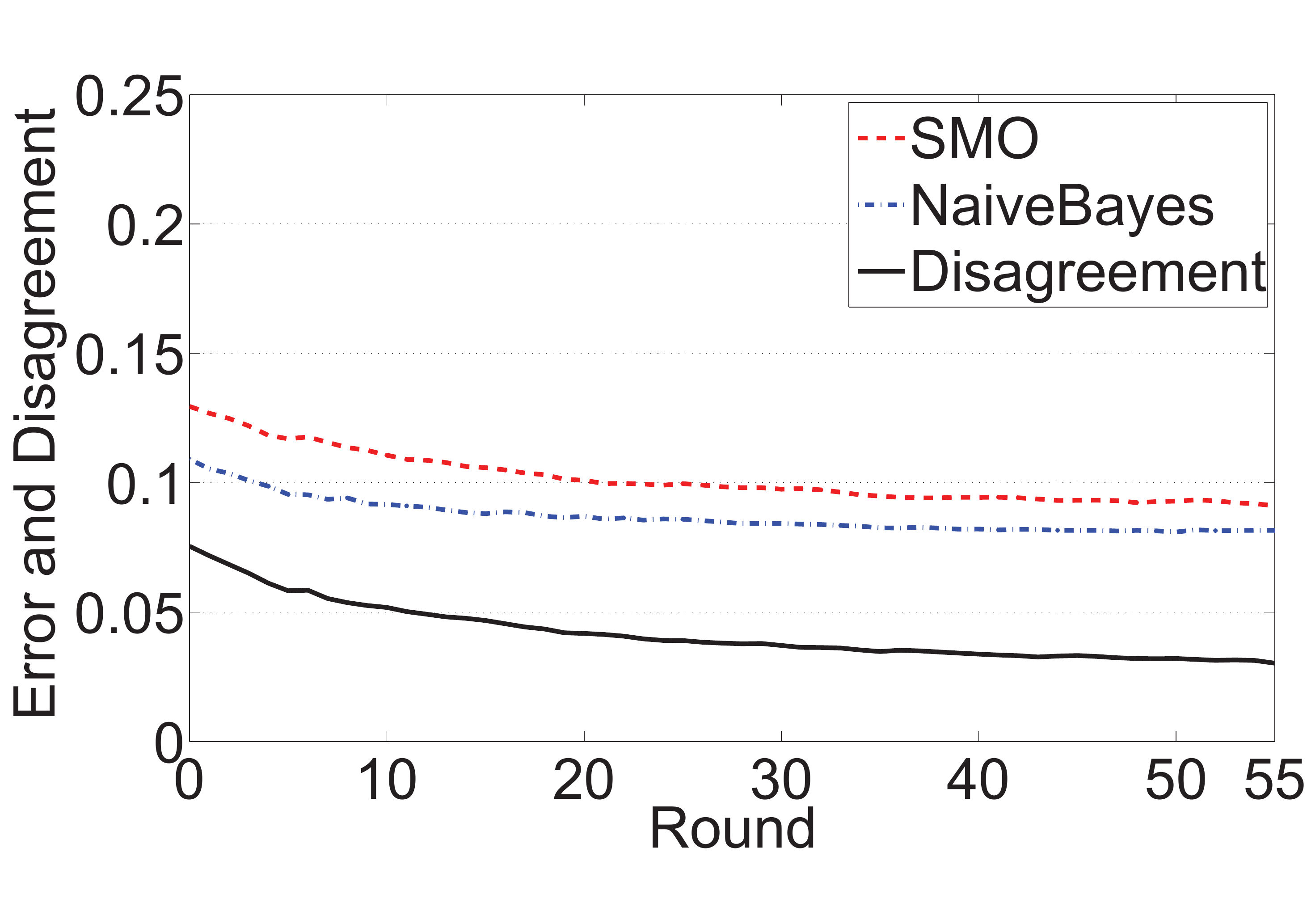}
\centering \mbox{\footnotesize (d)\textit{pages}view-5-15-1-3}
\end{minipage}
\centering
\begin{minipage}[c]{0.32\linewidth}
\centering
\includegraphics[width = \linewidth]{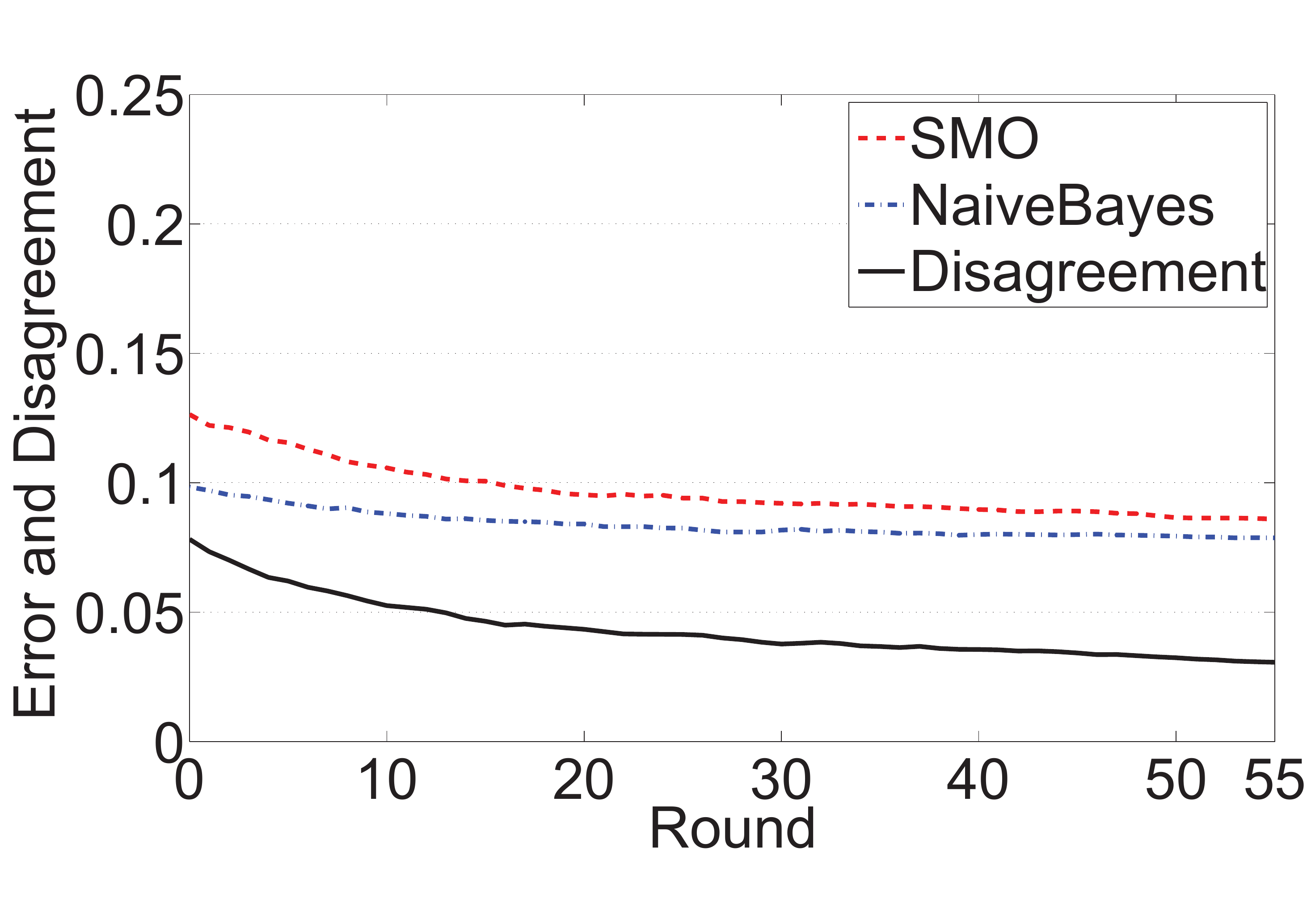}
\centering \mbox{\footnotesize (e)\textit{pages}view-7-21-1-3}
\end{minipage}
\centering
\begin{minipage}[c]{0.32\linewidth}
\centering
\includegraphics[width = \linewidth]{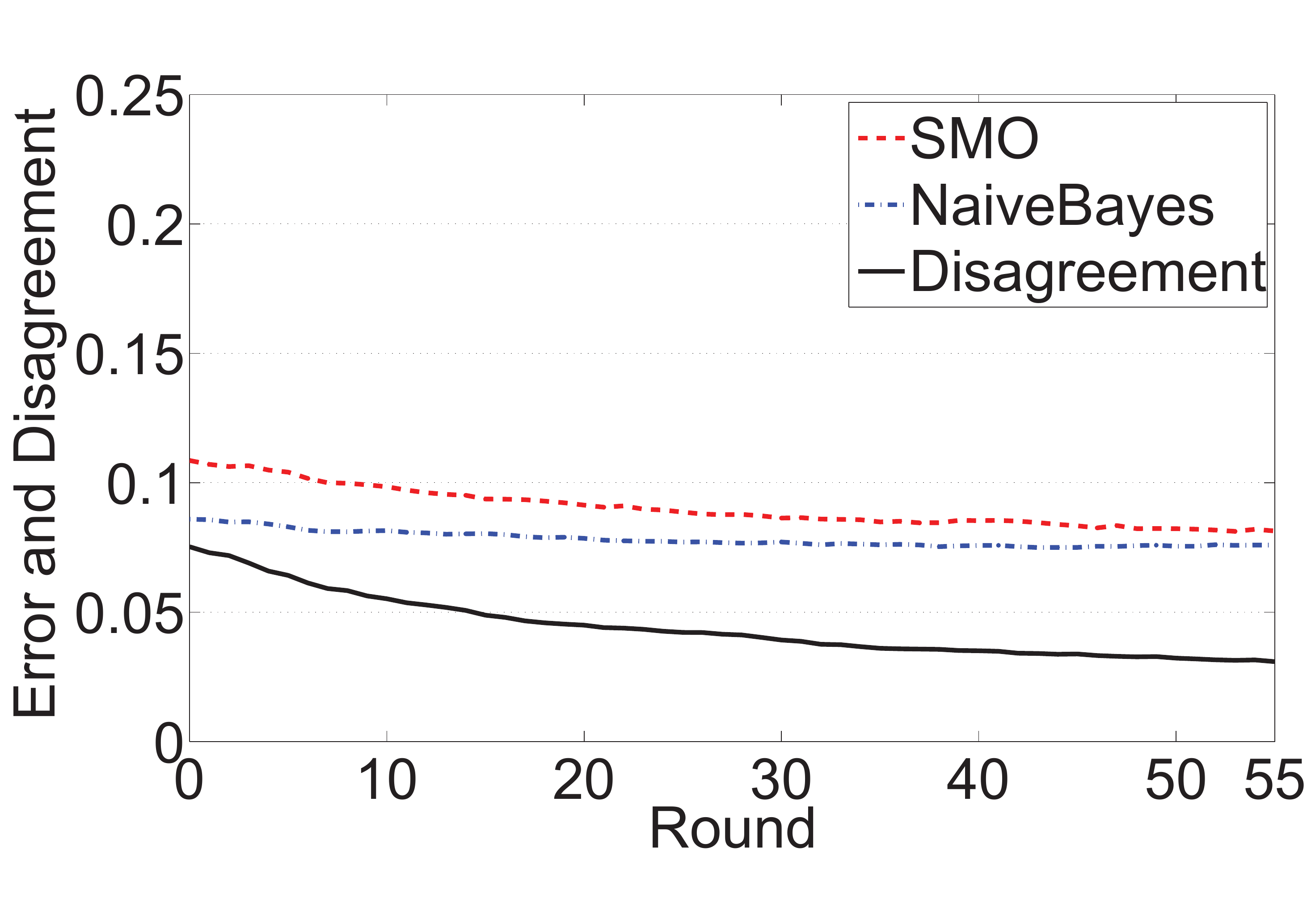}
\centering \mbox{\footnotesize (f)\textit{pages}view-12-36-1-3}
\end{minipage}\\[+3pt]
\caption{Comparing the performance by using two different groups of classifiers on the \textit{pages} view of \textit{course} data set. \textit{SMO} and
\textit{MultilayerPerceptron} (or \textit{SMO} and \textit{NaiveBayes}) denote the two
classifiers, respectively. \textit{Disagreement}
denotes the disagreement of the two classifiers. \textit{data}-a-b-c-d means that on
the data set \textit{data}, the initial labeled example set contains $a$ positive
examples and $b$ negative examples, and in each round each classifier labels $c$
positive and $d$ negative examples for the other classifier.}\label{Estimated-Round-c}
\end{figure*}

In order to study the disagreement further, more experiments are conducted. We run the disagreement-based approach with two different groups of classifiers on the \textit{pages} view of \textit{course} data set. The first group is SMO and MultilayerPerceptron, and the second group is SMO and NaiveBayes. With this experiment, it is clearer whether the classifiers with larger disagreement could lead to better improvement. The results are depicted in Figure~\ref{Estimated-Round-c}. Note that Figures~\ref{Estimated-Round-c}(a) to (c) use the same group of classifiers under different sizes of $L$, while Figures~\ref{Estimated-Round-c}(d) to (e) use another group of classifiers under different sizes of $L$.

By comparing Figures~\ref{Estimated-Round-c}(a) with (d), (b) with (e) and (c) with (f), it can be found that the disagreement between the two classifiers trained by the second group is larger than that trained by the first group. Note that, SMO
appears in both groups, and its improvement is larger in
Figures~\ref{Estimated-Round-c}(d) to (f) than that in
Figures~\ref{Estimated-Round-c}(a) to (c), respectively. This validates that our result in Section~\ref{sec:upper-bound} is meaningful.

\section{Sufficient and Necessary Condition}\label{sec:sufficient-necessary}
All previous theoretical analyses on co-training focus on the sufficient condition, i.e., under what condition co-training could work, so a fundamental issue may arise: what is the sufficient and necessary condition for co-training to succeed? In this section we will study what the sufficient and necessary condition is. For the convenience of description, we first suppose that the data have two views and then discuss how to generalize it to the case where the data have only one view.

Let $\mathcal {X}=\mathcal {X}_{1}\times \mathcal {X}_{2}$ denote the instance space, where $\mathcal {X}_{1}$ and $\mathcal {X}_{2}$
correspond to the two views, respectively. $\mathcal {Y}=\{-1, +1\}$ is
the label space, $L=\{(\langle x^{1}_{1},x^{1}_{2}\rangle, y^{1}), \cdots, (\langle x^{l}_{1},x^{l}_{2}\rangle, y^{l}) \}\subset {\mathcal {X} \times \mathcal {Y}}$ are the labeled data, $U=\{\langle x^{l+1}_{1},x^{l+1}_{2}\rangle,\\ \cdots, \langle x^{l+|U|}_{1},x^{l+|U|}_{2}\rangle\} \subset
\mathcal{X}$ are the unlabeled data. The labeled data $L$ independently and identically come from some unknown distribution $\mathcal {D}$, whose marginal distribution on $\mathcal {X}$ is $\mathcal {D}_{\mathcal {X}}$, and the unlabeled data $U$ independently and identically come from $\mathcal {D}_{\mathcal {X}}$. $c=(c_{1}, c_{2})$ is the target concept, where $c_{1}$ and $c_{2}$ are the target concepts in the two views, respectively, i.e., $c_{1}(x_{1})=c_{2}(x_{2})=y$ for any example $(\langle x_{1},x_{2}\rangle, y)$.

\subsection{Graph View of Co-Training}
When Blum and Mitchell \cite{Blum:Mitchell1998} proposed co-training, they gave a bipartite graph over two views for intuition. The left-hand side of the bipartite graph has one node for each instance in view $\mathcal {X}_{1}$ and the right-hand side has one node for each instance in view $\mathcal {X}_{2}$.
Since each instance $x=\langle x_{1},x_{2}\rangle$ has two views, there is an undirected edge between the left-hand $x_{1}$ and the right-hand $x_{2}$. Considering that each view $\mathcal {X}_{v}$ ($v=1,2$) corresponds to a graph where each node is one instance and there is an undirected edge between two nodes if they have the same class label, it naturally inspires us to study co-training with the graphs over two views.

Generally, assigning a label to an unlabeled instance $x_{v}^{t}$ ($v=1,2$) based on a
labeled example $x_{v}^{s}$ can be viewed as estimating the conditional probability
$P\big(y(x_{v}^{t})=y(x_{v}^{s})|x_{v}^{t},x_{v}^{s}\big)$. For controlling the confidence, we can set a threshold $\eta_{v}>0$
(generally $\eta_{v}=1/2$). If
$P\big(y(x_{v}^{t})=y(x_{v}^{s})|x_{v}^{t},x_{v}^{s}\big)< \eta_{v}$, we set
$P\big(y(x_{v}^{t})=y(x_{v}^{s})|x_{v}^{t},x_{v}^{s}\big)=0$. Note that
$P\big(y(x_{v}^{t})=y(x_{v}^{s})|x_{v}^{t},x_{v}^{s}\big)=0$ does not mean that
$x_{v}^{s}$ and $x_{v}^{t}$ must have different labels, since the data may not provide any helpful information for estimating the similarity between $x_{v}^{s}$ and $x_{v}^{t}$. We can assign a
label to $x_{v}^{t}$ according to
$P\big(y(x_{v}^{t})=y(x_{v}^{s})|x_{v}^{t},x_{v}^{s}\big)$ and the label of
$x_{v}^{s}$. For two labeled examples $x_{v}^{w}$ and $x_{v}^{q}$, if they have the
same label, we set $P\big(y(x_{v}^{w})=y(x_{v}^{q})|x_{v}^{w},x_{v}^{q}\big)=1$ and
otherwise $P\big(y(x_{v}^{w})=y(x_{v}^{q})|x_{v}^{w},x_{v}^{q}\big)=0$. Let each entry
$P_{v}^{ij}$ of the matrix $P_{v}$ correspond to
$P\big(y(x_{v}^{i})=y(x_{v}^{j})|x_{v}^{i},x_{v}^{j}\big)$ ($1\leq i,j \leq l+|U|$) and
$f_{v} =\left[\begin{array}{c}
f_{v}^{L}\\
f_{v}^{U}
\end{array}\right] = \left[\begin{array}{c}
Y^{L}\\
0
\end{array}\right]$. Without loss of generality, $P_{v}$ can be normalized to a probabilistic transition matrix according to Equation~\ref{eq:normalize}.
\begin{equation}\label{eq:normalize}
    P_{v}^{ij} \leftarrow P_{v}^{ij} / \sum\nolimits_{t=1}^{l+|U|}P_{v}^{it}
\end{equation}
Then the labels can be propagated from labeled examples to unlabeled instances
according to the process \cite{ZhuThesis05}: 1) Propagate $f_{v}=P_{v}f_{v}$; 2) Clamp
the labeled data $f_{v}^{L}=Y^{L}$; 3) Repeat from step 1 until $f_{v}$ converges. The
labels of unlabeled instances in $U$ can be assigned according to $sign(f_{v}^{U})$.
For some unlabeled instance $x_{v}^{t}$ if $f_{v}^{t}=0$, it means that label
propagation on graph $P_{v}$ has no idea on $x_{v}^{t}$. So in each view the classifier can be viewed as label propagation from labeled examples to unlabeled instances on
graph $P_{v}$. The error rate
$\texttt{err}(f_{v}^{U})$, the accuracy $\texttt{acc}(f_{v}^{U})$ and the uncertainty $\bot(f_{v}^{U})$
of this graph-based approach can be counted on $U$ as $\texttt{acc}(f_{v}^{U})=P_{x_{v}^{t}\in
U}[f_{v}^{U}(x_{v}^{t})\cdot c_{v}(x_{v}^{t})>0]$, $\texttt{err}(f_{v}^{U})=P_{x_{v}^{t}\in
U}[f_{v}^{U}(x_{v}^{t})\cdot c_{v}(x_{v}^{t})<0]$ and $\bot(f_{v}^{U})=P_{x_{v}^{t}\in
U}[f_{v}^{U}(x_{v}^{t})=0]$. In one view, the labels can be propagated from initial
labeled examples to some unlabeled instances in $U$ and these newly labeled examples
can be added into the other view. Then the other view can propagate the labels of
initial labeled examples and these newly labeled examples to the remaining unlabeled
instances in $U$. This process can be repeated until the stopping condition is met.
Thus, co-training can be re-described as the combinative label
propagation process over two views in Algorithm~\ref{alg:graph-based-co-training}, where $S_{1}^{k}\oplus S_{2}^{k} =(S_{1}^{k}-S_{2}^{k})\cup (S_{2}^{k}-S_{1}^{k})$.

\begin{algorithm}[t]
\begin{algorithmic}
   \STATE {\bfseries Input:} Labeled data $L$, unlabeled data $U$ and probabilistic transition matrices $P_{1}$ and $P_{2}$.
   \STATE {\bfseries Output:} $f^{U}_{v}$ corresponding to $S_{v}^{k}$ ($v=1,2$).
   \STATE {\bfseries Initialize:} Perform label propagation from labeled data $L$ to unlabeled data $U$ on graph $P_{v}$ and get the labeled examples set $S^{0}_{v}$;
   \FOR{$k=0, 1, 2, \cdots$}
   \IF{$S_{1}^{k}\oplus S_{2}^{k}\neq\emptyset$}
   \STATE Perform label propagation from labeled examples set $S_{3-v}^{k}\cap (U-S_{v}^{k})$ to unlabeled data $U-S_{v}^{k}$ on graph $P_{v}$ and get the labeled examples set $T_{v}^{k}$;\\
   $S_{v}^{k+1}=S_{v}^{k} \cup T_{v}^{k}$;
   \ELSE
   \STATE \textbf{break};
   \ENDIF
   \ENDFOR
\end{algorithmic}
\caption{Graph-based description of co-training}
\label{alg:graph-based-co-training}
\end{algorithm}

Label propagation needs a graph which is represented by the matrix $P$. In this section, we focus on co-training with two graphs $P_{1}$ and $P_{2}$ constructed from the two views. How to construct a graph is an important issue studied in graph-based approaches \cite{JebaraWC09,MaierLH08} and is beyond the scope of this article.

\subsection{Co-Training with Perfect Graphs}
First, we assume that $P_{v}$ ($v=1,2$) is \textit{perfect graph},
i.e., if $P\big(y(x_{v}^{t})=y(x_{v}^{s})|x_{v}^{t},x_{v}^{s}\big)>0$, $x_{v}^{s}$ and
$x_{v}^{t}$ must have the same label. It means that the classifier is either ``confident
of labeling'' or ``having no idea''. Before showing the sufficient and necessary
condition for co-training with perfect graphs to succeed, we need Lemma~\ref{lemma:graph1} to
indicate the relationship between label propagation and connectivity.

\begin{lemma}\label{lemma:graph1}
Suppose that $P$ is perfect graph. Unlabeled instance $x^{t_{0}}$ can be labeled by label propagation on graph $P$ if and only if it can be connected with some labeled
example $x^{t_{r}}$ in graph $P$ through a path $R$ in the form of
$V_{R}=\{t_{0},t_{1}, \cdots, t_{r} \}$, where $P^{t_{\rho}t_{\rho+1}}>0$
$(\rho=0,\cdots,r-1)$.
\end{lemma}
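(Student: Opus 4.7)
The plan is to prove both directions of the biconditional separately, using the iterative definition of label propagation ($f_v \leftarrow P_v f_v$ followed by clamping $f_v^L = Y^L$ and repeating to convergence) together with the perfect-graph hypothesis.

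For the ``if'' direction, I would assume there is a path $V_R=\{t_0,t_1,\ldots,t_r\}$ with $P^{t_\rho t_{\rho+1}}>0$ and $t_r$ labeled. I would argue by induction on the distance from $t_r$ along the path: in one propagation step, the clamped value at $t_r$ contributes a strictly positive quantity $P^{t_{r-1}t_r}\cdot y(x^{t_r})$ into $f^{t_{r-1}}$. Because the graph is \emph{perfect}, every other summand $P^{t_{r-1}j}f^j$ either is zero (if $f^j=0$) or carries the same sign as $y(x^{t_r})$ (since $P^{t_{r-1}j}>0$ forces $x^j$ to share the label of $x^{t_{r-1}}$, hence of $x^{t_r}$). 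Thus no cancellation occurs, and $f^{t_{r-1}}$ becomes non-zero with the correct sign, and the clamping step cannot reset it because $t_{r-1}$ is unlabeled (or, if it is labeled, we are already done). Iterating this argument along the path, after at most $r$ rounds the value at $t_0$ becomes non-zero, so $x^{t_0}$ is labeled by propagation.

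For the ``only if'' direction, I would argue contrapositively: suppose no such path exists, i.e.\ $t_0$ lies in a connected component (with respect to the positive entries of $P$) that contains no labeled vertex. I would show by induction on the iteration count $k$ that $f^{t_0}_{(k)}=0$ throughout propagation. Initially $f^{t_0}_{(0)}=0$ since $t_0$ is unlabeled. At iteration $k$, $f^{t_0}_{(k)}=\sum_j P^{t_0 j}f^j_{(k-1)}$; any $j$ with $P^{t_0 j}>0$ is still in the same component as $t_0$ (and thus unlabeled and in a label-free component), so by the inductive hypothesis applied to each such $j$ in turn, $f^j_{(k-1)}=0$. The clamping step affects only labeled coordinates, so $f^{t_0}$ remains zero, and $x^{t_0}$ cannot be labeled.

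The main obstacle will be the ``if'' direction, specifically ruling out destructive interference. Because the normalized $P$ is a row-stochastic matrix and entries along various incoming edges could in principle carry opposite signs, a naive argument would only show that the value of $f^{t_0}$ is eventually non-zero in magnitude, not that it has a definite sign. The perfect-graph hypothesis is exactly what disposes of this: whenever $P^{ij}>0$, the two vertices share the same true label, so any propagated value at a neighbour of a vertex $v$ must already carry $v$'s label sign, ensuring additive rather than cancelling contributions. I would make this ``sign-consistency along any positive path'' precise as a short auxiliary observation before running the inductive argument above.
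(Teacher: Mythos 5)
Your proposal is correct and is essentially the paper's argument: the paper invokes the closed-form solution $f^{U_{\theta}}=(I-P^{U_{\theta}U_{\theta}})^{-1}P^{U_{\theta}L_{\theta}}Y^{L_{\theta}}$ and reads off that a vertex disconnected from all labeled examples gets $f^{t}=0$, while a connected one satisfies $|f^{t_{0}}|\geq\prod_{\rho=0}^{r-1}P^{t_{\rho}t_{\rho+1}}|y^{t_{r}}|$, perfectness being exactly what rules out cancellation --- the same sign-consistency observation you isolate as your auxiliary step. Your iterative induction along the path is just the unrolled form of that closed-form expression (the product of edge weights you accumulate after $r$ steps is precisely the paper's lower bound), so the two proofs coincide in substance.
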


\begin{proof}
It is well known \cite{ZhuThesis05} that the label propagation process has the
following closed form solution for each connected component in graph $P$.
\begin{equation}\label{eq:output1}
    f^{U_{\theta}}=(I-P^{U_{\theta}U_{\theta}})^{-1}P^{U_{\theta}L_{\theta}}Y^{L_{\theta}}.
\end{equation}
Here $U_{\theta}\cup L_{\theta}$ is a connected component $\pi_{\theta}$ in graph $P$,
where $U_{\theta}\subseteq U$ and $L_{\theta} \subseteq L$.

If an unlabeled instance $x^{t}$ cannot be connected with any labeled example, with
respect to Equation~\ref{eq:output1}, we know that $f^{t}=0$. If $x^{t_{0}}$ can be connected
with some labeled example $x^{t_{r}}$ through a path $R$ in the form of
$V_{R}=\{t_{0},t_{1}, \cdots, t_{r} \}$, considering that $P$ is a perfect graph we get
$|f^{t_{0}}|\geq\prod\nolimits_{\rho=0}^{r-1}P^{t_{\rho}t_{\rho+1}}|y^{t_{r}}|$. Thus,
$x^{t_{0}}$ can be labeled with label $sign(f^{t_{0}})$ by label propagation.
\end{proof}

Lemma~\ref{lemma:graph1} indicates that when every unlabeled instance can be connected with some labeled example through a path in perfect graph $P$, label propagation on graph $P$ is successful. Now we give Theorem~\ref{theorem:graph1}.

\begin{theorem}\label{theorem:graph1}
Suppose $P_{v}$ $(v=1,2)$ is perfect graph. $f_{v}^{U}({x_{v}^{t}})\cdot
c_{v}(x_{v}^{t})>0$ for all unlabeled instance $x^{t}\in U$ ($t=l+1,\cdots,l+|U|$) if and
only if $S_{1}^{k}\oplus S_{2}^{k}$ is not $\emptyset$ in Algorithm~\ref{alg:graph-based-co-training} until
$S_{v}^{k}=L\cup U$.
\end{theorem}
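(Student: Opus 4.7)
The plan is to prove the biconditional by attacking each direction separately, using Lemma~\ref{lemma:graph1} (connectivity characterization of label propagation) together with the defining property of perfect graphs (edges $P_v^{ij}>0$ force $y(x_v^i)=y(x_v^j)$). Throughout the argument, the key invariant I would track is: \emph{every labeled example ever added to $S_v^k$ carries its true label}. This invariant follows by induction on $k$: the base case $S_v^0$ comes from propagating on the perfect graph $P_v$ from the originally-correct seed $L$, so by Lemma~\ref{lemma:graph1} every instance in $S_v^0 \setminus L$ lies on a path to a seed with $P_v^{t_\rho t_{\rho+1}}>0$, and perfection forces the propagated label to match the true one. The inductive step is identical, since each new round only uses labeled examples that the other (perfect) view has certified.

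For the sufficiency direction ($\Leftarrow$), assume $S_1^k \oplus S_2^k \neq \emptyset$ until $S_v^k = L \cup U$. The invariant above ensures every labeled instance in $S_v^k$ has the correct label, so when the termination condition $S_v^k = L\cup U$ is finally reached we have $f_v^U(x_v^t)\cdot c_v(x_v^t)>0$ for every unlabeled $x^t \in U$. The fact that the loop \emph{does} terminate with $S_v^k=L\cup U$ (rather than stalling) is exactly the hypothesis, so no instance is left with $f_v^U(x_v^t)=0$.

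For the necessity direction ($\Rightarrow$), I would argue by contrapositive. Suppose there is some first iteration $k_0$ at which $S_1^{k_0}\oplus S_2^{k_0}=\emptyset$ while $S_v^{k_0}\neq L\cup U$. Then Algorithm~\ref{alg:graph-based-co-training} executes \textbf{break} at step $k_0$, and for every $x^t$ with $x_v^t\notin S_v^{k_0}$ we have $f_v^U(x_v^t)=0$, which violates $f_v^U(x_v^t)\cdot c_v(x_v^t)>0$. Equivalently, $S_1^{k_0}=S_2^{k_0}$ means neither view contributes any instance the other does not already have, so propagation launched from $S_{3-v}^{k_0}\cap(U-S_v^{k_0})=\emptyset$ cannot label any new instance; by Lemma~\ref{lemma:graph1}, those unlabeled instances sit in graph components disconnected from the current seeds in both views simultaneously.

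The only delicate point, and the step where I would spend the most care, is ensuring that the argument for $(\Leftarrow)$ is consistent with perfect graphs that may be \emph{sparse}: I would emphasize that ``$S_v^k = L\cup U$ is eventually reached'' is a finite-process statement (the sequence $|S_v^k|$ is non-decreasing in $k$ and bounded by $|L|+|U|$, with $S_1^k\oplus S_2^k\neq\emptyset$ forcing a strict increase in at least one of $|S_1^{k+1}|,|S_2^{k+1}|$ at every round through the label-propagation step on $P_v$ from $S_{3-v}^k\cap(U-S_v^k)$). Combining this termination argument with the invariant then closes the proof in both directions.
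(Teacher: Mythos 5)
Your proof is correct and follows essentially the same route as the paper's: both directions rest on Lemma~\ref{lemma:graph1} together with the perfect-graph property, and your cardinality-based termination argument is interchangeable with the paper's count of at most $\lambda_{1}+\lambda_{2}$ connected-component activations. Your necessity direction is in fact slightly more direct than the paper's (which detours through a connectivity contradiction), since stalling with $S_{v}^{k_{0}}\neq L\cup U$ immediately leaves $f_{v}^{U}(x_{v}^{t})=0$ on the unreached instances, and your explicit label-correctness invariant just makes precise what the paper compresses into ``considering that $P_{v}$ in each view is perfect graph.''
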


\begin{proof}
Here we give a proof by contradiction. Suppose that for any unlabeled instance
$x^{t}\in U$ ($t=l+1,\cdots,l+|U|$), $f_{v}^{U}({x_{v}^{t}})\cdot c_{v}(x_{v}^{t})>0$.
From Lemma~\ref{lemma:graph1} and the process in Algorithm~\ref{alg:graph-based-co-training} we know that for any unlabeled
instance $x^{t_{0}}\in U$, $x^{t_{0}}$ can be connected with some labeled example
$x^{t_{r}}\in L$ through a path $R$ in the form of $V_{R}=\{t_{0},t_{1},\cdots,
t_{r}\}$, where $P_{1}^{t_{\rho}t_{\rho+1}}>0$ or $P_{2}^{t_{\rho}t_{\rho+1}}>0$
$(\rho=0,\cdots,r-1)$. If $S_{1}^{k}\oplus S_{2}^{k}=\emptyset$ while $S_{v}^{k}\neq
L\cup U$, there must exist some unlabeled instances in $U-S_{v}^{k}$. Considering
that $S_{v}^{k}$ are obtained by label propagation on graph $P_{v}$, so from
Lemma~\ref{lemma:graph1} we know that for any unlabeled instance $x^{h}\in U -S_{v}^{k}$,
there is no path between $x^{h}$ and any labeled example $x^{d}\in S_{v}^{k}$ in graph
$P_{v}$, i.e., $P_{v}^{hd}=0$. It is in contradiction with that any unlabeled instance
in $U$ can be connected with some labeled example in $L$ through a path $R$. Therefore,
if $f_{v}^{U}({x_{v}^{t}})\cdot c_{v}(x_{v}^{t})>0$ for all unlabeled instance $x^{t}$,
$S_{1}^{k}\oplus S_{2}^{k}$ is not $\emptyset$ until $S_{v}^{k}=L\cup U$.

Suppose the graph $P_{v}$ contains $\lambda_{v}$ connected components. If one example
in some connected component is labeled, from Lemma~\ref{lemma:graph1} we know that all
unlabeled instances in this connected component can be labeled by label propagation. If
$S_{1}^{k}\oplus S_{2}^{k}$ is not $\emptyset$ until $S_{v}^{k}=L\cup U$, in the $k$-th
iteration of Algorithm~\ref{alg:graph-based-co-training}, the unlabeled instances in at least one connected component of either $P_{1}$ or $P_{2}$ will be labeled by label propagation. Thus, after at most $\lambda_{1}+\lambda_{2}$ iterations all unlabeled instances in $U$ can be
assigned with labels by the process in Algorithm~\ref{alg:graph-based-co-training}. Considering that $P_{v}$ in each view is perfect graph, we get that for any unlabeled instance $x^{t}\in U$,
$f_{v}^{U}({x_{v}^{t}})\cdot c_{v}(x_{v}^{t})>0$.
\end{proof}

\noindent \textbf{Remark:} Theorem~\ref{theorem:graph1} provides the sufficient and necessary condition for co-training
with perfect graphs to succeed. With this theorem, for tasks with two views, if two perfect graphs can be constructed from the two views, we can decide whether co-training will be successful.

\subsection{Co-Training with Non-perfect Graphs}
In many real-world applications, it is generally hard to construct a perfect graph. We will discuss the case when the perfect graph assumption is waived in this section.

In label propagation on \textit{non-perfect graph}, an unlabeled instance may be connected with labeled examples belonging to different classes. As discussed in the proof of Lemma~\ref{lemma:graph1}, the label propagation for each connected component $\pi_{\theta}$ in graph $P$ has the closed form of
$f^{U_{\theta}}=(I-P^{U_{\theta}U_{\theta}})^{-1}P^{U_{\theta}L_{\theta}}
Y^{L_{\theta}}$.
Let $A=(I-P^{U_{\theta}U_{\theta}})^{-1}$, we can get Equation~\ref{eq:output2} from Equation~\ref{eq:output1}.
\begin{eqnarray}
\label{eq:output2}
    f^{t}=\sum\nolimits_{s\in L_{\theta}} \sum\nolimits_{j \in U_{\theta}} A^{tj}P^{js}Y^{s}
    ~~~~(t\in U_{\theta})
\end{eqnarray}
From Equation~\ref{eq:output2} we know that in each connected component the contribution of the
labeled example $x^{s}$ $(s \in L_{\theta})$ to the unlabeled instance $x^{t}$ $(t\in U_{\theta})$ is $\sum_{j \in U_{\theta}}A^{tj}P^{js}$. Now we define the \textit{positive contribution} and \textit{negative contribution} to an unlabeled instance.

\begin{definition}\label{def:contribution}
Let $L_{\theta}$ denote the labeled examples and $U_{\theta}$ denote the unlabeled instances belonging to the connected component $\pi_{\theta}$ in graph $P$. For an
unlabeled instance $x^{t}$ $(t\in U_{\theta})$, the positive contribution to $x^{t}$ is
\begin{eqnarray}\label{eq:positive}
 \sum\nolimits_{Y^{s}=y^{t}} \sum\nolimits_{j \in U_{\theta}} A^{tj}P^{js}|Y^{s}|
\end{eqnarray}
and the negative contribution to $x^{t}$ is
\begin{eqnarray}\label{eq:negative}
 \sum\nolimits_{Y^{s}\neq y^{t}} \sum\nolimits_{j \in U_{\theta}} A^{tj}P^{js}|Y^{s}|.
\end{eqnarray}
\end{definition}
If the positive contribution is larger than the negative contribution, the unlabeled instance $x^{t}$ will be labeled correctly by label propagation \footnote{We neglect the probability mass on the instances for which the non-zero positive contribution is
equal to the non-zero negative contribution in this article.}. Now we give
Theorem~\ref{theorem:graph2} for co-training with non-perfect graphs.

\begin{theorem}\label{theorem:graph2}
Suppose $P_{v}$ $(v=1,2)$ is non-perfect graph. $f_{v}^{U}({x_{v}^{t}})\cdot
c_{v}(x_{v}^{t})>0$ for all unlabeled instance $x^{t}\in U$ ($t=l+1,\cdots,l+|U|$) if and
only if both (1) and (2) hold in Algorithm~\ref{alg:graph-based-co-training}: (1) $S_{1}^{k}\oplus S_{2}^{k}$ is not
$\emptyset$ until $S_{v}^{k}=L\cup U$; (2) For any unlabeled instance in the connected
component $\pi^{\theta_{k}}_{v}$, where $\pi^{\theta_{k}}_{v}\subseteq (U-S_{v}^{k})$
and $\pi^{\theta_{k}}_{v} \cap S^{k}_{3-v} \neq \emptyset$, its positive contribution
is larger than its negative contribution.
\end{theorem}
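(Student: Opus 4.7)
The plan is to generalize the proof of Theorem~\ref{theorem:graph1} by replacing the ``connectivity suffices'' argument (valid for perfect graphs) with the contribution comparison of Definition~\ref{def:contribution}. The backbone is the closed-form solution $f^{U_\theta}=(I-P^{U_\theta U_\theta})^{-1}P^{U_\theta L_\theta}Y^{L_\theta}$ that was invoked in Lemma~\ref{lemma:graph1}, and I will apply it component by component, round by round.

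For the ``if'' (sufficiency) direction, I would argue by induction on the iteration index $k$ of Algorithm~\ref{alg:graph-based-co-training}. The inductive hypothesis is that every pseudo-label produced so far in $S_1^k\cup S_2^k$ agrees with the target concept $c_v$. In the $k$-th iteration, view $v$ performs label propagation on $P_v$ restricted to $U-S_v^k$, using as seeds the set $S_{3-v}^k\cap(U-S_v^k)$, which by the inductive hypothesis carries correct labels. Within each connected component $\pi_v^{\theta_k}\subseteq U-S_v^k$ that contains at least one such seed, Equation~\ref{eq:output2} expresses the propagated value at every $x^t\in\pi_v^{\theta_k}$ as the difference of its positive and negative contributions; condition~(2) then forces $f_v^{U}(x_v^t)\cdot c_v(x_v^t)>0$, so all newly added examples $T_v^k$ are correctly labeled and the inductive hypothesis is preserved. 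Condition~(1) ensures $S_1^k\oplus S_2^k\neq\emptyset$ until $S_v^k=L\cup U$, so each round strictly enlarges $S_v^k$ and after finitely many rounds every unlabeled instance is covered and correctly labeled.

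For the ``only if'' (necessity) direction, I would argue by contraposition on each condition separately. If~(1) fails, i.e.\ $S_1^k\oplus S_2^k=\emptyset$ at some stage with $S_v^k\subsetneq L\cup U$, then by Lemma~\ref{lemma:graph1} applied to the residual graph $P_v$, any unlabeled instance in $U-S_v^k$ sits in a connected component of $P_v$ that contains no example from $S_v^k$; by the closed-form solution its propagated value equals $0$, so $f_v^{U}(x_v^t)\cdot c_v(x_v^t)=0$, contradicting the hypothesis. If~(2) fails at some round $k$, then there is a connected component $\pi_v^{\theta_k}$ containing a seed from $S_{3-v}^k$ and an unlabeled instance $x_v^t$ whose negative contribution exceeds its positive one; by Equation~\ref{eq:output2} the sign of $f_v^{U}(x_v^t)$ is opposite to $c_v(x_v^t)$, again a contradiction. (Instances with equal non-zero positive and negative contributions are excluded by the footnote accompanying Definition~\ref{def:contribution}.)

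The main obstacle is the iterative nature of the argument: the ``labeled'' set used by view $v$ at round $k$ consists not only of the original $L$ but also of pseudo-labels minted in earlier rounds by the other view, so the correctness of the contribution comparison in condition~(2) is only meaningful if those pseudo-labels are themselves correct. Handling this cleanly requires the induction in the sufficiency direction to carry the joint invariant that (a) $S_v^k$ grows strictly and (b) every example it contains carries the true label, so that when condition~(2) is invoked at round $k+1$ it refers to genuine labels $Y^s=c_v(x_v^s)$ in Equations~\ref{eq:positive} and \ref{eq:negative}. Once this bookkeeping is set up, the rest of the argument is a direct component-wise application of the label-propagation closed form together with Lemma~\ref{lemma:graph1}.
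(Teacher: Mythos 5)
Your proposal is correct and follows essentially the same route as the paper: necessity by contradiction on each condition separately (failure of (1) gives $f_{v}^{U}(x_{v}^{t})=0$, failure of (2) gives the wrong sign via Equation~\ref{eq:output2}), and sufficiency by iterating the contribution comparison of Definition~\ref{def:contribution} round by round. The only difference is one of care: the paper dismisses the sufficiency direction with ``it is easy to get,'' whereas you make the round-by-round induction and the invariant on the pseudo-labels explicit, which is a welcome tightening rather than a different argument.
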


\begin{proof}
Here we give a proof by contradiction. Suppose for any unlabeled instance $x^{t}\in U$,
$f_{v}^{U}({x_{v}^{t}})\cdot c_{v}(x_{v}^{t})>0$. If $S_{1}^{k}\oplus S_{2}^{k}$ is
equal to $\emptyset$ while $S_{v}^{k}\neq L\cup U$, for any unlabeled instance
$x=\langle x_{1}, x_{2}\rangle$ in $U-S_{v}^{k}$, $f_{v}^{U}(x_{v})=0$. It is in contradiction with
$f_{v}^{U}(x_{v})\cdot c_{v}(x_{v}) >0$. If for some unlabeled instance $x$ in the
connected component $\pi^{\theta_{k}}_{v}$, where $\pi^{\theta_{k}}_{v}\subseteq
(U-S_{v}^{k})$ and $\pi^{\theta_{k}}_{v} \cap S^{k}_{3-v} \neq \emptyset$, its positive
contribution is no larger than negative contribution, $f_{v}^{U}(x_{v})\cdot
c_{v}(x_{v})\leq 0$. It is also in contradiction with $f_{v}^{U}(x_{v})\cdot
c_{v}(x_{v})>0$.

If the conditions (1) and (2) hold, with Definition~\ref{def:contribution} it is easy to get
that for any unlabeled instance $x^{t}\in U$, $f_{v}^{U}({x_{v}^{t}})\cdot
c_{v}(x_{v}^{t})>0$.
\end{proof}

\noindent \textbf{Remark:} Theorem~\ref{theorem:graph2} provides the sufficient and necessary condition for co-training with non-perfect graphs to succeed. Note that in both Theorem~\ref{theorem:graph1} and
Theorem~\ref{theorem:graph2}, it is the necessary condition that $S_{1}^{k}\oplus S_{2}^{k}$ is not $\emptyset$ until $S_{v}^{k}=L\cup U$ $(v=1,2)$. In the following part we will further study what this necessary condition means and how to verify it before co-training.

First, we introduce the combinative graph $P_{c}$ in Equation~\ref{eq:eq6} which aggregates
graphs $P_{1}$ and $P_{2}$.
\begin{eqnarray}\label{eq:eq6}
  P_{c}^{ij}=\max[P_{1}^{ij}, P_{2}^{ij}]
\end{eqnarray}
Then we give Theorem~\ref{theorem:graph3} to indicate the necessary condition, i.e., each unlabeled instance can be connected with some labeled example in graph $P_c$.

\begin{theorem}\label{theorem:graph3}
$S_{1}^{k}\oplus S_{2}^{k}$ is not $\emptyset$ in Algorithm~\ref{alg:graph-based-co-training} until $S_{v}^{k} =L\cup U$
$(v=1,2)$ if and only if each unlabeled instance $x^{t_{0}}\in U$ can be connected with some labeled example $x^{t_{r}}\in L$ in graph $P_{c}$ through a path $R_{c}$ in the
form of $V_{R_{c}}=\{t_{0},t_{1}, \cdots, t_{r} \}$, where
$P_{c}^{t_{\rho}t_{\rho+1}}>0$ $(\rho=0,\cdots,r-1)$.
\end{theorem}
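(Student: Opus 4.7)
The plan is to reduce the stated condition to a purely structural property of the limit sets $S_v^{\infty}=\bigcup_k S_v^k$ ($v=1,2$), and then prove the equivalence of this limit property with path-connectivity in $P_c$.

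First I would establish the reduction. Because $S_v^k$ is monotone non-decreasing in the finite set $L\cup U$, it stabilizes. At stabilization, $T_v^k=\emptyset$, which forces the seed set $S_{3-v}^k\cap(U-S_v^k)=\emptyset$, since any nonempty seed at least propagates itself into $T_v^k$. Doing this for both $v$ yields $S_{3-v}^\infty\subseteq S_v^\infty$ both ways, so $S_1^\infty=S_2^\infty$. From this it follows that the condition ``$S_1^k\oplus S_2^k\neq\emptyset$ until $S_v^k=L\cup U$'' is equivalent to $S_v^\infty=L\cup U$: if the algorithm stabilizes at some $k$ with $S_v^k\subsetneq L\cup U$, then $S_1^k=S_2^k$ at that point and the ``until'' clause fails; conversely, if $S_v^\infty=L\cup U$, every earlier $S_v^k\subsetneq L\cup U$ must still satisfy $S_1^k\neq S_2^k$, or the algorithm would have stabilized prematurely.

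For the $(\Rightarrow)$ direction, assuming $S_v^\infty=L\cup U$, I would induct on the iteration at which a node first enters $S_v^k$ to show it is connected in $P_c$ to some element of $L$. The base case $x\in S_v^0$ places $x$ in the same $P_v$-component as some $l\in L$, giving a $P_v$-path, hence a $P_c$-path, from $x$ to $l$. In the step, if $x$ first enters $S_v^{k+1}$ via propagation on $P_v$ seeded by some $s\in S_{3-v}^k\cap(U-S_v^k)$, then $x$ and $s$ lie in a common $P_v$-component, and by the inductive hypothesis $s$ is already $P_c$-connected to $L$; concatenating the two paths gives the desired path for $x$.

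For the $(\Leftarrow)$ direction, assuming every $x^{t_0}\in U$ admits a path $x^{t_0},x^{t_1},\ldots,x^{t_r}$ in $P_c$ with $x^{t_r}\in L$, I would prove $x^{t_0}\in S_v^\infty$ by reverse induction along this path. The base $x^{t_r}\in L\subseteq S_v^0$ is immediate. For the step, given $x^{t_{j+1}}\in S_v^\infty$, the edge $P_c^{t_j t_{j+1}}>0$ comes from some $P_w$ with $w\in\{1,2\}$; since $S_1^\infty=S_2^\infty$, we have $x^{t_{j+1}}\in S_w^\infty$. I then trace back to the first iteration $k^\star$ at which $x^{t_{j+1}}$ entered $S_w$, and observe that the propagation on $P_w$ which placed it there also labels every node in its $P_w$-component (or $x^{t_j}$ was already inside $S_w^{k^\star-1}$), so in either case $x^{t_j}\in S_w^{k^\star}\subseteq S_v^\infty$.

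The main obstacle I expect is handling the asymmetry between the two views cleanly: an edge along a $P_c$-path may live in $P_1$ while the inductive hypothesis furnishes membership in $S_2^\infty$. The remedy is the equality $S_1^\infty=S_2^\infty$ established at the outset, which lets me freely transfer membership between views and reduces each inductive step to single-view $P_w$-connectedness, which is exactly what drives propagation. A secondary care is making the definition of $T_v^k$ precise, specifically that a seed contributes itself to the propagation output, which is what forces $S_v^k$ to grow strictly whenever the symmetric difference is nonempty.
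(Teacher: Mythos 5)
Your proposal is correct and follows essentially the same route as the paper's proof: the forward direction traces how each instance enters some $S_v^k$ and stitches the corresponding $P_1$/$P_2$ propagation paths into a $P_c$-path, and the backward direction walks the given $P_c$-path edge by edge, showing each node gets labeled by propagation on whichever view supplies the edge. Your explicit reduction via $S_1^{\infty}=S_2^{\infty}$ and the induction on first-entry time merely make rigorous what the paper leaves informal.
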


\begin{proof}
If we neglect the probability mass on the instances for which the non-zero positive contribution is equal to the non-zero negative contribution in this article, similarly to the proof of Lemma~\ref{lemma:graph1} we get that: unlabeled instance can be labeled by label propagation on graph $P$ if and only if it can be connected with some labeled example
in graph $P$ through a path.

If $S_{1}^{k}\oplus S_{2}^{k}$ is not $\emptyset$ until $S_{v}^{k} =L\cup U$, any
unlabeled instance $x^{t}\in U$ can be labeled by the process in Algorithm~\ref{alg:graph-based-co-training}. So $x^{t}$
must belong to one of $S_{1}^{0}$, $S_{2}^{0}$, $T_{1}^{k}$ or $T_{2}^{k}$ for some $k\geq 0$. Considering Equation~\ref{eq:eq6}, the above discussions and the fact that $S_{1}^{0}$, $S_{2}^{0}$, $T_{1}^{k}$ and $T_{2}^{k}$ have been obtained in previous iterations by label propagation and will be used as labeled examples in next iteration, we can get that $x^{t_{0}}$ can be connected with some labeled example $x^{t_{r}}\in L$ in graph $P_{c}$ through a path $R_{c}$.

If each unlabeled instance $x^{t_{0}}\in U$ can be connected with some labeled example $x^{t_{r}}\in L$ through a path $R_{c}$, with respect to Equation~\ref{eq:eq6}, we can get that either $P_{1}^{t_{\rho}t_{\rho+1}}$ or $P_{2}^{t_{\rho}t_{\rho+1}}$ is larger than $0$ for $\rho=0,\cdots,r-1$. Because $x^{t_{r}}$ is a labeled example, with the above discussions and the process in Algorithm~\ref{alg:graph-based-co-training} we know that $x^{t_{\rho}}$ ($\rho=0,\cdots,r-1$) can be labeled by label propagation on either $P_{1}$ or $P_{2}$. Therefore, finally $S_{v}^{k}=L \cup U$.
\end{proof}

\subsection{Co-Training with $\epsilon$-Good Graphs}

It is overly optimistic to expect to learn the target concept using
co-training with non-perfect graphs. While learning the approximately correct concept
using co-training with approximately perfect graphs is more reasonable. In perfect
graph, all edges between the instances are reliable; while in non-perfect graph, it is
hard to know which and how many edges are reliable. Restricting the reliability and
allowing an $\epsilon$-fraction exception is more feasible in real-world applications. In
this section, we focus on the approximately perfect graph and provide sufficient
condition for co-training the approximately correct concept.

Let $\pi^{1}_{v},\cdots, \pi^{\lambda_{v}}_{v}$ $(v=1,2)$ denote the $\lambda_{v}$
connected components in graph $P_{v}$, the definitions of \textit{purity} and
\textit{$\epsilon$-good graph} are given as follows.
\begin{definition}\label{def:definition1}
Let $pur(\pi^{\theta}_{v})$ denote the purity of the connected component
$\pi^{\theta}_{v}$ in graph $P_{v}$, then
\begin{eqnarray}\label{purity}
\nonumber
    pur(\pi^{\theta}_{v})=\max\Big[{|\{x_{v}:x_{v}\in \pi^{\theta}_{v}\wedge c_{v}(x_{v})=1
    \}|}/{|\pi^{\theta}_{v}|},\\
\nonumber
    {|\{x_{v}:x_{v}\in \pi^{\theta}_{v}\wedge c_{v}(x_{v})=-1 \}|}/{|\pi^{\theta}_{v}|}\Big]
\end{eqnarray}
If $pur(\pi^{\theta}_{v})\geq 1-\epsilon$ for all $1 \leq \theta \leq \lambda_{v}$, we
say that $P_{v}$ is an $\epsilon$-good graph.
\end{definition}
The purity of the connected component reflects the reliability of the graph. With the purity, we can define the
label of $\pi^{\theta}_{v}$ as $c_{v}(\pi^{\theta}_{v})$.
\begin{eqnarray}
\nonumber
  c_{v}(\pi^{\theta}_{v}) = \left\{ \begin{array}{ll} 1 & \textrm{if
$|\{x_{v}:x_{v}\in
\pi^{\theta}_{v}\wedge c_{v}(x_{v})=1\}|> |\{x_{v}:x_{v}\in \pi^{\theta}_{v}\wedge c_{v}(x_{v})=-1\}|$} \\
-1 & \textrm{otherwise}
\end{array} \right.
\end{eqnarray}
With $\epsilon$-good graph, predicting the labels of all $\pi^{\theta}_{v}$ correctly
is sufficient to get a classifier whose error rate is no larger than $\epsilon$. From
Definition~\ref{def:contribution} we know that the \textit{contribution} is related to the
number of labeled examples in the connected component. In a connected component, if the
labeled examples with label $y$ $(y\in\{-1,1\})$ is much more than that
with label $-y$, the unlabeled instances belonging to this connected component may be
labeled with $y$. Based on this, we assume graph $P_{v}$ satisfies the following
condition: in the connected component $\pi^{\theta_{k}}_{v}$ of graph $P_{v}$ where
$\pi^{\theta_{k}}_{v}\subseteq (U-S_{v}^{k})$ and $\pi^{\theta_{k}}_{v} \cap
S^{k}_{3-v}\neq \emptyset$, let $f^{k}_{v}$ correspond to
$S_{v}^{k}$, if $|\{x^{t}: x^{t}\in \pi^{\theta_{k}}_{v} \cap S^{k}_{3-v}\wedge
f^{k}_{3-v}({x^{t}})\cdot y>0\}|/|\pi^{\theta_{k}}_{v}|
> |\{x^{t}: x^{t}\in \pi^{\theta_{k}}_{v} \cap S^{k}_{3-v}\wedge
f^{k}_{3-v}({x^{t}})\cdot y<0\}|/|\pi^{\theta_{k}}_{v}|+\gamma$, the unlabeled
instances belonging to $\pi^{\theta_{k}}_{v}$ can be labeled with $y$ by label
propagation on graph $P_{v}$. Here $\gamma \in [0,1)$ can be thought of as a form of
\textit{margin} which controls the confidence in label propagation. With this
assumption, we get Theorem~\ref{theorem:graph4} which provides a margin-like sufficient
condition for co-training the approximately correct concept with $\epsilon$-good
graphs.

\begin{theorem}\label{theorem:graph4}
Suppose $P_{v}$ $(v=1,2)$ is $\epsilon$-good graph. $acc(f_{v}^{U})\geq 1-\epsilon$ if
both (1) and (2) hold in Algorithm~\ref{alg:graph-based-co-training}: (1) $S_{1}^{k}\oplus S_{2}^{k}$ is not $\emptyset$
until $S_{v}^{k}=L\cup U$; (2) In the connected component $\pi^{\theta_{k}}_{v}$, where
$\pi^{\theta_{k}}_{v}\subseteq (U-S_{v}^{k})$ and $\pi^{\theta_{k}}_{v} \cap
S^{k}_{3-v}\neq \emptyset$, $|\{x^{t}: x^{t}\in \pi^{\theta_{k}}_{v} \cap S^{k}_{3-v}
\wedge f^{k}_{3-v}({x^{t}})\cdot
c_{v}(\pi^{\theta_{k}}_{v})>0\}|/|\pi^{\theta_{k}}_{v}|
> |\{x^{t}: x^{t}\in \pi^{\theta_{k}}_{v} \cap S^{k}_{3-v} \wedge
f^{k}_{3-v}({x^{t}})\cdot
c_{v}(\pi^{\theta_{k}}_{v})<0\}|/|\pi^{\theta_{k}}_{v}|+\gamma$.
\end{theorem}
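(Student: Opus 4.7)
The plan is to combine condition (1), which guarantees every unlabeled instance eventually receives a label, with condition (2) and the stated margin-based label-propagation assumption to show that each connected component of $P_v$ is labeled according to its majority label $c_v(\pi^{\theta}_v)$. Once this holds for every component, the $\epsilon$-goodness of $P_v$ (i.e. purity $\geq 1-\epsilon$ per component) immediately yields $\texttt{acc}(f_v^U)\geq 1-\epsilon$, since the only instances mislabeled in component $\pi^{\theta}_v$ are its minority-class members, which comprise at most an $\epsilon$-fraction.

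First I would observe, as in the argument behind Theorem~\ref{theorem:graph2} and Theorem~\ref{theorem:graph3}, that condition (1) forces Algorithm~\ref{alg:graph-based-co-training} to terminate with $S_v^k=L\cup U$, so no instance is left with $f_v^U(x_v)=0$; hence it suffices to control the \emph{polarity} of $f_v^U$ on every instance. This reduces the accuracy claim to showing that every component of $P_v$ is ultimately assigned label $c_v(\pi^{\theta}_v)$.

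The core step is an induction on the round index $k$. For the base case $k=0$, label propagation starts from $L$, whose labels come from the true target concept; in any component containing an initial labeled example, the margin assumption together with condition (2) (applied with $S_{3-v}^{0}=L$, whose labels are exact) places the polarity of $f_v^0$ on the component equal to $c_v(\pi^{\theta_0}_v)$. For the inductive step, assume that for all $k'<k$ and both views, every newly propagated component has been labeled by its majority label. Any component $\pi^{\theta_k}_v\subseteq U-S_v^k$ with $\pi^{\theta_k}_v\cap S_{3-v}^k\neq\emptyset$ receives its pseudo-labels from $f_{3-v}^k$; by the inductive hypothesis these pseudo-labels agree with $c_{3-v}(\pi^{\theta}_{3-v})$ on the components of $P_{3-v}$ they come from, and since for any paired example $c_1(x_1)=c_2(x_2)=y$, they agree with $c_v(\pi^{\theta_k}_v)$ on the majority of $\pi^{\theta_k}_v\cap S_{3-v}^k$. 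Condition (2) then guarantees a margin of $\gamma$ in the correct direction, and the stated margin-based propagation rule assigns label $c_v(\pi^{\theta_k}_v)$ to every instance in $\pi^{\theta_k}_v$, closing the induction.

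The main obstacle I anticipate is the inductive step: I need the pseudo-labels fed from view $3-v$ into a component of view $v$ to be \emph{polarized in the direction of that view's majority label}, not merely correct on average. The alignment $c_1(x_1)=c_2(x_2)$ is the bridge between the two views' majority labels, but one must be careful when a component of $P_v$ meets several already-labeled components of $P_{3-v}$ with different majority labels; condition (2) is precisely designed to ensure that, on the intersection, the correctly-aligned pseudo-labels outweigh the misaligned ones by at least $\gamma$, which is what makes the inductive propagation step work. Once this is handled, combining correct per-component labeling with the purity bound finishes the proof.
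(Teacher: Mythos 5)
Your proposal is correct, and it supplies an argument the paper itself omits: Theorem~\ref{theorem:graph4} is stated as an immediate consequence of the margin-based propagation assumption introduced just before it (``With this assumption, we get Theorem~\ref{theorem:graph4}''), with no written proof. Your overall structure --- condition (1) guarantees every instance is eventually labeled, condition (2) together with the margin assumption forces each newly reached component $\pi^{\theta_k}_{v}$ to be labeled with its majority label $c_{v}(\pi^{\theta_k}_{v})$, and $\epsilon$-goodness then bounds the mislabeled mass in each component by an $\epsilon$-fraction, hence $\texttt{acc}(f_{v}^{U})\geq 1-\epsilon$ --- is exactly the intended chain of reasoning. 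The one place where you do more than is needed is the induction on the round index: condition (2) is already a hypothesis about the \emph{actual} propagation values $f^{k}_{3-v}$ at every round $k$, stated relative to $c_{v}(\pi^{\theta_k}_{v})$, so the polarity of the incoming pseudo-labels that you try to derive from an inductive correctness hypothesis (via $c_{1}(x_{1})=c_{2}(x_{2})$) is simply assumed; one applies the margin assumption with $y=c_{v}(\pi^{\theta_k}_{v})$ and is done. Your own closing paragraph essentially concedes this --- you note that the alignment cannot be deduced from per-instance correctness alone (a correct pseudo-label on a minority instance of $\pi^{\theta_k}_{v}$ points against the component's majority) and that condition (2) is what saves the step --- so the induction is harmless but dispensable. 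The base case (components reached directly from $L$ at initialization) is not literally covered by condition (2) as written, but your treatment of it is at the same level of rigor as the paper's own setup.
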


\subsection{Relationship to Previous Results}
As mentioned in Section~\ref{sec:Introduction}, there are several theoretical analyses indicating that co-training can succeed if some conditions about the two views hold, i.e., \textit{conditional independence}, \textit{weak dependence} and \textit{$\alpha$-expansion}.
Co-training is a representative paradigm of disagreement-based approaches, and we provide a \textit{large disagreement} analysis for disagreement-based approaches in Section~\ref{sec:general-analysis} which is also applicable to co-training. Now we will discuss the relationship between the graph-based analysis and previous results on co-training.

\subsubsection{Conditional Independence}
Blum and Mitchell \cite{Blum:Mitchell1998} proved that when the two sufficient views are conditionally
independent to each other, co-training can be successful. The
\textit{conditional independence} means that for the connected components
$\pi^{\theta_{i}}_{1}$ of $P_{1}$ and $\pi^{\theta_{j}}_{2}$ of $P_{2}$,
$P(\pi^{\theta_{i}}_{1}\cap \pi^{\theta_{j}}_{2})
=P(\pi^{\theta_{i}}_{1})P(\pi^{\theta_{j}}_{2})$. Since $S_{v}^{k}$ ($v=1,2$) is the union of some connected components of $P_{v}$, we have $P(S_{1}^{k}\cap S_{2}^{k})
=P(S_{1}^{k})P(S_{2}^{k})$. It means that $P(S_{1}^{k}\oplus
S_{2}^{k})=P(S_{1}^{k})(1-P(S_{2}^{k}))+P(S_{2}^{k})(1-P(S_{1}^{k}))$, which implies that the condition (1) in Theorem~\ref{theorem:graph4} holds. In addition, Equations~\ref{eq:independent1} and \ref{eq:independent2} can be obtained for $\epsilon$-good graphs.
\begin{eqnarray}
\label{eq:independent1}
    P\big(\pi^{\theta_{k}}_{v} \cap S^{k}_{3-v} \wedge
        f^{k}_{3-v}({x^{t}})\cdot c_{v}(\pi^{\theta_{k}}_{v})>0\big)&\!\!\geq\!\!&
        P(\pi^{\theta_{k}}_{v})P(S^{k}_{3-v})(1-\epsilon)\\
\label{eq:independent2}
    P\big(\pi^{\theta_{k}}_{v} \cap S^{k}_{3-v} \wedge
f^{k}_{3-v}({x^{t}})\cdot c_{v}(\pi^{\theta_{k}}_{v})<0\big)
    &\!\!<\!\!&P(\pi^{\theta_{k}}_{v})P(S^{k}_{3-v})\epsilon
\end{eqnarray}
Thus, we get that the condition (2) in Theorem~\ref{theorem:graph4} holds with $\gamma=P(S^{k}_{3-v})(1-2\epsilon)$.

\subsubsection{Weak Dependence}
Abney \cite{Abney2002} found that \textit{weak dependence} can lead to successful
co-training. The \textit{weak dependence} means that for the connected components
$\pi^{\theta_{i}}_{1}$ of $P_{1}$ and $\pi^{\theta_{j}}_{2}$ of $P_{2}$,
$P(\pi^{\theta_{i}}_{1}\cap \pi^{\theta_{j}}_{2}) \leq \tau
P(\pi^{\theta_{i}}_{1})P(\pi^{\theta_{j}}_{2})$ for some $\tau>0$. It implies that the
number of examples in $S_{1}^{k}\oplus S_{2}^{k}$ is not very small. So the condition (1) in Theorem~\ref{theorem:graph4} holds. For $\epsilon$-good graphs, without loss of
generality, assume that $P(\pi^{\theta_{k}}_{v} \cap S^{k}_{3-v})
=\tau_{1}P(\pi^{\theta_{k}}_{v})P(S^{k}_{3-v})$ and that
\begin{eqnarray}
\label{eq:weakindependent1}
    P\big(\pi^{\theta_{k}}_{v} \cap S^{k}_{3-v} \wedge
    f^{k}_{3-v}({x^{t}})\cdot c_{v}(\pi^{\theta_{k}}_{v})<0\big) \leq\tau_{2}P(\pi^{\theta_{k}}_{v})P(S^{k}_{3-v})\epsilon
\end{eqnarray}
for some $\tau_{1}>0$ and $\tau_{2}>0$, we can have
\begin{eqnarray}
\nonumber
  P\big(\pi^{\theta_{k}}_{v} \cap S^{k}_{3-v} \wedge
f^{k}_{3-v}({x^{t}})\cdot c_{v}(\pi^{\theta_{k}}_{v})>0\big)
   &\!\!\geq\!\!& P\big(\pi^{\theta_{k}}_{v} \cap S^{k}_{3-v}\big)-\tau_{2}
P(\pi^{\theta_{k}}_{v})P(S^{k}_{3-v})\epsilon\\
\label{eq:weakindependent2}
   &\!\!=\!\!&P(\pi^{\theta_{k}}_{v})P(S^{k}_{3-v})(\tau_{1}-\tau_{2}\epsilon).
\end{eqnarray}
Thus, we get that the condition (2) in Theorem~\ref{theorem:graph4} holds with $\gamma=P(S^{k}_{3-v})(\tau_{1}-2\tau_{2}\epsilon)$.

\subsubsection{$\alpha$-Expansion}
Balcan et al. \cite{Balcan:Blum:Yang2005} proposed $\alpha$-expansion and proved that it can guarantee the success of co-training. They assumed that the classifier in each view is never ``confident but wrong", which corresponds to the case with perfect graphs in Theorem~\ref{theorem:graph1}. The \textit{$\alpha$-expansion} means that $S_{1}^{k}$ and
$S_{2}^{k}$ satisfy the condition that $P(S_{1}^{k}\oplus S_{2}^{k})\geq \alpha \min[P(S_{1}^{k}\cap S_{2}^{k}), P(\overline{S_{1}^{k}}\cap \overline{S_{2}^{k}})]$.
When $\alpha$-expansion holds, it is easy to know that the condition in
Theorem~\ref{theorem:graph1} holds. Note that $S_{1}^{k}\oplus S_{2}^{k}\neq \emptyset$ is
weaker than \textit{$\alpha$-expansion}, since $P(S_{1}^{k}\oplus S_{2}^{k})$ does not
need to have a lower bound with respect to some positive $\alpha$.

\subsubsection{Large Disagreement}
Our result in Section~\ref{sec:upper-bound} shows that when the classifiers have large disagreement, the performance can be improved. Since the classifiers may have both error and uncertainty with non-perfect graphs, it is complicated to define the disagreement. Therefore, we only discuss co-training with perfect graphs here. For perfect graphs, the classifiers are ``confident of labeling'', so the error rate is $0$. It implies that the condition in Theorem~\ref{theorem:graph1} holds (see Section~\ref{sec:conclusions} for more discussions about the results in Section~\ref{sec:upper-bound} and Section~\ref{sec:sufficient-necessary}).

\subsubsection{Other Implication and Discussions}
From the above discussions it can be found if any previous condition holds, our condition in the graph-based analysis also holds; this means that our result is more general and tighter. Furthermore, this graph-based analysis also has other interesting implication. There were some works which combine the weight matrices or Laplacians for
each graph and then classify unlabeled instances according to the combination
\cite{Sindhwani:Niyogi:Belkin2005,Argyriou2006,ZhangPD06,DengyongZhou2007}, the underlying principle is not clear. To some extent, Theorem~\ref{theorem:graph3} can provide some theoretical supports to these methods, i.e., these methods are developed to satisfy the necessary condition for co-training with graphs to succeed as much as possible. Note that, the graph-based analysis on co-training in this section only cares the two graphs rather than where these graphs come from, and therefore it is also applicable to single-view disagreement-based approaches when there is only one view but two graphs can be obtained in different distance matrices.

\section{Analysis on Co-Training with Insufficient Views}\label{sec:insufficient-co-training}
All previous theoretical analyses on co-training are based on the assumption that each view can provide sufficient information to learn the target concept. However, in many real-world applications, due to feature corruption or various feature noise, neither view can provide sufficient information to learn the target concept. There exist some examples $\big(\langle x_{1}, x_{2}\rangle, y\big)$, on which the posterior probability $P(y=+1|x_{v})$ or $P(y=-1|x_{v})$ $(v=1,2)$ is not equal to $1$ due to the insufficient information provided by $x_{v}$ for predicting the label. In this section, we will present the theoretical analysis on co-training with insufficient views which is much more challenging but practical, especially when the two views provide diverse information.

\subsection{View Insufficiency}\label{sec:insufficiency}
Let $\mathcal {X}=\mathcal {X}_{1}\times \mathcal {X}_{2}$ denote the instance space, where $\mathcal {X}_{1}$ and $\mathcal {X}_{2}$
are the two views. $\mathcal {Y}=\{-1, +1\}$ are
the label space, $L=\{(\langle x^{1}_{1}, x^{1}_{2}\rangle, y^{1}), \cdots, (\langle x^{l}_{1}, x^{l}_{2}\rangle, y^{l}) \} \subset {\mathcal {X} \times \mathcal {Y}}$ are the labeled data, $U=\{\langle x^{l+1}_{1}, x^{l+1}_{2}\rangle, \cdots, \langle x^{l+|U|}_{1}, x^{l+|U|}_{2}\rangle\} \subset \mathcal{X}$ is the unlabeled data. The labeled data $L$ independently and identically come from some unknown distribution $\mathcal {D}$, whose marginal distribution on $\mathcal {X}$ is $\mathcal {D}_{\mathcal {X}}$, and the unlabeled data $U$ independently and identically come from $\mathcal {D}_{\mathcal {X}}$. $c=(c_{1}, c_{2})$ is the target concept, where $c_{1}$ and $c_{2}$ are the target concept in the two views, respectively, i.e., $c_{1}(x_{1})=c_{2}(x_{2})=y$ for any $(\langle x_{1},x_{2}\rangle, y)$. Since neither view can provide sufficient information to learn the target concept, we may never achieve the target concept with the insufficient views. For an example $\big(\langle x_{1}, x_{2}\rangle, y\big)$, let
$\varphi_{v}(x_{v})=P(y=+1|x_{v})$. If $\varphi_{v}(x_{v})$ is $0$ or $1$, it implies that the features of $x_{v}$ provide sufficient information to correctly determine its label $y$; while if $\varphi_{v}(x_{v})=\frac{1}{2}$, it implies that the
features of $x_{v}$ provide no helpful information to correctly predict its label $y$. It is easy to understand that $\big|2\varphi_{v}(x_{v})-1\big|$ is a measurement of the information provided by $x_{v}$ for predicting its label $y$. Now we give the definition of view insufficiency.

\begin{definition}[View Insufficiency]\label{def:view-insufficiency}
Let $\mathcal {D}$ denote the unknown distribution
over $\mathcal {X}\times \mathcal {Y}$. For $(x,y)\in \mathcal {X}\times
\mathcal {Y}$, $\varphi(x)=P(y=+1|x)$. The insufficiency $\Upsilon(\mathcal
{X},\mathcal {Y},\mathcal {D})$ of view $\mathcal
{X}$ for the learning task with respect to the distribution $\mathcal {D}$ is defined as
\begin{eqnarray}\label{eq:view-insufficiency}
\nonumber
  \Upsilon(\mathcal {X}, \mathcal {Y}, \mathcal {D})=1-\int_{\mathcal {D}}|2\varphi(x)-1|P(x)dx.
\end{eqnarray}
\end{definition}
$\Upsilon(\mathcal {X}, \mathcal {Y}, \mathcal {D})\in[0,1]$ measures the insufficiency
of view $\mathcal {X}$ for correctly learning $\mathcal {Y}$ over the distribution
$\mathcal {D}$. When $|2\varphi(x)-1|=1$ for all examples, the view insufficiency $\Upsilon(\mathcal {X},\mathcal {Y},\mathcal {D})=0$, i.e., view $\mathcal {X}$ provides
sufficient information to correctly classify all examples; while
$\varphi(x)=\frac{1}{2}$ for all examples, the view insufficiency $\Upsilon(\mathcal
{X},\mathcal {Y},\mathcal {D})=1$, i.e., view $\mathcal {X}$ provides no information to
correctly classify any example. With Definition~\ref{def:view-insufficiency}, we let $\Upsilon_{v}=\Upsilon(\mathcal {X}_{v},\mathcal {Y},\mathcal {D})$ denote the insufficiency of view $\mathcal {X}_{v}$.

Let $\mathcal {F}_{v}$: $\mathcal{X}_{v} \to [-1, +1]$ denote the hypothesis space for learning the task with view $\mathcal {X}_{v}$ $(v=1,2)$ and $d_{v}$ denote the finite
VC-dimension of $\mathcal {F}_{v}$. The classification rule induced by a hypothesis $f_{v}\in \mathcal {F}_{v}$ on an instance $x=\langle x_{1},x_{2} \rangle$
is $sign(f_{v}(x_{v}))$. The error rate of a hypothesis $f_{v}$ with the
distribution $\mathcal {D}$ is $\texttt{err}(f_{v})=P_{(\langle x_{1},x_{2}\rangle,y)\in \mathcal {D}}(sign(f_{v}(x_{v}))\neq y)$ and let $\texttt{err}(F_{v})=\max_{f_{v}\in F_{v}}\texttt{err}(f_{v})$ for $F_{v}\subseteq \mathcal {F}_{v}$. Let $f^{*}_{v}(x_{v})=2\varphi_{v}(x_{v})-1$, $sign(f^{*}_{v}(x_{v}))=+1$ if $\varphi_{v}(x_{v})>\frac{1}{2}$ and $sign(f^{*}_{v}(x_{v}))=-1$ otherwise. Suppose $f^{*}_{v}$ belongs to $\mathcal {F}_{v}$, and it is well-known that \cite{Devroye1996} $f^{*}_{1}$ and $f^{*}_{2}$ are the optimal Bayes classifiers in the two views, respectively.
Generally, the two views may provide different information, i.e., there exist some instances $x=\langle x_{1}, x_{2}\rangle$ on which $P(y=+1|x_{1})$ is very different from $P(y=+1|x_{2})$. Thus, $f^{*}_{1}$ is not perfectly compatible with $f^{*}_{2}$ and
$d(f^{*}_{1}, f^{*}_{2})$ denotes the difference between $f^{*}_{1}$ and
$f^{*}_{2}$.
\begin{eqnarray}
\nonumber
  d(f^{*}_{1},f^{*}_{2})=P_{\langle x_{1}, x_{2}\rangle\in \mathcal {X} }\big(sign(f^{*}_{1}(x_{1}))\neq sign(f^{*}_{2}(x_{2}))\big)
\end{eqnarray}
Let $\eta_{v}=\texttt{err}(f^{*}_{v})$ denote the error rate of the optimal classifier $f^{*}_{v}$, we have the following Proposition~\ref{pro:insufficiency-error}.

\begin{proposition}\label{pro:insufficiency-error}
$\Upsilon_{v}\!=\!2\eta_{v}$. $(v=1,2)$
\end{proposition}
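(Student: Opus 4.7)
The plan is to directly evaluate $\eta_v = \texttt{err}(f^*_v)$ by conditioning on $x_v$ and using the definition $f^*_v(x_v) = 2\varphi_v(x_v) - 1$ with $\varphi_v(x_v) = P(y = +1 \mid x_v)$. First I would write the error rate as an iterated expectation: $\eta_v = \int_{\mathcal{X}_v} P(\mathrm{sign}(f^*_v(x_v)) \neq y \mid x_v)\,P(x_v)\,dx_v$. Then I would split into two cases depending on the sign of $f^*_v(x_v)$. When $\varphi_v(x_v) > 1/2$, the predicted label is $+1$ and the conditional error is $P(y=-1\mid x_v) = 1 - \varphi_v(x_v)$; when $\varphi_v(x_v) \leq 1/2$, the predicted label is $-1$ and the conditional error is $\varphi_v(x_v)$. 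In both cases the conditional error equals $\min\!\big(\varphi_v(x_v),\, 1 - \varphi_v(x_v)\big)$, which is a standard identity for the Bayes-optimal classifier.

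The next step is the algebraic identity
\begin{equation*}
\min\!\big(\varphi,\, 1-\varphi\big) \;=\; \tfrac{1}{2}\big(1 - |2\varphi - 1|\big),
\end{equation*}
valid for all $\varphi \in [0,1]$, which I would verify by checking the two cases $\varphi \geq 1/2$ and $\varphi < 1/2$. Applying this pointwise inside the integral gives
\begin{equation*}
\eta_v \;=\; \int_{\mathcal{X}_v} \tfrac{1}{2}\big(1 - |2\varphi_v(x_v) - 1|\big)\,P(x_v)\,dx_v \;=\; \tfrac{1}{2}\Big(1 - \int_{\mathcal{X}_v} |2\varphi_v(x_v)-1|\,P(x_v)\,dx_v\Big).
\end{equation*}
Recognizing the remaining integral as $1 - \Upsilon_v$ from Definition~\ref{def:view-insufficiency}, I would conclude $\eta_v = \tfrac{1}{2}\Upsilon_v$, i.e. $\Upsilon_v = 2\eta_v$, and symmetrically for $v=2$.

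There is no real obstacle here; the proof is essentially the textbook calculation of the Bayes error rate rewritten in terms of the view-insufficiency integral. The only thing worth being careful about is the boundary case $\varphi_v(x_v) = 1/2$, where the definition of $\mathrm{sign}(f^*_v)$ is a tie-breaking convention, but this event contributes the same value $1/2$ to the conditional error regardless of how ties are broken and is consistent with the formula $\min(\varphi, 1-\varphi)$, so it causes no issue.
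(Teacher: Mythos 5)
Your proof is correct and follows essentially the same route as the paper: both compute the conditional error of the Bayes classifier given $x_v$ (the paper writes it with indicator functions as $1-\mathbb{I}\{\varphi_v>1/2\}\varphi_v-\mathbb{I}\{\varphi_v\leq 1/2\}(1-\varphi_v)$, which is exactly your $\min(\varphi_v,1-\varphi_v)$), then use the identity $\min(\varphi,1-\varphi)=\tfrac{1}{2}(1-|2\varphi-1|)$ and take expectations to match Definition~\ref{def:view-insufficiency}. No gaps; your remark on the tie-breaking at $\varphi_v(x_v)=1/2$ is a harmless extra care the paper handles implicitly via the $\leq$ in its indicator.
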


\begin{proof}
Given an example $\left(\langle x_{1},x_{2} \rangle, y\right)$,
\begin{eqnarray}
\nonumber
  &&P\big(sign\big(f^{*}_{v}(x_{v})\big)\neq y|x_{v}\big)\\
\nonumber
  &&=1-P\big(sign\big(f^{*}_{v}(x_{v})\big)=1, y=1|x_{v}\big)-P\big(sign\big(f^{*}_{v}(x_{v})\big)=-1, y=-1|x_{v}\big)\\
\nonumber
  &&=1-\mathbb{I}\big\{sign\big(f^{*}_{v}(x_{v})\big)=1\big\}P\big(y=1|x_{v}\big)
  -\mathbb{I}\big\{sign\big(f^{*}_{v}(x_{v})\big)=-1\big\}P\big(y=-1|x_{v}\big)\\
\nonumber
  &&=1-\mathbb{I}\big\{\varphi_{v}(x_{v})>1/2\big\}\varphi_{v}(x_{v})
  -\mathbb{I}\big\{\varphi_{v}(x_{v})\leq1/2\big\}\big(1-\varphi_{v}(x_{v})\big)
\end{eqnarray}
So we get
\begin{eqnarray}
\nonumber
  \eta_{v}&\!=\!&\mathbb{E}\big(1-\mathbb{I}\big\{\varphi_{v}(x_{v})>1/2\big\}\varphi_{v}(x_{v})
  -\mathbb{I}\big\{\varphi_{v}(x_{v})\leq1/2\big\}\big(1-\varphi_{v}(x_{v})\big)\big)\\
\nonumber
  &\!=\!&\mathbb{E}\big(1/2-\big|\varphi_{v}(x_{v})-1/2\big|\big)=
  \frac{1}{2}\Upsilon_{v}.
\end{eqnarray}
\end{proof}

\noindent \textbf{Remark:} Proposition~\ref{pro:insufficiency-error} states that when the view is insufficient, the optimal classifier trained on this view will mistakenly classify some instances. The larger the insufficiency is, the worse the performance of the optimal classifier will be.

\subsection{Learning Approximation of Optimal Classifier with Complementary Views}

\begin{algorithm}[!h]
\begin{algorithmic}
   \STATE {\bfseries Input:} Labeled data $L$, unlabeled data $U$, and two hypothesis spaces $\mathcal {F}_{1}$ and $\mathcal {F}_{2}$.
   \STATE {\bfseries Output:} $\mathcal {F}^{C}_{1}$ and $\mathcal {F}^{C}_{2}$.
   \STATE {\bfseries Initialize:} Set $\varrho_{0}=L$;
   \FOR{$i=0, 1, 2, \ldots$}
        \STATE Get $\mathcal {F}^{i}_{v} \subseteq {\mathcal{F}}_{v}$ ($v=1,2$) by minimizing the empirical risk on $\varrho_{i}$ with respect to view $\mathcal {X}_{v}$ and set $T_{i}=\emptyset$;
        \FOR{$x=\langle x_{1},x_{2} \rangle\in U$}
            \FOR{$f_{1}\in \mathcal {F}^{i}_{1}$, $f_{2}\in \mathcal {F}^{i}_{2}$}
                \IF{$|f_{1}(x_{1})|\geq \gamma_{1}$}
                    \STATE $T_{i}=T_{i}\cup (x,sign(f_{1}(x_{1})))$ and delete $x$ from $U$;\\
                    \textbf{break};
                \ENDIF
                \IF{$|f_{2}(x_{2})|\geq \gamma_{2}$}
                    \STATE $T_{i}=T_{i}\cup (x,sign(f_{2}(x_{2})))$ and delete $x$ from $U$;\\
                    \textbf{break};
                \ENDIF
            \ENDFOR
        \ENDFOR
   \IF{$T_{i}=\emptyset$}
        \STATE \textbf{return} $\mathcal {F}^{C}_{1}=\mathcal {F}^{i}_{1}$ and $\mathcal {F}^{C}_{2}=\mathcal {F}^{i}_{2}$;
   \ENDIF
   \STATE $\varrho_{i+1}=\varrho_{i}\cup T_{i}$;
   \ENDFOR
\end{algorithmic}
\caption{Margin-based co-training with insufficient views}\label{alg:insufficient-co-training1}
\end{algorithm}

Usually, co-training allows one classifier to label unlabeled instances for the other. For insufficient views, in each view there are some instances which can not provide sufficient information for predicting the label. So
we should check how much information each instance can provide. We use the confidence of the prediction on an instance to measure the information.
When the confidence is no less than some preset threshold, we use the predicted label as its pseudo-label and add it into the training set.
Since each view is insufficient and only provides partial information, we use the newly labeled unlabeled instances which are labeled by both classifiers as the retraining data, and the process is described in Algorithm \ref{alg:insufficient-co-training1}. However, as mentioned in Section~\ref{sec:general-analysis} if there is no prior knowledge about the relationship between hypothesis space and unlabeled data, it is hard to guarantee that selecting confident instances to label is helpful. In margin-based algorithms, margin can be used to measure the confidence. Intuitively, it is likely that similar hypotheses tend to have similar margin outputs, i.e., two hypotheses with small error difference should have small margin difference. With this intuition, we give the following Definition \ref{def:marginLipschitz}.

\begin{definition}[Margin Lipschitz]\label{def:marginLipschitz}
Let $\mathcal {F}_{v}$ $(v=1,2)$ denote the hypothesis
space, for $x=\langle x_{1},x_{2} \rangle$ and $f_{v}\in \mathcal{F}_{v}$, there exists
some constant $C^{\mathfrak{L}}_{v}$ to satisfy $|f_{v}(x_{v})-f_{v}^{*}(x_{v})|\leq C^{\mathfrak{L}}_{v}(\texttt{err}(f_{v})-\texttt{err}(f_{v}^{*}))$.
\end{definition}

Definition~\ref{def:marginLipschitz} states that the label predicted by weak classifiers with large margin is likely to be the same as the label predicted by the optimal classifier. Thus, the confident instances would help find the optimal classifier. Here we give two examples that satisfy the Margin Lipschitz definition.

\begin{example}\label{example1}
Image that the instances in ${\mathcal {X}}_{v}$ are distributed uniformly over the unit ball in $\mathbb{R}^{d_{v}}$ and that the underground labels are determined by a linear hyperplane $w^{*}_{v}$ going through the origin, i.e., $y=sign(w^{*}_{v}\cdot x_{v})$ for any $(x_{v},y)$. It can be verified that Definition~\ref{def:marginLipschitz} holds with the constant $C^{\mathfrak{L}}_{v}\geq\pi$.
\end{example}

\begin{example}\label{example2}
Image that the instances in ${\mathcal {X}}_{v}$ are distributed uniformly over the unit ball in $\mathbb{R}^{d_{v}}$ and that the underground labels are determined by a random variable $\beta$ ($\beta=+1$ with probability $1-\eta$ and $\beta=-1$ with probability $\eta$) and a linear hyperplane $w^{*}_{v}$ going through the origin, i.e., $y=\beta\cdot sign(w^{*}_{v}\cdot x_{v})$ for any $(x_{v},y)$. Here the variable $\beta$ is exploited to simulate the instances with insufficient features due to feature noise. It can be verified that Definition~\ref{def:marginLipschitz} also holds with the constant $C^{\mathfrak{L}}_{v}\geq\pi$.
\end{example}

To quantify the amount of the confident instances, we give the following Definition \ref{def:quantity}.

\begin{definition}\label{def:quantity}
Let $x=\langle x_{1}, x_{2}\rangle$,
\begin{eqnarray}
\nonumber
   \mu_{1}(\gamma_{1}, \mathcal {F}_{1})&\!\!\!\!=\!\!\!\!&P\big\{x \in U: \exists f_{1}\in \mathcal {F}_{1} ~~\mathrm{s.t.}~~|f_{1}(x_{1})|\geq\gamma_{1}\big\},\\
\nonumber
   \mu_{2}(\gamma_{2}, \mathcal {F}_{2})&\!\!\!\!=\!\!\!\!&P\big\{x \in U: \exists f_{2}\in \mathcal {F}_{2}~~\mathrm{s.t.}~~|f_{2}(x_{2})|\geq\gamma_{2} \big\},\\
\nonumber
   \mu(\gamma_{1}, \gamma_{2}, \mathcal {F}_{1}, \mathcal {F}_{2})&\!\!\!\!=\!\!\!\!&P\big\{x \in U: \exists f_{1}\in \mathcal {F}_{1}, f_{2}\in \mathcal {F}_{2} ~~\mathrm{s.t.}~~|f_{1}(x_{1})|\geq\gamma_{1}~\mathrm{or}~ |f_{2}(x_{2})|\geq\gamma_{2} \big\},
\end{eqnarray}
and let $\nu(\gamma_{1}, \gamma_{2}, \mathcal {F}_{1}, \mathcal {F}_{2})$ denote the probability mass on the instances which are labeled with large margin just by one view, then
\begin{eqnarray}
\nonumber
    \mu(\gamma_{1}, \gamma_{2}, \mathcal {F}_{1}, \mathcal {F}_{2})=
    \frac{\nu(\gamma_{1}, \gamma_{2}, \mathcal {F}_{1}, \mathcal {F}_{2})+\mu_{1}(\gamma_{1}, \mathcal {F}_{1})+\mu_{2}(\gamma_{2}, \mathcal {F}_{2})}{2}.
\end{eqnarray}
\end{definition}
$\nu(\gamma_{1}, \gamma_{2}, \mathcal {F}_{1}, \mathcal {F}_{2})$ measures the disagreement between two views with respect to margins $\gamma_{1}$ and $\gamma_{2}$, we call it \textit{margin-based disagreement}. When $\nu(\gamma_{1}, \gamma_{2}, \mathcal {F}_{1}, \mathcal {F}_{2})$ is large, the two views could help each other strongly by providing diverse confident information; while when $\nu(\gamma_{1}, \gamma_{2}, \mathcal {F}_{1}, \mathcal {F}_{2})$ is small, the two views only help each other little since they provide almost the same information. $\mu(\gamma_{1}, \gamma_{2}, \mathcal {F}_{1}, \mathcal {F}_{2})$ denotes the probability mass on the instances which are labeled with large margin by one of the two views. If the two views have large margin-based disagreement $\nu(\gamma_{1}, \gamma_{2}, \mathcal {F}_{1}, \mathcal {F}_{2})$, $\mu(\gamma_{1}, \gamma_{2}, \mathcal {F}_{1}, \mathcal {F}_{2})$ is large. For the extreme case that $\mu(\gamma_{1}, \gamma_{2}, \mathcal {F}_{1}, \mathcal {F}_{2})=1$, i.e., each unlabeled instance can be labeled with large margin by one of the two views, we say that the two views are \textit{complementary views}, since they provide complementary information. For this extreme case, we have the following Theorem \ref{theorem:insufficient-theorem1}.

\begin{theorem}\label{theorem:insufficient-theorem1}
Suppose the hypothesis space $\mathcal {F}_{v}$ $(v=1,2)$ satisfies Definition~\ref{def:marginLipschitz}, let
$\mathcal {F}^{L}_{v}\subseteq \mathcal {F}_{v}$ denote the hypotheses minimizing the
empirical risk on the initial labeled data $L$, $R_{v}=\max_{f_{v}\in\mathcal
{F}^{L}_{v}}\texttt{err}(f_{v})$ and $\gamma_{v}=C^{\mathfrak{L}}_{v}(R_{v}-\eta_{v})$. For $\epsilon \in (0,\frac{1}{2})$ and $\delta \in (0,1)$, if
$|U|=O(\frac{d_{v}\ln\frac{1}{\delta}}{\epsilon^{2}})$ and $\mu(\gamma_{1},
\gamma_{2}, \mathcal {F}^{L}_{1}, \mathcal {F}^{L}_{2})=1$, with probability $1-\delta$
the outputs $\mathcal {F}^{C}_{1}$ and $\mathcal {F}^{C}_{2}$ in Algorithm \ref{alg:insufficient-co-training1} satisfy
$\texttt{err}(\mathcal {F}^{C}_{v})=\max_{f_{v}\in\mathcal {F}^{C}_{v}}\texttt{err}(f_{v})\leq \eta_{1}/2+\eta_{2}/2+d(f_{1}^{*},f_{2}^{*})/2+\epsilon$.
\end{theorem}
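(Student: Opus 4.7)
The plan is to propagate the Margin Lipschitz property (Definition~\ref{def:marginLipschitz}) from hypothesis-level error to pointwise agreement with the Bayes-optimal classifiers $f_1^*$ and $f_2^*$, to quantify the noise rate of the resulting pseudo-labels via a short inclusion-exclusion computation, and then to close with a VC-based uniform convergence argument over the pseudo-labeled pool $U$ so as to control the error of the empirical risk minimizer $\mathcal{F}_v^C$.

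First I would establish that confidence implies Bayes agreement. For any $f_v\in\mathcal{F}_v^L$, the definition $R_v=\max_{f_v\in\mathcal{F}_v^L}\texttt{err}(f_v)$ gives $\texttt{err}(f_v)-\eta_v\leq R_v-\eta_v$, so Definition~\ref{def:marginLipschitz} yields $|f_v(x_v)-f_v^*(x_v)|\leq C_v^{\mathfrak L}(R_v-\eta_v)=\gamma_v$ for every $x_v$. Hence, whenever the inner loop of Algorithm~\ref{alg:insufficient-co-training1} breaks on view $v$ because some $f_v$ satisfies $|f_v(x_v)|\geq\gamma_v$, a reverse triangle argument forces $sign(f_v^*(x_v))=sign(f_v(x_v))$, so the assigned pseudo-label equals $sign(f_v^*(x_v))$. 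The complementary-views hypothesis $\mu(\gamma_1,\gamma_2,\mathcal{F}_1^L,\mathcal{F}_2^L)=1$ together with the loop structure then guarantees that every $x\in U$ is pseudo-labeled, with $\tilde y\in\{sign(f_1^*(x_1)),\,sign(f_2^*(x_2))\}$.

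Next I would bound the noise rate. Put $E_v=\{(x,y):sign(f_v^*(x_v))\neq y\}$, so $P(E_v)=\eta_v$ by Proposition~\ref{pro:insufficiency-error}. The previous step gives $\{\tilde y\neq y\}\subseteq E_1\cup E_2$, hence
\begin{eqnarray*}
P(\tilde y\neq y)\leq P(E_1\cup E_2)=\eta_1+\eta_2-P(E_1\cap E_2).
\end{eqnarray*}
Using the identity $d(f_1^*,f_2^*)=P(E_1\triangle E_2)=\eta_1+\eta_2-2P(E_1\cap E_2)$, solving for $P(E_1\cap E_2)$ and substituting produces $P(\tilde y\neq y)\leq\eta_1/2+\eta_2/2+d(f_1^*,f_2^*)/2$, which is precisely the noise level appearing in the theorem.

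Finally, with $|U|=O(d_v\log(1/\delta)/\epsilon^2)$ I would invoke standard VC uniform convergence over $\mathcal{F}_v$ against the pseudo-label distribution: any ERM $f\in\mathcal{F}_v^C$ satisfies $P(f(x_v)\neq\tilde y)\leq\min_{g\in\mathcal{F}_v}P(g(x_v)\neq\tilde y)+\epsilon$, and the indicator inequality $\mathbb{I}(f(x_v)\neq y)\leq\mathbb{I}(f(x_v)\neq\tilde y)+\mathbb{I}(\tilde y\neq y)$ combines this with the noise bound of the previous step to yield the stated error bound. The main obstacle is this last step: producing a near-realizer in $\mathcal{F}_v$ for the rule $\tilde y$, which as a function of $x$ depends on both views while $\mathcal{F}_v$ only sees $x_v$. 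The natural candidate is $f_v^*$ itself, whose disagreement with $\tilde y$ is confined to the region labeled by the other view; the delicate point will be to fold this residual disagreement into the noise term already accounted for so that double counting does not inflate the final bound beyond $\eta_1/2+\eta_2/2+d(f_1^*,f_2^*)/2+\epsilon$.
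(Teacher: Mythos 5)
Your first two steps match the paper's proof exactly: the margin threshold $\gamma_v=C^{\mathfrak{L}}_v(R_v-\eta_v)$ together with Definition~\ref{def:marginLipschitz} forces any confidently-labeled instance to receive the label $sign(f_v^*(x_v))$, and the inclusion--exclusion identity $P(E_1\cup E_2)=\eta_1/2+\eta_2/2+d(f_1^*,f_2^*)/2$ is precisely how the quantity in the theorem arises. But the final step, which you yourself flag as the main obstacle, is a genuine gap, and as written your decomposition does not close it. The chain $\texttt{err}(f)\leq P\big(f(x_v)\neq\tilde y\big)+P(\tilde y\neq y)$ combined with agnostic ERM gives $\texttt{err}(f)\leq \min_{g\in\mathcal{F}_v}P\big(g(x_v)\neq\tilde y\big)+\epsilon+P(\tilde y\neq y)$; since $P(\tilde y\neq y)$ can itself be as large as $\eta_1/2+\eta_2/2+d(f_1^*,f_2^*)/2$ and the approximation term $\min_g P(g\neq\tilde y)$ is in general positive, this can inflate the bound up to roughly twice the claimed value. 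The "folding" you hope for is not automatic.

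The paper closes this gap by a worst-case realizability construction rather than an agnostic argument: it posits the adversarial pseudo-labeling $\widehat y=y$ exactly when $sign(f_1^*(x_1))=sign(f_2^*(x_2))=y$ and $\widehat y=-y$ otherwise, and then defines a single hypothesis $f^{com}_v$ that reproduces this rule, so that $f^{com}_v$ is \emph{perfectly consistent} with $\varrho_1$ and has true error exactly $\eta_1/2+\eta_2/2+d(f_1^*,f_2^*)/2$. Treating $f^{com}_v$ as the target concept, realizable-style PAC learning on the i.i.d.\ sample of size $|L|+|U|$ with $|U|=O(d_v\ln(1/\delta)/\epsilon^2)$ gives ERM error at most $\texttt{err}(f^{com}_v)+\epsilon$, with no doubled term: in your notation, the point is that in the worst case $\min_{g}P(g(x_v)\neq\tilde y)=0$, so the residual-disagreement and noise terms collapse into the single quantity $\texttt{err}(f^{com}_v)$. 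To repair your proof you would need to supply exactly this observation. (You are right to be suspicious of whether such an $f^{com}_v$ is well defined as a function of $x_v$ alone and lies in $\mathcal{F}_v$ --- the paper asserts this without justification --- but that is the device the paper uses, and without it or an equivalent your argument stops short of the stated bound.)
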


\begin{proof}
Since $\mu(\gamma_{1}, \gamma_{2}, \mathcal {F}^{L}_{1}, \mathcal {F}^{L}_{2})=1$,
after $1$ round all unlabeled instances in $U$ are assigned with pseudo-labels and
added into the data set $\varrho_{1}$. Then classifier set $\mathcal {F}^{1}_{v}$ is
got by minimizing the empirical risk on $\varrho_{1}$ with view $\mathcal {X}_{v}$. For
$x=\langle x_{1}, x_{2}\rangle$, $\widehat{y}$ denotes its pseudo-label. If
$|f_{v}(x_{v})|\geq\gamma_{v}=C^{\mathfrak{L}}_{v}(R_{v}-\eta_{v})$, with Definition~\ref{def:marginLipschitz}
we know that $f_{v}$ and $f_{v}^{*}$ make the same prediction on $x_{v}$. So for any
example $\big(\langle x_{1}, x_{2}\rangle, \widehat{y}\big)\in \varrho_{1}$, either
$\widehat{y}=sign\big(f_{1}^{*}(x_{1})\big)$ or $\widehat{y}=sign\big(f_{2}^{*}(x_{2})\big)$
holds. Here we consider the worst case that
\begin{eqnarray}
\nonumber
  \widehat{y} = \left\{ \begin{array}{ll} y & \textrm{if
  $sign\big(f_{1}^{*}(x_{1})\big)=sign\big(f_{2}^{*}(x_{2})\big)=y$} \\
  -y & \textrm{otherwise}
\end{array}. \right.
\end{eqnarray}
Let $f^{com}_{v}$ denote the hypothesis that $sign\big(f^{com}_{v}(x_{v})\big)=y$ if
$sign\big(f_{1}^{*}(x_{1})\big)=sign\big(f_{2}^{*}(x_{2})\big)=y$, and
$sign\big(f^{com}_{v}(x_{v})\big)=-y$ otherwise. It is easy to find that $f^{com}_{v}$
is consistent with the examples in $\varrho_{1}$ for the worst case and
$\texttt{err}(f^{com}_{v})=\eta_{1}/2+\eta_{2}/2+d(f_{1}^{*},f_{2}^{*})/2$. $\texttt{err}(f^{com}_{v})$ is larger
than $\texttt{err}(f_{v}^{*})$, so learning a classifier with error rate no larger than
$\texttt{err}(f^{com}_{v})+\epsilon$ is no harder than learning a classifier with error rate no
larger than $\texttt{err}(f_{v}^{*})+\epsilon$. Now we regard $f^{com}_{v}$ as the optimal classifier
in $\mathcal {F}_{v}$ and neglect the probability mass on the hypothesis whose error
rate is less than $\texttt{err}(f^{com}_{v})$. Since the classifiers in $\mathcal {F}^{C}_{v}$
minimize the empirical risk on $\varrho_{1}$ which is an i.i.d sample with
size of $|L|+|U|$ and $|U|=O\big(\frac{d_{v}\ln\frac{1}{\delta}}{\epsilon^{2}}\big)$,
we get $\max_{f_{v}\in\mathcal {F}^{C}_{v}}\texttt{err}(f_{v})\leq \texttt{err}(f^{com}_{v})+\epsilon$ with
probability $1-\delta$.
\end{proof}

\noindent \textbf{Remark:} Theorem~\ref{theorem:insufficient-theorem1} states that if the two views are complementary views, i.e., every unlabeled instance in $U$ can be labeled with large margin by one of the two views, co-training could output the nearly good hypothesis set $\mathcal {F}^{C}_{1}$ and $\mathcal {F}^{C}_{2}$. Sometimes the pseudo-label which is the same as the prediction of the optimal classifier in view $\mathcal {X}_{v}$ is not helpful in achieving the optimal
classifier in view $\mathcal {X}_{3-v}$, since there exists the difference
$d(f_{1}^{*},f_{2}^{*})$ between the two optimal classifiers in the two views. Thus, the hypothesis in $\mathcal {F}^{C}_{v}$ is not very close to the optimal classifier $f_{v}^{*}$.

To achieve the good approximation of the optimal classifier, some prior knowledge about the optimal classifier needs to be known, which is shown as follows.

\begin{definition}[Information Assumption]\label{def:Information}
For $\big(\langle x_{1},x_{2} \rangle,
y\big)\in \mathcal {X}\times\mathcal {Y}$, if view $\mathcal {X}_{v}$ provides much
information about it, i.e., $|P(y=+1|x_{v})-\frac{1}{2}|\geq \gamma'_{v}/2$,
then the optimal classifier $f_{v}^{*}$ in view $\mathcal {X}_{v}$ classifies it correctly,
i.e., $sign(f_{v}^{*}(x_{v}))=y$.
\end{definition}

Definition~\ref{def:Information} states that for an example if one view can provide much information about
it, it will be correctly classified by the optimal classifier in this view. Thus, we give the following Theorem \ref{theorem:insufficient-theorem2}.

\begin{theorem}\label{theorem:insufficient-theorem2}
Suppose the hypothesis space $\mathcal {F}_{v}$ $(v=1,2)$ satisfies Definition~\ref{def:marginLipschitz}, let
$\mathcal {F}^{L}_{v}\subseteq \mathcal {F}_{v}$ denote the hypotheses minimizing the empirical risk on the initial labeled data $L$, $R_{v}=\max_{f_{v}\in\mathcal
{F}^{L}_{v}}\texttt{err}(f_{v})$ and
$\gamma_{v}=C^{\mathfrak{L}}_{v}(R_{v}-\eta_{v})+\gamma'_{v}$. For $\epsilon \in (0,\frac{1}{2})$ and $\delta \in (0,1)$, if Definition~\ref{def:Information}
holds, $|U|=O(\frac{d_{v}\ln\frac{1}{\delta}}{\epsilon^{2}})$ and
$\mu(\gamma_{1}, \gamma_{2}, \mathcal {F}^{L}_{1}, \mathcal {F}^{L}_{2})=1$, with probability $1-\delta$ the outputs $\mathcal {F}^{C}_{1}$ and $\mathcal {F}^{C}_{2}$ in Algorithm~\ref{alg:insufficient-co-training1} satisfy $\texttt{err}(\mathcal {F}^{C}_{v})=\max_{f_{v}\in\mathcal {F}^{C}_{v}}\texttt{err}(f_{v})\leq \eta_{v}+\epsilon$.
\end{theorem}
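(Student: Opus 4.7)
The plan is to adapt the proof of Theorem~\ref{theorem:insufficient-theorem1} but exploit the new Information Assumption to eliminate the compatibility penalty $d(f_1^*, f_2^*)/2$ that appeared in the worst-case analysis there. The strategic observation is that the larger margin threshold $\gamma_v = C^{\mathfrak{L}}_v(R_v - \eta_v) + \gamma'_v$ has been chosen precisely so that instances labeled with margin at least $\gamma_v$ under some $f_v \in \mathcal{F}^L_v$ inherit a guaranteed posterior gap of $\gamma'_v/2$, which by Definition~\ref{def:Information} forces the pseudo-label to be the true label rather than merely agreeing with $f^*_v$.

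First I would show that every pseudo-label assigned in the first round of Algorithm~\ref{alg:insufficient-co-training1} is correct. Suppose $x = \langle x_1, x_2\rangle$ is pseudo-labeled because some $f_v \in \mathcal{F}^L_v$ satisfies $|f_v(x_v)| \geq \gamma_v$. Since every $f_v \in \mathcal{F}^L_v$ has $\texttt{err}(f_v) \leq R_v$, Definition~\ref{def:marginLipschitz} yields $|f_v(x_v) - f_v^*(x_v)| \leq C^{\mathfrak{L}}_v(R_v - \eta_v)$. Hence by the reverse triangle inequality
\begin{eqnarray*}
|f_v^*(x_v)| \geq |f_v(x_v)| - |f_v(x_v) - f_v^*(x_v)| \geq \gamma_v - C^{\mathfrak{L}}_v(R_v - \eta_v) = \gamma'_v,
\end{eqnarray*}
and in particular $\mathrm{sign}(f_v(x_v)) = \mathrm{sign}(f_v^*(x_v))$ because $|f_v - f_v^*|$ is strictly smaller than $|f_v^*|$ at $x_v$. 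Because $f_v^*(x_v) = 2\varphi_v(x_v) - 1$, the bound $|f_v^*(x_v)| \geq \gamma'_v$ is exactly the posterior-gap hypothesis of Definition~\ref{def:Information}, which then delivers $\mathrm{sign}(f_v^*(x_v)) = y$. Chaining these two sign equalities, the pseudo-label $\mathrm{sign}(f_v(x_v))$ equals the true label $y$.

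Next, since $\mu(\gamma_1, \gamma_2, \mathcal{F}^L_1, \mathcal{F}^L_2) = 1$, every instance in $U$ is pseudo-labeled during the first round, so the augmented set $\varrho_1$ consists of $|L| + |U|$ correctly labeled examples drawn i.i.d.\ from $\mathcal{D}$. Any hypothesis in $\mathcal{F}^C_v$ is obtained by empirical risk minimization on this clean sample over $\mathcal{F}_v$ (whose VC-dimension is $d_v$). A standard uniform-convergence bound of the form $|U| = O(d_v \ln(1/\delta)/\epsilon^2)$ then guarantees that with probability $1-\delta$ every such ERM hypothesis has true error within $\epsilon$ of the optimum $\texttt{err}(f_v^*) = \eta_v$, yielding $\texttt{err}(\mathcal{F}^C_v) \leq \eta_v + \epsilon$.

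The main obstacle is really a bookkeeping one rather than a conceptual one: I need to verify carefully that the added slack $\gamma'_v$ in the margin threshold exactly cancels the Margin-Lipschitz deviation so that both $|f_v^*(x_v)| \geq \gamma'_v$ (to trigger Definition~\ref{def:Information}) and $\mathrm{sign}(f_v(x_v)) = \mathrm{sign}(f_v^*(x_v))$ hold simultaneously for every $f_v \in \mathcal{F}^L_v$ that could trigger a pseudo-label in Algorithm~\ref{alg:insufficient-co-training1}. Once this per-instance label correctness is nailed down, the probabilistic argument reduces to invoking PAC learning on a clean i.i.d.\ sample, which is completely standard.
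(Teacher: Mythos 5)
Your proof is correct and follows essentially the same route as the paper's: use the Margin Lipschitz bound to show that any instance pseudo-labeled with margin at least $\gamma_{v}$ satisfies $|f_{v}^{*}(x_{v})|\geq\gamma'_{v}$ and $\mathrm{sign}(f_{v}(x_{v}))=\mathrm{sign}(f_{v}^{*}(x_{v}))$, invoke the Information Assumption to conclude that every pseudo-label is the true label, and then apply a standard ERM/PAC bound to the clean i.i.d.\ sample $\varrho_{1}$ of size $|L|+|U|$. One tiny remark: the sign agreement follows because $|f_{v}(x_{v})-f_{v}^{*}(x_{v})|\leq C^{\mathfrak{L}}_{v}(R_{v}-\eta_{v})<\gamma_{v}\leq|f_{v}(x_{v})|$, i.e., the comparison should be with $|f_{v}(x_{v})|$ rather than with $|f_{v}^{*}(x_{v})|$ as you wrote, since $C^{\mathfrak{L}}_{v}(R_{v}-\eta_{v})$ need not be smaller than $\gamma'_{v}$.
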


\begin{proof}
For $x=\langle x_{1}, x_{2}\rangle$, $\widehat{y}$ denotes its pseudo-label. If $|f_{v}(x_{v})|\geq\gamma_{v}=C^{\mathfrak{L}}_{v}(R_{v}-\eta_{v})+\gamma'_{v}$,
with Definition~\ref{def:marginLipschitz} we know that $f_{v}$ and $f_{v}^{*}$ make the same prediction on $x_{v}$
and $|f_{v}^{*}(x_{v})|\geq\gamma'_{v}$. So we get
$\big|P(y=+1|x_{v})-\frac{1}{2}\big|\geq \gamma'_{v}/2$. Then with Definition~\ref{def:Information} we know that $\widehat{y}=sign\big(f_{v}(x_{v})\big)=sign\big(f_{v}^{*}(x_{v})\big)=y$. So we get that the pseudo-label of any example in $\varrho_{1}$ is the same as its underground label. Since $\mu(\gamma_{1}, \gamma_{2}, \mathcal {F}^{L}_{1}, \mathcal
{F}^{L}_{2})=1$, we know that all unlabeled instances in $U$ are assigned with underground labels and added into $\varrho_{1}$. So $\varrho_{1}$ is an i.i.d sample with size of $|L|+|U|$. Considering that
$|U|=O\big(\frac{d_{v}\ln\frac{1}{\delta}}{\epsilon^{2}}\big)$, we get
$\max_{f_{v}\in\mathcal {F}^{C}_{v}}\texttt{err}(f_{v})\leq \eta_{v}+\epsilon$ with probability
$1-\delta$.
\end{proof}

\noindent \textbf{Remark:} Theorem~\ref{theorem:insufficient-theorem2} states that if the two views are complementary views with respect to larger margins $\gamma_{1}$ and $\gamma_{2}$, co-training could output the $\epsilon$-approximation of the optimal classifier.

\subsection{Learning Approximation of Optimal Classifier with Non-Complementary Views}

However, in real-world applications not all insufficient views are complementary, i.e., $\mu(\gamma_{1}, \gamma_{2}, \mathcal {F}^{L}_{1},
\mathcal {F}^{L}_{2})$ is smaller than $1$. With Definition~\ref{def:marginLipschitz} we know that the threshold $\gamma_{v}$ ($v=1,2$) which guarantees the quality of the confident instances is related to the error rates of weak hypotheses. An intuitive way to get more confident instances to augment the training data is updating the weak hypotheses with newly labeled confident instances and adaptively decreasing the margin threshold, which is shown in Algorithm~\ref{alg:insufficient-co-training2}. When $\mu(\gamma_{1}, \gamma_{2}, \mathcal {F}^{L}_{1}, \mathcal {F}^{L}_{2})$ is smaller than $1$, it will make co-training suffer from the sampling bias, since the training set in each view might not be an i.i.d sample from the marginal distribution $\mathcal {D}_{\mathcal {X}}$. Now we give the following definition to approximately bound the difference between two training samples.

\begin{algorithm}[!h]
\begin{algorithmic}
   \STATE {\bfseries Input:} Labeled data $L$, unlabeled data $U$, two hypothesis spaces $\mathcal {F}_{1}$ and $\mathcal {F}_{2}$, $m_{0}=|L|$, $n=|L|+|U|$, $\gamma^{0}_{v}=C^{\mathfrak{L}}_{v}(R_{v}-\eta_{v})+\gamma'_{v}$, and $\varrho_{0}=L$.
   \STATE {\bfseries Output:} $\mathcal {F}^{C}_{1}$ and $\mathcal {F}^{C}_{2}$.
   \FOR{$i=0, 1, 2, \ldots$}
        \STATE Get $\mathcal {F}^{i}_{v} \subseteq {\mathcal{F}}_{v}$ by minimizing the empirical risk on $\varrho_{i}$ with respect to view $\mathcal {X}_{v}$ and set $T_{i}=\emptyset$;
        \FOR{$x=\langle x_{1},x_{2} \rangle\in U$}
            \FOR{$f_{1}\in \mathcal {F}^{i}_{1}$, $f_{2}\in \mathcal {F}^{i}_{2}$}
                \IF{$|f_{1}(x_{1})|\geq \gamma^{i}_{1}$}
                    \STATE $T_{i}=T_{i}\cup (x, sign(f_{1}(x_{1})))$ and delete $x$ from $U$;\\
                    \textbf{break};
                \ENDIF
                \IF{$|f_{2}(x_{2})|\geq \gamma^{i}_{2}$}
                    \STATE $T_{i}=T_{i}\cup (x, sign(f_{2}(x_{2})))$ and delete $x$ from $U$;\\
                    \textbf{break};
                \ENDIF
            \ENDFOR
        \ENDFOR
    \IF{$i=0$ and $|T_{0}|>\sqrt[3]{n^{2}m_{0}}-m_{0}$}
        \STATE $\gamma^{1}_{v}=\gamma^{0}_{v}-C^{\mathfrak{L}}_{v}(R_{v}-\eta_{v})
    (1-\frac{n\sqrt{m_{0}}}{(m_{0}+|T_{0}|)^{3/2}})$, $\varrho_{1}=\varrho_{0}\cup T_{0}$, $m_{1}=m_{0}+|T_{0}|$;
    \ENDIF
    \IF{$|T_{0}|\leq\sqrt[3]{n^{2}m_{0}}-m_{0}$ or $T_{i}=\emptyset$}
        \STATE \textbf{return} $\mathcal {F}^{C}_{1}=\mathcal {F}^{i}_{1}$ and $\mathcal {F}^{C}_{2}=\mathcal {F}^{i}_{2}$;
    \ENDIF
    \IF{$i\geq1$}
        \STATE $\gamma^{i+1}_{v}=\gamma^{0}_{v}-C^{\mathfrak{L}}_{v}(R_{v}-\eta_{v})
    (1-\frac{n\sqrt{m_{0}}}{(m_{i}+|T_{i}|)^{3/2}})$, $\varrho_{i+1}=\varrho_{i}\cup T_{i}$, $m_{i+1}=m_{i}+|T_{i}|$;
    \ENDIF
   \ENDFOR
\end{algorithmic}
\caption{Adaptive margin-based co-training with insufficient views}\label{alg:insufficient-co-training2}
\end{algorithm}

\begin{definition}[Approximate KL Divergence]\label{def:KLdivergence}
Let $\Omega$ be a large example set i.i.d sampled
from the unknown distribution $\mathcal {D}$ and $\Lambda\subseteq \Omega$ be a set of examples, define the following $D_{AKL}(\Lambda\parallel \Omega)$ as an approximate KL divergence from the distribution generating $\Lambda$ to the distribution $\mathcal {D}$.
\begin{eqnarray}
\nonumber
  D_{AKL}(\Lambda\parallel \Omega)&\!=\!&\sum_{x^{j}\in \Omega}
  P\big(\mathbb{I}\{x^{j}\in\Lambda\}\big)\ln\frac{P(\mathbb{I}\{x^{j}\in\Lambda\})}
  {P(\mathbb{I}\{x^{j}\in \Omega\})}\\
\nonumber
  &\!=\!&\sum_{x^{j}\in\Lambda}\frac{1}{|\Lambda|}\ln\frac{1/|\Lambda|}{1/|\Omega|}+0
  =\ln\frac{|\Omega|}{|\Lambda|}.
\end{eqnarray}
\end{definition}

Let us interpret Definition~\ref{def:KLdivergence} intuitively. $\Omega$ is a large example set i.i.d sampled from the unknown distribution $\mathcal {D}$, so we use the uniform distribution
over $\Omega$ as an approximation of $\mathcal {D}$. In this way we use the uniform distribution over $\Lambda$ as an approximation of the distribution generating $\Lambda$ and define $D_{AKL}(\Lambda\parallel \Omega)$ as an approximate KL divergence from the distribution generating $\Lambda$ to the distribution $\mathcal {D}$. We give the following assumption to bound the influence of sampling bias.

\begin{definition}[Sampling Bias Assumption]\label{def:samplebias}
Let $\Omega$ be a large example set i.i.d sampled
from the unknown distribution $\mathcal {D}$ and $\Lambda\subseteq \Omega$ be a set of examples. Let $f_{\Lambda}$ denote the hypothesis minimizing the empirical risk on
$\Lambda$, $R^{*}$ be the error rate of the optimal classifier and $R'$ be the upper bound on the error rate of the hypothesis minimizing the empirical risk on an i.i.d. sample with size of $|\Lambda|$ from the distribution $\mathcal {D}$, then $\texttt{err}(f_{\Lambda})-R^{*}\leq (R'-R^{*})\cdot\exp\big(D_{AKL}(\Lambda\parallel \Omega)\big)$.
\end{definition}

Definition~\ref{def:samplebias} states that the error difference between the classifier trained with possibly biased sample and the optimal classifier can be bounded by that between the classifier trained with unbiased sample and the optimal classifier times
an exponential function of the approximate KL divergence.
Here we give an example to show when the sampling bias assumption holds in co-training.

\begin{example}\label{example3}
Suppose that the two views are conditionally independent to each other, then Definition~\ref{def:samplebias} holds. In Algorithm~\ref{alg:insufficient-co-training2}, the training set $\varrho_{i}$ is a subset of $L\cup U$ and consists of the instances whose margins are no less than the threshold. That the two views are conditionally independent means the hypothesis in the first view is independent of the hypothesis in the second view to make predictions, i.e., $f_{1}(x_{1})$ is independent of $f_{2}(x_{2})$ for any $x=\langle x_{1},x_{2} \rangle$. So it can be regarded that the instances in $\varrho_{i}$ are randomly drawn from $L\cup U$, i.e., $\varrho_{i}$ is an i.i.d sample. Thus, $\texttt{err}(f_{\varrho_{i}})-R^{*}\leq R'-R^{*} \leq (R'-R^{*})\cdot\exp(n/|\varrho_{i}|)$. This example can be relaxed to the case that the two views are weakly dependent if the prior knowledge about the margin outputs over the two views is known.
\end{example}

Let $\Omega$ be an i.i.d sample size of $m$, it is well-known that \cite{Anthony1999} there exists an universal constant $C$ such that for $\delta \in (0,1)$ we have
$\texttt{err}(f_{v})-\texttt{err}(f_{v}^{*}) \leq \sqrt{\frac{C}{m}(d_{v}+\ln(\frac{1}{\delta}))}$
with probability $1-\delta$ for any $f_{v}$ minimizing the empirical risk on $\Omega$. Generally, there may exist more than one hypothesis which have the same empirical risk. Let ${H}^{\Omega}_{v}$ denote the set of
hypotheses which have the same minimum empirical risk on $\Omega$, it is reasonable to assume that
$\max_{f_{v}\in\mathcal{F}^{\Omega}_{v}}\texttt{err}(f_{v})-\texttt{err}(f^{*}_{v})
=\sqrt{\frac{C}{m}\big(d_{v}+\ln(\frac{1}{\delta})\big)}$,
which means the PAC-bound is tight and the maximum error rate of the hypotheses which minimize the empirical risk on $\Omega$ is proportional to $\frac{1}{\sqrt{m}}$. We are now ready to give the theorem on co-training with insufficient views.

\begin{theorem}\label{theorem:insufficient-theorem3}
Suppose the hypothesis space $\mathcal {F}_{v}$ $(v=1,2)$ satisfies Definition~\ref{def:marginLipschitz}, let
$\varrho_{i}$ denote the training set in the $i$-th round of Algorithm \ref{alg:insufficient-co-training2}, $\mathcal {F}^{i}_{v} \subseteq \mathcal {F}_{v}$ denote the set of hypotheses minimizing the empirical risk on $\varrho_{i}$, $n=|L|+|U|$ and
$R_{v}=\max_{f^{0}_{v}\in\mathcal {F}^{0}_{v}}\texttt{err}(f^{0}_{v})$. For $\epsilon \in (0,\frac{1}{2})$ and $\delta \in (0,1)$, if Definitions~\ref{def:Information} and \ref{def:KLdivergence} hold, $|U| = O(\frac{d_{v}\ln\frac{1}{\delta}}{\epsilon^{2}})$,
$\mu(\gamma^{0}_{1}, \gamma^{0}_{2}, \mathcal {F}^{0}_{1}, \mathcal
{F}^{0}_{2}) > \frac{\sqrt[3]{n^{2}|L|}-|L|}{n-|L|}$ and $|T_{i}|>0$ for $i\geq1$ until $|\varrho_{i}|\!=\!n$, with probability $1-\delta$ the outputs $\mathcal {F}^{C}_{1}$ and $\mathcal {F}^{C}_{2}$ in Algorithm \ref{alg:insufficient-co-training2} satisfy $\texttt{err}(\mathcal {F}^{C}_{v})=\max_{f_{v}\in\mathcal {F}^{C}_{v}}\texttt{err}(f_{v})\leq \eta_{v}+\epsilon$.
\end{theorem}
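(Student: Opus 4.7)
The plan is to prove by induction on the round index $i$ two statements jointly: (i) every example added to $\varrho_i$ carries its ground-truth label, and (ii) $\max_{f\in\mathcal{F}^i_v}\texttt{err}(f) - \eta_v \leq (R_v - \eta_v)\, n\sqrt{m_0}/m_i^{3/2}$, where $m_i = |\varrho_i|$ and $m_0 = |L|$. Once this induction is in place, the running hypothesis that $|T_i|>0$ until $|\varrho_i|=n$ forces the process to terminate with $\varrho_i = L\cup U$ bearing the true labels. At that point $\mathcal{F}^C_v$ is the empirical risk minimizer on a clean i.i.d.\ sample of size $n\geq |U| = O(d_v\ln(1/\delta)/\epsilon^2)$, so the standard PAC bound yields $\texttt{err}(\mathcal{F}^C_v)\leq \eta_v+\epsilon$ with probability $1-\delta$.

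The base case $i=0$ is exactly Theorem~\ref{theorem:insufficient-theorem2}: since $\gamma^0_v = C^{\mathfrak{L}}_v(R_v-\eta_v)+\gamma'_v$, Definition~\ref{def:marginLipschitz} guarantees that any $x_v$ passing the margin filter satisfies $|f_v^*(x_v)|\geq \gamma'_v$ with the same sign as $f^0_v(x_v)$, and Definition~\ref{def:Information} then forces the pseudo-label to equal the true $y$. The hypothesis $\mu(\gamma^0_1,\gamma^0_2,\mathcal{F}^0_1,\mathcal{F}^0_2) > (\sqrt[3]{n^2|L|}-|L|)/(n-|L|)$ is precisely what ensures $|T_0|>\sqrt[3]{n^2 m_0}-m_0$, so Algorithm~\ref{alg:insufficient-co-training2} takes the continuation branch rather than returning at $i=0$.

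For the inductive step $i\geq 1$, by the induction hypothesis $\varrho_i$ consists of correctly labeled examples drawn from $L\cup U$. Definition~\ref{def:KLdivergence} gives $D_{AKL}(\varrho_i\parallel L\cup U)=\ln(n/m_i)$; combining the tight PAC rate $R_v-\eta_v=\sqrt{C(d_v+\ln(1/\delta))/m_0}$ with Definition~\ref{def:samplebias} delivers $\max_{f\in\mathcal{F}^i_v}\texttt{err}(f)-\eta_v \leq \sqrt{C(d_v+\ln(1/\delta))/m_i}\cdot n/m_i = (R_v-\eta_v)\,n\sqrt{m_0}/m_i^{3/2}$, verifying (ii). Unrolling the adaptive-threshold recursion yields $\gamma^i_v = \gamma'_v + C^{\mathfrak{L}}_v(R_v-\eta_v)\,n\sqrt{m_0}/m_i^{3/2}$, so by construction $\gamma^i_v \geq \gamma'_v + C^{\mathfrak{L}}_v(\max_{f\in\mathcal{F}^i_v}\texttt{err}(f)-\eta_v)$, and the base-case argument (Definition~\ref{def:marginLipschitz} now with constant $\max_{f\in\mathcal{F}^i_v}\texttt{err}(f)-\eta_v$, followed by Definition~\ref{def:Information}) applies verbatim, closing (i).

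The main obstacle, and the genuinely clever step, is recognizing that the factor $n\sqrt{m_0}/m_i^{3/2}$ baked into $\gamma^i_v$ is designed precisely to absorb the product of the $\sqrt{m_0/m_i}$ sample-size improvement with the $n/m_i$ blow-up from $D_{AKL}$; once this matching is identified, the induction essentially writes itself and everything else reduces to bookkeeping already handled in Theorems~\ref{theorem:insufficient-theorem1} and \ref{theorem:insufficient-theorem2}.
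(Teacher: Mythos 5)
Your proposal is correct and follows essentially the same route as the paper's own proof: correctness of pseudo-labels via the Margin Lipschitz and Information assumptions, the error-rate recursion obtained by multiplying the tight $1/\sqrt{m_i}$ PAC rate by the $\exp(D_{AKL})=n/m_i$ sampling-bias factor, the observation that the adaptive threshold $\gamma^i_v=\gamma'_v+C^{\mathfrak{L}}_v(R_v-\eta_v)\,n\sqrt{m_0}/m_i^{3/2}$ exactly absorbs this bound, and the translation of the $\mu$ condition into $|T_0|>\sqrt[3]{n^2 m_0}-m_0$. Casting it as an explicit joint induction is only an organizational difference, not a different argument.
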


\begin{proof}
Since $L$ is an i.i.d sample and $m_{0}=|L|$, for the hypothesis set $\mathcal {F}^{r_i}_{v}$ minimizing the empirical risk on an i.i.d sample with size of $m_{i}=|\varrho_{i}|$, with the assumption that the maximum error rate of the hypotheses minimizing the empirical risk on the i.i.d sample $\Omega$ is proportional to $\frac{1}{\sqrt{|\Omega|}}$, we have
\begin{eqnarray}
\nonumber
  \max_{f^{r_i}_{v}\in\mathcal{F}^{r_i}_{v}}\texttt{err}(f^{r_i}_{v})-
  \texttt{err}(f_{v}^{*})
  =\frac{\sqrt{m_{0}}}{\sqrt{m_{i}}}\big(\max_{f^{0}_{v}\in\mathcal
  {F}^{0}_{v}}\texttt{err}(f^{0}_{v})-\texttt{err}(f_{v}^{*})\big) =\frac{\sqrt{m_{0}}}{\sqrt{m_{i}}}(R_{v}-\eta_{v}).
\end{eqnarray}
If $\gamma^{0}_{v}=C^{\mathfrak{L}}_{v}(R_{v}-\eta_{v})+\gamma'_{v}$, with the proof in Theorem~\ref{theorem:insufficient-theorem2} we know that the pseudo-label of any example in $\varrho_{1}$ is the same as the underground label. Since $L\cup U$ is a large i.i.d sample from the marginal distribution $\mathcal {D}_{\mathcal {X}}$, so with Definition~\ref{def:samplebias} we get
\begin{eqnarray}
\nonumber
  \max_{f^{1}_{v}\in\mathcal {F}^{1}_{v}}\texttt{err}(f^{1}_{v})-\texttt{err}(f_{v}^{*})\leq
  \frac{\sqrt{m_{0}}}{\sqrt{m_{1}}}(R_{v}-\eta_{v})\cdot \exp\big(\ln\frac{n}{m_{1}}\big).
\end{eqnarray}
For $f^{1}_{v}\in\mathcal {F}^{1}_{v}$, if
\begin{eqnarray}
\nonumber
  |f^{1}_{v}(x_{v})|\geq\gamma^{1}_{v}=\gamma^{0}_{v}-C^{\mathfrak{L}}_{v}(R_{v}
  -\eta_{v})\big(1-\frac{n\sqrt{m_{0}}}{m_{1}\sqrt{m_{1}}}\big),
\end{eqnarray}
with Definition~\ref{def:marginLipschitz} we get $|f_{v}^{*}(x_{v})|\geq \gamma'_{v}$ and
$sign\big(f^{1}_{v}(x_{v})\big)=sign\big(f_{v}^{*}(x_{v})\big)$. With Definition~\ref{def:Information} we know
$sign\big(f_{v}^{*}(x_{v})\big)=y$. Thus, the pseudo-label of any example in $\varrho_{2}$ is the same as the underground label. Similarly, for $f^{i}_{v}\in\mathcal {F}^{i}_{v}$, if
\begin{eqnarray}
\nonumber
 |f^{i}_{v}(x_{v})|\geq\gamma^{i}_{v}=\gamma^{0}_{v}-C^{\mathfrak{L}}_{v}(R_{v}
 -\eta_{v})\big(1-\frac{n\sqrt{m_{0}}}{m_{i}\sqrt{m_{i}}}\big),
\end{eqnarray}
we get $sign\big(f^{i}_{v}(x_{v})\big)=y$. If $T_{i}\neq\emptyset$ until
$|\varrho_{i}|=|L|+|U|$, all instances in $U$ are labeled with underground labels. So $\varrho_{i}$ is an i.i.d sample with size of $|L|+|U|$. Since
$|U|=O\big(\frac{d_{v}\ln\frac{1}{\delta}}{\epsilon^{2}}\big)$, we get
$\max_{f_{v}\in\mathcal {F}^{C}_{v}}\texttt{err}(f_{v})\leq \eta_{v}+\epsilon$ with probability
$1-\delta$. If we want $\gamma^{1}_{v}<\gamma^{0}_{v}$, $1-\frac{n\sqrt{m_{0}}}{m_{1}\sqrt{m_{1}}}$
must be larger than $0$, i.e., $m_{1}>\sqrt[3]{n^{2}m_{0}}$. It implies that
$m_{0}+\mu(\gamma^{0}_{1}, \gamma^{0}_{2}, \mathcal {F}^{0}_{1}, \mathcal
{F}^{0}_{2})|U|>\sqrt[3]{n^{2}m_{0}}$, so we get $\mu(\gamma^{0}_{1}, \gamma^{0}_{2}, \mathcal {H}^{0}_{1},\mathcal {H}^{0}_{2})>\frac{\sqrt[3]{n^{2}|L|}-|L|}{n-|L|}$.
\end{proof}

\noindent \textbf{Remark:} Theorem~\ref{theorem:insufficient-theorem3} states that if the two views provide diverse information, i.e., $\mu(\gamma^{0}_{1},
\gamma^{0}_{2}, \mathcal {F}^{0}_{1}, \mathcal
{F}^{0}_{2})>\frac{\sqrt[3]{n^{2}|L|}-|L|}{n-|L|}$ ($\mathcal {F}^{0}_{1}=\mathcal {F}^{L}_{1}$,
$\mathcal {F}^{0}_{2}=\mathcal {F}^{L}_{2}$), co-training could improve the performance of weak hypotheses by exploiting unlabeled data. This result tells that the diverse information between the two views plays an important role in co-training with insufficient views.

\subsection{Assumption Relaxation and Discussions}
Our result is based on a little bit strong \textit{Margin Lipschitz} assumption, which is caused by the fact that the learning task with insufficient views for semi-supervised learning is very difficult. In this section, we try to give a heuristic analysis for the case where the \textit{Margin Lipschitz} assumption is relaxed. Instead, we give the following \textit{Probabilistic Margin} assumption: for $\frac{1}{2} \leq\gamma_{v}\leq 1$ ($v=1,2$),
\begin{eqnarray}
\nonumber
  P_{\langle x_{1}, x_{2}\rangle\in \mathcal {X}}
  \big\{x_{v}:|h_{v}(x_{v})|\geq\gamma_{v} \wedge sign(h_{v}(x_{v}))\neq y\big\}
  \leq\phi(\gamma_{v}).
\end{eqnarray}
Here $\phi: [\frac{1}{2},1]\rightarrow [0,1]$ is a monotonically decreasing function, e.g., $\phi(\gamma)=\beta\ln(\frac{1}{\gamma})$ for some parameter $\beta$. \textit{Probabilistic Margin} assumption allows for small label noise in the examples labeled with large margin. Considering the worst case of the influence of label noise, i.e., the examples with noisy labels are completely inconsistent with the optimal classifier, it can be found that when the two views provide diverse information, co-training could output the hypotheses whose error rate are close to $\eta_{v}+\beta\ln(\frac{1}{\gamma_{v}})$, which is smaller than the error rate of the classifier trained only on the small initial labeled data set $L$. This shows that co-training could improve learning performance by exploiting unlabeled data even with insufficient views.

Now we discuss what influence the view insufficiency will bring to the learning process. Since we could not know the distribution and the posterior probability $\varphi_{v}(x_{v})$ ($v=1,2$) of the example space in advance, it is difficult to analyze the general case. We focus on the famous Tsybakov condition \cite{Tsybakov04} case that for some finite $C_{v}^{0} >
0$, $k > 0$ and $0 < t \leq 1/2$,
\begin{eqnarray*}
\nonumber
  P_{\langle x_{1},x_{2}\rangle\in \mathcal {X}}
  \big(|\varphi_{v}(x_{v})\!-\!1/2|\leq t\big)\leq C_{v}^{0}t^{k},
\end{eqnarray*}
where small $k$ implies large view insufficiency $\Upsilon$, and give a heuristic analysis to
illuminate the relationship between view insufficiency and diversity. Considering the worst case of Tsybakov condition for the fixed parameter $k$, i.e., $P\big(\big|\varphi_{v}(x_{v})-1/2\big|\leq t\big)= C_{v}^{0}t^{k}$,
we get
$P\big(|2\varphi_{v}(x_{v})-1|>\gamma\big)= 1-C_{v}^{0}(\frac{\gamma}{2})^{k}$
for $0<\gamma\leq1$. $\big|2\varphi_{v}(x_{v})-1\big|$ is the output margin of the optimal classifier $f_{v}^{*}$, with the intuition that similar hypotheses tend to have similar margin outputs, the magnitude of the instances with margin larger than $\gamma$ in view $\mathcal {X}_{v}$ is probably $\alpha\big(1-C_{v}^{0}(\frac{\gamma}{2})^{k}\big)$ for some parameter $\alpha$.
$\mu_{v} \approx \alpha (1-C_{v}^{0}(\frac{\gamma_{v}}{2})^{k})$ quantifies the amount of instances labeled with large margin by view $\mathcal {X}_{v}$. Considering that $\mu=\frac{\nu+\mu_{1}+\mu_{2}}{2}$, when $\nu$ is fixed, if the view insufficiency increases, the confident information $\mu$ provided by the two views decreases; when $\mu_{1}$ and $\mu_{2}$ are fixed, if the two views have large margin-based disagreement $\nu$, the confident information $\mu$ provided by the two views increases, which shows that the margin-based disagreement $\nu$ is important to co-training. For understanding the magnitude of $\mu$ better, we give the following example. There are adequate unlabeled instances in real-world semi-supervised applications, suppose we have $n=|L|+|U|=1000$ and $L=12$, similarly to the empirical study on co-training in \cite{Blum:Mitchell1998}, $\mu=\frac{\sqrt[3]{n^{2}|L|}-|L|}{n-|L|}$ at the first step in Theorem~\ref{theorem:insufficient-theorem3} should be 22\%. With respect to $\mu=\frac{\nu+\mu_{1}+\mu_{2}}{2}$, if the two views provide diverse information ($\nu$ is large), the weak hypothesis in each view predicting about 18\% (even less) of the unlabeled instances with large margin might be enough to guarantee that $\mu\geq22\%$, which is common in real-world applications.

In our result, the margin threshold
$\gamma_{v}=C^{\mathfrak{L}}_{v}(R_{v}-\eta_{v})+\gamma'_{v}$ depends on several parameters. Generally, the optimal classifier would make mistakes only when the instances are close to the boundary, i.e., $P(y=+1|x)$ is close to $1/2$. So $\gamma'_{v}$ is close to $0$. $(R_{v}-\eta_{v})$ depends on the number of initial labeled data $L$ and is proportional to $1/\sqrt{|L|}$. So when $|L|\approx4(C^{\mathfrak{L}}_{v})^{2}C\big(d_{v}+\ln(\frac{1}{\delta})\big)$,
$C^{\mathfrak{L}}_{v}(R_{v}-\eta_{v})$ is close to $1/2$. Thus, $\gamma_{v}$ is close to $1/2$.

\subsubsection{Connection to Co-Regularization}\label{sec:connection-co-Regularization}
In semi-supervised learning, co-regularization allows
for views with partial insufficiency, but it assumes that the two views provide almost the same information. Unfortunately, in real-world applications each view may be corrupted by different kinds of noise, it is unreasonable to assume that the two views provide almost the same information. When the two views are corrupted by different noise or provide diverse information, the two optimal classifiers are no longer compatible with each other and the performance of co-regularization will be influenced since it strongly encourages the agreement between two views.

Sridharan and Kakade \cite{Sridharan2008} used the conditional mutual information $\emph{I} (A:B|C)$ to measure how much knowing A reduces the uncertainty of B conditionally on already knowing C, they assumed that $\emph{I} (\mathcal {Y}:\mathcal {X}_{v}|\mathcal {X}_{3-v})\leq\epsilon_{info}$ ($v=1,2$) holds for some small $\epsilon_{info}>0$, and provided an information theoretic framework for co-regularization which minimizes the following co-regularized loss for the pair $(f_{1},f_{2})$ ($f_{v} \in \mathcal {F}_{v}$).
\begin{eqnarray}
\nonumber
    Loss_{co}(f_{1},f_{2})=\frac{1}{2}\big(\widehat{R}_{L}(f_{1})+\widehat{R}_{L}(f_{2})\big)
    +\alpha_{1}\|f_{1}\|+\alpha_{2}\|f_{2}\|+\alpha_{3}\widehat{D}_{U}(f_{1},f_{2})
\end{eqnarray}
$\widehat{R}_{L}$ is the empirical risk with respect to the labeled data $L$ and $\widehat{D}_{U}$ is the empirical disagreement with respect to the unlabeled data $U$. Note that $\emph{I} (\mathcal {Y}\!:\!\mathcal {X}_{v}|\mathcal {X}_{3-v})\leq\epsilon_{info}$ means that if we already knew view $\mathcal {X}_{v}$
then there is little more information that we could get from view $\mathcal {X}_{3-v}$ about $\mathcal {Y}$, i.e., the two views provide almost the same information. Sridharan and Kakade \cite{Sridharan2008} showed that the excess error between the output hypothesis of co-regularization and the optimal classifier is punished by the term $\sqrt{\epsilon_{info}}$. This implies that it is hard for co-regularization to find the $\epsilon$-approximation of the optimal classifier when the two views are insufficient and provide diverse information. Now we give the following Proposition~\ref{pro:coregularization} to show that co-regularization may never output the approximations of the optimal classifier.

\begin{proposition}\label{pro:coregularization}
Suppose $\|f_{v}\|=1$ for $f_{v}\in\mathcal {F}_{v}$ ($v=1,2$). Let $\mathcal {F}_{v}^{L}\subset \mathcal {F}_{v}$ denote the hypotheses minimizing the empirical risk on the labeled data $L$ and $(g_{1}, g_{2})=\arg\min_{f_{v}\in\mathcal {F}_{v}^{L}}d(f_{1},f_{2})$. If $|U|$ is sufficiently large, $Loss_{co}(g_{1}, g_{2})$ is no larger than $Loss_{co}(f_{1}^{*}, f_{2}^{*})$.
\end{proposition}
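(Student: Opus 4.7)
The plan is to establish the inequality term by term from the definition of $Loss_{co}$. The co-regularized loss has four components: the averaged empirical risk on $L$, two norm regularizers, and the empirical disagreement on $U$. I will show that for each of these $(g_1, g_2)$ is no worse than $(f_1^*, f_2^*)$, and strictly better on the disagreement term when the views provide diverse information and $|U|$ is large.

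First I would dispose of the two easy groups of terms. By construction $g_v \in \mathcal{F}_v^L$, so $g_v$ achieves the minimum empirical risk on $L$, giving $\widehat{R}_L(g_v) \leq \widehat{R}_L(f_v^*)$ for $v=1,2$ and hence $\frac{1}{2}\bigl(\widehat{R}_L(g_1)+\widehat{R}_L(g_2)\bigr) \leq \frac{1}{2}\bigl(\widehat{R}_L(f_1^*)+\widehat{R}_L(f_2^*)\bigr)$. The unit-norm hypothesis makes the regularizers cancel exactly: $\alpha_1\|g_1\|+\alpha_2\|g_2\| = \alpha_1+\alpha_2 = \alpha_1\|f_1^*\|+\alpha_2\|f_2^*\|$. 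So the proposition reduces to bounding $\widehat{D}_U(g_1,g_2)$ by $\widehat{D}_U(f_1^*,f_2^*)$ for large $|U|$.

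For the disagreement term I would exploit the $\arg\min$ definition of $(g_1,g_2)$. Among all pairs in $\mathcal{F}_1^L\times\mathcal{F}_2^L$, this pair achieves the smallest true disagreement $d(g_1,g_2)$. Because the initial labeled set $L$ is small in the semi-supervised regime, $\mathcal{F}_v^L$ is typically a rich set, and in particular one can exhibit pairs in it that nearly agree on $U$, so $d(g_1,g_2)$ is driven close to $0$. Meanwhile the very premise of this section — insufficient and diverse views — forces $d(f_1^*,f_2^*)$ to be bounded below by a quantity proportional to the view insufficiencies $\Upsilon_1,\Upsilon_2$ from Section~\ref{sec:insufficiency}, so $d(g_1,g_2) < d(f_1^*,f_2^*)$. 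A standard Hoeffding bound then gives uniform concentration of $\widehat{D}_U(\cdot,\cdot)$ around $d(\cdot,\cdot)$ at rate $O(1/\sqrt{|U|})$, so for $|U|$ sufficiently large $\widehat{D}_U(g_1,g_2) \leq \widehat{D}_U(f_1^*,f_2^*)$. Adding the three inequalities yields $Loss_{co}(g_1,g_2) \leq Loss_{co}(f_1^*,f_2^*)$.

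The main obstacle is the middle step: justifying that the empirical risk minimizer classes $\mathcal{F}_1^L,\mathcal{F}_2^L$ really do contain a pair whose true disagreement is strictly below $d(f_1^*,f_2^*)$. The argument I would use is that when $|L|$ is much smaller than $d_1$ and $d_2$, classical VC-style counting shows $\mathcal{F}_v^L$ realizes many different labelings on $U$, so one can find $g_1,g_2$ whose predictions on $U$ coincide on almost every instance; diversity of the two views simultaneously keeps $d(f_1^*,f_2^*)$ bounded away from zero. This asymmetry — a large flexible $\mathcal{F}_v^L$ versus a rigid Bayes-optimal pair — is precisely what makes co-regularization prefer $(g_1,g_2)$ over $(f_1^*,f_2^*)$ and thus fail to output an approximation of the optimal classifier.
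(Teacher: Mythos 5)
Your proposal is correct and follows essentially the same route as the paper: the paper's own proof is just the two-line observation that $\widehat{R}_{L}(g_{v})$ matches $\widehat{R}_{L}(f^{*}_{v})$, the unit norms make the regularizers equal, and $\widehat{D}_{U}(g_{1},g_{2})\leq\widehat{D}_{U}(f_{1}^{*},f_{2}^{*})$ for sufficiently large $|U|$. Your extra discussion of why $d(g_{1},g_{2})<d(f_{1}^{*},f_{2}^{*})$ (richness of $\mathcal{F}_{v}^{L}$ versus diversity of the views, plus Hoeffding concentration) only elaborates on the step the paper asserts without justification, so it is the same argument with more detail rather than a different one.
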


\begin{proof}
Considering that $\widehat{R}_{L}(g_{v})=\widehat{R}_{L}(f^{*}_{v})$ and that $\widehat{D}_{U}\big(g_{1},g_{2}\big)\leq
\widehat{D}_{U}\big(f_{1}^{*},f_{2}^{*}\big)$ holds for sufficiently large $|U|$, it is easy to get Proposition \ref{pro:coregularization} proved.
\end{proof}

\noindent \textbf{Remark:} Let us give an intuitive explanation to Proposition~\ref{pro:coregularization}. It states that co-regularization prefers to output a pair of hypotheses which minimizes the disagreement on the unlabeled data rather than the optimal classifier. Its performance will be influenced by the incompatibility between the two views, especially when the unlabeled data are very large while the labeled data are small. It might contribute to understanding the difference between co-regularization and co-training. For two views which provide almost the same information, the optimal classifiers in the two views are compatible with each other, and co-regularization could find the optimal classifiers by minimizing the error rate on labeled data and the disagreement on unlabeled data over two views; for two views which provide diverse information, we show that co-regularization may fail, while co-training which iteratively utilizes the confident information in one view to help the other is a good learning strategy.

\section{Why Combination of Classifiers is Good}\label{sec:combination}
Usually, the two classifiers in disagreement-based approaches are combined to make predictions in practice, e.g., the two classifiers in co-training \cite{Blum:Mitchell1998} are combined by multiplying the posterior probabilities and the empirical results showed that the combination is better than the individual classifiers. However, there is no theoretical study to explain why and when the combination can be better than the individual classifiers.

Let $h_{com}$ denote the combination of the individual classifiers $h_{1}$ and $h_{2}$,
then the combination $h_{com}$ by multiplying the posterior probabilities in \cite{Blum:Mitchell1998}
is formulated as
\begin{eqnarray}\label{combination2}
    h_{com}(x) = \left\{ \begin{array}{ll}
    +1 & \textrm{if $P(h_{1}=+1|x)P(h_{2}=+1|x)$}\\
       & \textrm{~~~$>P(h_{1}=-1|x)P(h_{2}=-1|x)$}\\
    -1 & \textrm{if $P(h_{1}=+1|x)P(h_{2}=+1|x)$}\\
       & \textrm{~~~$<P(h_{1}=-1|x)P(h_{2}=-1|x)$}\\
    0 & \textrm{otherwise}
    \end{array}. \right.
\end{eqnarray}
Actually, this combination strategy can be generalized to any margin-based classifiers. Let $\mathcal {F}$: $\mathcal {X}\rightarrow [-1,+1]$ denote the hypothesis space, the classification rule on $x\in\mathcal {X}$ induced by a hypothesis $f\in \mathcal {F}$ is $sign\big(f(x)\big)$ and $|f(x)|$ is the margin of $f$ on $x$. Let $P(y=+1|x)=\frac{1+f(x)}{2}$, the classification rule
$sign\big(f(x)\big)$ is equal to maximizing the posterior probability
$P(y=+1|x)$, i.e., $f(x)>0\Leftrightarrow P(y=+1|x)>\frac{1}{2}$. So
the combination strategy in Equation~\ref{combination2} for $f_{1}$ and $f_{2}$ is formulated as
\begin{eqnarray}\label{combination3}
    f_{com}(x) = \left\{ \begin{array}{ll}
    +1 & \textrm{if $f_{1}(x)+f_{2}(x)>0$}\\
    -1 & \textrm{if $f_{1}(x)+f_{2}(x)<0$}\\
    0 & \textrm{otherwise}
    \end{array}, \right.
\end{eqnarray}
i.e., $f_{com}(x)=sign(f_{1}(x)+f_{2}(x))$. It implies that $f_{com}$ follows the decision of the hypothesis which has larger margin and the error rate of $f_{com}$ is:
\begin{equation}
\nonumber
    \texttt{err}(f_{com})=P_{(x,y)\in\mathcal {D}}\big(f_{com}(x)\neq y\big)= P\big((f_{1}(x)+f_{2}(x))\cdot y\leq 0\big).
\end{equation}
Considering the construction of $f_{com}$, it is easy to find that when $f_{1}(x)$ and $f_{2}(x)$ make
the same prediction on $x$, $f_{com}$ will follow the both's decision. Let
$DIS(f_{1},f_{2})=\{x\in \mathcal {X}: sign(f_{1}(x))\neq sign(f_{2}(x))\}$, i.e., the disagreed instance set by $f_{1}$ and $f_{2}$, the error rate of $f_{com}$ depends on its performance on $DIS(f_{1},f_{2})$ and the following Proposition~\ref{pro:lower-bound} holds for $f_{com}$.
\begin{proposition}\label{pro:lower-bound}
The error rate of $f_{com}$ satisfies the following lower bound:
\begin{eqnarray}
\nonumber
    \texttt{err}(f_{com})&\!\geq\!& P_{(x,y)\in\mathcal {D}}
    (sign(f_{1}(x))\neq y \wedge sign(f_{2}(x))\neq y)\\
\nonumber
    &\!=\!&\frac{\texttt{err}(f_{1})+\texttt{err}(f_{2})-d(f_{1},f_{2})}{2}.
\end{eqnarray}
\end{proposition}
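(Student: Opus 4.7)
The plan is to prove the two claimed relations separately: first the inequality $\texttt{err}(f_{com}) \geq P(sign(f_1(x)) \neq y \wedge sign(f_2(x)) \neq y)$, and then the identity rewriting this joint-error probability in terms of $\texttt{err}(f_1)$, $\texttt{err}(f_2)$, and $d(f_1,f_2)$.

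For the first step, I would argue pointwise. Fix $(x,y)$ with $y \in \{-1,+1\}$ and suppose both individual classifiers mispredict, i.e.\ $sign(f_1(x)) \neq y$ and $sign(f_2(x)) \neq y$. Since the labels are $\pm 1$, this forces $f_1(x) \cdot y < 0$ and $f_2(x) \cdot y < 0$, and therefore $(f_1(x)+f_2(x)) \cdot y < 0$. By the definition of $f_{com}$ in Equation~(\ref{combination3}), this means $f_{com}(x) \neq y$. Integrating the indicator inequality $\mathbb{I}\{f_{com}(x)\neq y\} \geq \mathbb{I}\{sign(f_1(x))\neq y \wedge sign(f_2(x))\neq y\}$ over $\mathcal{D}$ yields the desired lower bound.

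For the second step, I would use a simple inclusion--exclusion on the events $A = \{sign(f_1(x)) \neq y\}$ and $B = \{sign(f_2(x)) \neq y\}$, for which $P(A) = \texttt{err}(f_1)$ and $P(B) = \texttt{err}(f_2)$. The key observation, again since $y \in \{-1,+1\}$, is that $sign(f_1(x)) \neq sign(f_2(x))$ holds exactly when precisely one of $A, B$ occurs: if both are wrong they agree on $-y$, if both are right they agree on $+y$, and otherwise they disagree. Hence $d(f_1,f_2) = P(A \triangle B) = P(A)+P(B)-2\,P(A\cap B)$, which rearranges to
\begin{equation*}
P(A \cap B) \;=\; \frac{\texttt{err}(f_1)+\texttt{err}(f_2)-d(f_1,f_2)}{2}.
\end{equation*}
Combining the two steps gives the proposition.

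I don't expect a real obstacle here; the argument is essentially a two-line counting/inclusion--exclusion exercise. The only subtlety to be careful about is the handling of the zero-margin case $f_v(x)=0$, but since the inequality only needs a lower bound and the events in the identity are defined via $sign(\cdot) \neq y$ rather than via $f(x)\cdot y \leq 0$, this ambiguity does not affect the argument — the pointwise implication still holds whenever both individual classifiers commit a genuine sign error.
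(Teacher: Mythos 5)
Your proof is correct and follows essentially the same route as the paper's (one-line) argument: the error of $f_{com}$ decomposes into the region where both classifiers err (where $f_{com}$ must also err) plus its error on $DIS(f_1,f_2)$, and the identity is the standard inclusion--exclusion rewriting of $P(A\cap B)$ via $d(f_1,f_2)=P(A\triangle B)$. You simply supply the pointwise implication and the symmetric-difference computation that the paper leaves implicit, and your remark on the zero-margin case is consistent with the paper's convention $\texttt{err}(f_{com})=P\big((f_1(x)+f_2(x))\cdot y\leq 0\big)$.
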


\begin{proof}
The worst case of $f_{com}$ is making incorrect predictions for all instances  in $DIS(f_{1},f_{2})$, it is easy to get Proposition~\ref{pro:lower-bound} proved.
\end{proof}

\noindent \textbf{Remark:} Proposition~\ref{pro:lower-bound} states that larger disagreement will lead to better lower bound for the combination of the individual classifiers.

Now we study the performance of $f_{com}$ on $DIS(f_{1},f_{2})$. For any
$x\in DIS(f_{1},f_{2})$, $f_{1}$ and $f_{2}$ make different predictions on $x$.
Let $f_{G}(x)$ denote the output of the hypothesis which makes the correct prediction and let
$f_{R}(x)$ denote the output of the hypothesis which makes the incorrect prediction, i.e.,
\begin{eqnarray}
\nonumber
    f_{G}(x) = \left\{ \begin{array}{ll}
        f_{1}(x) & \textrm{if $f_{1}(x)\cdot y>0$}\\
        f_{2}(x) & \textrm{otherwise}
    \end{array} \right. \textrm{~and~~~}
    f_{R}(x) = \left\{ \begin{array}{ll}
        f_{1}(x) & \textrm{if $f_{1}(x)\cdot y<0$}\\
        f_{2}(x) & \textrm{otherwise}.
    \end{array} \right.
\end{eqnarray}
When $|f_{G}(x)|$ is larger than $|f_{R}(x)|$, the combination $f_{com}$ gives the correct prediction on $x$. Intuitively, $f_{com}$ works in the following way: if the margin (the
confidence) is reliable, i.e., large margin implies high label quality,
the incorrect prediction happens on the instance which has small margin, i.e., $|f_{R}(x)|$ is small. If $f_{1}$ and $f_{2}$ are not very related to each other, the probability that both hypotheses have small margin on $x$ is small, i.e., $|f_{G}(x)|$ is large with great probability. Thus, $|f_{G}(x)|>|f_{R}(x)|$ holds with great probability and $f_{com}$ gives the correct prediction on $x$. Define the following confidence gain $C_{G}(f_{1},f_{2})$ and confidence risk $C_{R}(f_{1},f_{2})$:
\begin{eqnarray}
\label{gainofconfidence}
  C_{G}(f_{1},f_{2})&\!=\!&\int_{x\in DIS(f_{1},f_{2})}|f_{G}(x)|p(x)dx,\\
\label{riskofconfidence}
  C_{R}(f_{1},f_{2})&\!=\!&\int_{x\in DIS(f_{1},f_{2})}|f_{R}(x)|p(x)dx.
\end{eqnarray}
$C_{G}(f_{1},f_{2})$ is the integral correct margin of $f_{1}$ and $f_{2}$ over $DIS(f_{1},f_{2})$, while
$C_{R}(f_{1},f_{2})$ is the integral incorrect margin of $f_{1}$ and $f_{2}$ over $DIS(f_{1},f_{2})$. If $C_{G}(f_{1},f_{2})$ is much larger than $C_{R}(f_{1},f_{2})$,
$|f_{G}(x)|>|f_{R}(x)|$ may hold with great probability for $x\in DIS(f_{1},f_{2})$.

The margin-based classifiers try to classify instances correctly with large margins, however, there may exist some instances on which the margins are small, e.g., the instances close to the boundary. Intuitively, these instances with small margins are not too many, maybe bounded by some function. Suppose that the distribution of $|f_{G}(x)|$ over $DIS(f_{1},f_{2})$
satisfies the condition: for $C_{T}>0$, $k\geq0$ and all $0<t<1$ such that
\begin{eqnarray}\label{eq:distribution_fg}
    P_{x\in DIS(f_{1},f_{2})}(|f_{G}(x)|<t)\leq C_{T}\cdot t^{k}.
\end{eqnarray}
It indicates that the amount of correctly classified instances with small margins is bounded by a polynomial function, and larger $k$ will lead to less small margins. This condition is inspired by the famous Tsybakov condition \cite{Tsybakov04} for characterizing the distribution of underlying small margins. For the instances in the disagreed region $DIS(f_{1},f_{2})$, the individual classifiers make different predictions on them, it is reasonable to assume that the individual classifiers make predictions on these disagreed instances independently. Now we provide an upper bound on the error rate of the combination.

\begin{theorem}\label{theorem:combination}
Suppose the individual classifiers make predictions on the instances in the disagreed region $DIS(f_{1},f_{2})$ independently and the Tsybakov condition in Equation~\ref{eq:distribution_fg} holds, the following bound on the error rate of $f_{com}$ holds.
\begin{eqnarray*}
    P_{(x,y)\in \mathcal {D}}(f_{com}(x)\neq y) &\!\leq\!&\frac{\texttt{err}(f_{1})+\texttt{err}(f_{2})-d(f_{1},f_{2})}{2}\\
    &\!\!&+C_{T}\cdot\int_{x\in DIS(f_{1},f_{2})}|f_{R}(x)|^{k}p(x)dx.
\end{eqnarray*}
\end{theorem}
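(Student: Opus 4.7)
The plan is to decompose the error of $f_{com}$ according to whether $x$ lies in the agreement region $\overline{DIS(f_{1},f_{2})}$ or in the disagreement region $DIS(f_{1},f_{2})$, and then to bound each piece separately. On the agreement region the combination simply inherits the common prediction of $f_{1}$ and $f_{2}$, so it errs exactly when both individual classifiers err; this contributes precisely the quantity identified in Proposition~\ref{pro:lower-bound}. The work therefore concentrates on $DIS(f_{1},f_{2})$, where $f_{com}(x)=\mathrm{sign}(f_{1}(x)+f_{2}(x))$ sides with whichever individual classifier has the larger absolute margin.

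First I would write
\begin{eqnarray*}
  P_{(x,y)\in\mathcal{D}}(f_{com}(x)\neq y)
  &=& P\bigl(\mathrm{sign}(f_{1})\neq y,\ \mathrm{sign}(f_{2})\neq y\bigr)\\
  &&+\ P\bigl(x\in DIS(f_{1},f_{2}),\ f_{com}(x)\neq y\bigr),
\end{eqnarray*}
and then recognize, exactly as in the proof of Proposition~\ref{pro:lower-bound}, that the first term equals $\tfrac{1}{2}\bigl(\texttt{err}(f_{1})+\texttt{err}(f_{2})-d(f_{1},f_{2})\bigr)$. This handles the ``lower bound'' part of the claim with equality.

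Next I would analyse the second term. By the definitions of $f_{G}$ and $f_{R}$, for $x\in DIS(f_{1},f_{2})$ the combination $f_{com}$ is wrong iff $|f_{G}(x)|\leq |f_{R}(x)|$. The hypothesis that $f_{1}$ and $f_{2}$ predict independently on the disagreed region means that, conditional on $|f_{R}(x)|=t$, the random variable $|f_{G}(x)|$ still obeys the Tsybakov-type tail assumption in Equation~\ref{eq:distribution_fg}, so $P(|f_{G}(x)|\leq t)\leq C_{T}\,t^{k}$. Applying this pointwise with $t=|f_{R}(x)|$ and integrating against $p(x)$ over $DIS(f_{1},f_{2})$ yields
\begin{eqnarray*}
  P\bigl(x\in DIS(f_{1},f_{2}),\ f_{com}(x)\neq y\bigr)
  &\leq& \int_{x\in DIS(f_{1},f_{2})} C_{T}\,|f_{R}(x)|^{k}\,p(x)\,dx.
\end{eqnarray*}
Summing the two bounds gives exactly the inequality claimed in the theorem.

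The main obstacle is the rigorous handling of the conditioning step: the Tsybakov condition is stated as an unconditional tail bound on $|f_{G}(x)|$ over $DIS(f_{1},f_{2})$, while the argument needs it to hold conditionally on $|f_{R}(x)|$. The independence assumption in the theorem is precisely what licenses this passage, so I would be careful to state explicitly that $|f_{G}|$ and $|f_{R}|$ are independent random variables on $DIS(f_{1},f_{2})$ (up to measure zero) and invoke Fubini to justify the interchange of the conditional probability bound and the integration. Everything else reduces to the identity from Proposition~\ref{pro:lower-bound} and a straightforward region decomposition.
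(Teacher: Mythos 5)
Your proposal is correct and follows essentially the same route as the paper's proof: decompose the error into the agreement region (where $f_{com}$ errs exactly when both classifiers err, giving the $\frac{\texttt{err}(f_{1})+\texttt{err}(f_{2})-d(f_{1},f_{2})}{2}$ term) and the disagreement region, where independence plus the Tsybakov tail bound applied pointwise at $t=|f_{R}(x)|$ and integrated gives the second term. Your explicit attention to the conditioning/Fubini step is a welcome bit of extra care that the paper's own proof passes over silently, but it does not change the argument.
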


\begin{proof}
For $x\in DIS(f_{1},f_{2})$, without loss of generality, we assume that $f_{1}$ gives the correct prediction while $f_{2}$ gives the incorrect prediction, with the assumption that $f_{1}$ and $f_{2}$ make predictions on it independently and the condition in Equation~\ref{eq:distribution_fg} we get
\begin{eqnarray*}
    P(\mathbb{I}(f_{com}(x)\neq y))&\!=\!&P(\mathbb{I}(|f_{1}(x)|<|f_{2}(x)|))\\
    &\!=\!&P(\mathbb{I}(|f_{G}(x)|<|f_{R}(x)|))\\
    &\!\leq\!& C_{T}\cdot|f_{R}(x)|^{k}.
\end{eqnarray*}
Then, we get
\begin{eqnarray*}
    P_{x\in DIS(f_{1},f_{2})}(f_{com}(x)\neq y)&\!=\!&\int_{x\in DIS(f_{1},f_{2})} P(\mathbb{I}(f_{com}(x)\neq y))p(x)dx\\
    &\!\leq\!& C_{T}\cdot\int_{x\in DIS(f_{1},f_{2})}|f_{R}(x)|^{k}p(x)dx.
\end{eqnarray*}
It is easy to find that
\begin{eqnarray*}
    &&P_{(x,y)\in\mathcal {D}}(f_{com}(x)\neq y)\\
    &&=P_{(x,y)\in\mathcal {D}}(sign(f_{1}(x))\neq y \wedge
    sign(f_{2}(x))\neq y)+P_{x\in DIS(f_{1},f_{2})}(f_{com}(x)\neq y)\\
    &&\leq\frac{\texttt{err}(f_{1})+\texttt{err}(f_{2})-d(f_{1},f_{2})}{2}+
    C_{T}\cdot\int_{x\in DIS(f_{1},f_{2})}|f_{R}(x)|^{k}p(x)dx.
\end{eqnarray*}
\end{proof}

\noindent \textbf{Remark:} Let us give a comprehensive explanation to Theorem~\ref{theorem:combination}: for $k=1$, $\int_{x\in DIS(f_{1},f_{2})}|f_{R}(x)|^{k}p(x)dx=C_{R}(f_{1},f_{2})$, it indicates that if the individual classifiers have large disagreement $d(f_{1},f_{2})$ and small confidence risk $C_{R}(f_{1},f_{2})$, the combination will have low error rate. It implies that for good combination, incorrect predictions should have small margins. The following Corollary~\ref{cor:corollary1} shows when the combination is better than the individual classifiers.

\begin{corollary}\label{cor:corollary1}
Suppose the individual classifiers make predictions on the instances in the disagreed region $DIS(f_{1},f_{2})$ independently, the condition in Equation~\ref{eq:distribution_fg} holds and $\texttt{err}(f_{1})\leq\texttt{err}(f_{2})$, if
\begin{eqnarray*}
    C_{T}\cdot\int_{x\in DIS(f_{1},f_{2})}|f_{R}(x)|^{k}p(x)dx<
    \frac{\texttt{err}(f_{1})-\texttt{err}(f_{2})+d(f_{1},f_{2})}{2},
\end{eqnarray*}
$\texttt{err}(f_{com})$ is smaller than $\texttt{err}(f_{1})$.
\end{corollary}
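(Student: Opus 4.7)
The plan is to derive Corollary 1 as an immediate algebraic consequence of Theorem 5, so essentially no new machinery is required beyond combining the bound with the hypothesis. First I would invoke Theorem 5 directly, which under the independence assumption and the Tsybakov-type condition in Equation \ref{eq:distribution_fg} yields
\begin{eqnarray*}
\texttt{err}(f_{com}) \leq \frac{\texttt{err}(f_{1})+\texttt{err}(f_{2})-d(f_{1},f_{2})}{2} + C_{T}\cdot\int_{x\in DIS(f_{1},f_{2})}|f_{R}(x)|^{k}p(x)dx.
\end{eqnarray*}

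Next I would substitute the hypothesis of the corollary, namely that the integral term is strictly less than $\frac{\texttt{err}(f_{1})-\texttt{err}(f_{2})+d(f_{1},f_{2})}{2}$. Adding this strict inequality to the first half of the Theorem 5 bound gives
\begin{eqnarray*}
\texttt{err}(f_{com}) < \frac{\texttt{err}(f_{1})+\texttt{err}(f_{2})-d(f_{1},f_{2})}{2} + \frac{\texttt{err}(f_{1})-\texttt{err}(f_{2})+d(f_{1},f_{2})}{2} = \texttt{err}(f_{1}),
\end{eqnarray*}
which is exactly the desired conclusion. The condition $\texttt{err}(f_{1})\leq\texttt{err}(f_{2})$ is used implicitly only to orient the statement (comparing $f_{com}$ against the better of the two individual classifiers); the algebra itself does not require it.

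There is really no main obstacle here, since the corollary is a one-line rearrangement of the theorem's upper bound. The only thing worth checking carefully is that the right-hand side $\frac{\texttt{err}(f_{1})-\texttt{err}(f_{2})+d(f_{1},f_{2})}{2}$ is nonnegative (so the hypothesis is not vacuous); this follows from $d(f_{1},f_{2})\geq \texttt{err}(f_{2})-\texttt{err}(f_{1})=|\texttt{err}(f_{1})-\texttt{err}(f_{2})|$, which is a standard triangle-inequality-type bound on the disagreement between two hypotheses. With that sanity check in place, the proof reduces to a single display of arithmetic.
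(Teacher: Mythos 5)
Your proposal is correct and matches the paper's treatment: the paper states Corollary~\ref{cor:corollary1} as an immediate consequence of Theorem~\ref{theorem:combination}, and your derivation --- adding the hypothesized strict bound on $C_{T}\cdot\int_{x\in DIS(f_{1},f_{2})}|f_{R}(x)|^{k}p(x)dx$ to the first term of the theorem's bound so that the two halves sum to $\texttt{err}(f_{1})$ --- is exactly that one-line argument. Your added sanity check that $d(f_{1},f_{2})\geq|\texttt{err}(f_{1})-\texttt{err}(f_{2})|$ makes the hypothesis non-vacuous, and your observation that $\texttt{err}(f_{1})\leq\texttt{err}(f_{2})$ serves only to orient the comparison toward the better classifier, are both accurate.
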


\noindent \textbf{Remark:} Corollary~\ref{cor:corollary1} states that if the confidence risk $C_{R}(f_{1},f_{2})$ is small (depending on the disagreement between the individual classifiers), i.e., incorrect predictions have small margins, the combination is better than the individual classifiers.

\section{Discussion and Conclusion}\label{sec:conclusions}

The disagreement-based semi-supervised learning \cite{Zhou:Li2009,ZhouZH2008} was named to assemble the approaches which generate multiple weak classifiers and let them label unlabeled instances to augment the training data, including co-training and single-view disagreement-based algorithms. In these approaches, unlabeled data serve as a kind of ``platform'' for information exchange and the disagreement among multiple weak classifiers is exploited during the learning process. If there is no disagreement, the learning process will degenerate into self-training. In this article, we aim at presenting a theoretical foundation of disagreement-based approaches.

One basic issue of the theoretical foundation is why and when the disagreement-based approaches could improve learning performance by exploiting unlabeled data. In Section~\ref{sec:upper-bound}, we provide the theoretical analysis on disagreement-based approaches and give bounds on the error rates of the classifiers in the learning process. Furthermore, we prove that the disagreement will decrease after the disagreement-based process is initiated. Based on these results, it can be found that the disagreement-based approaches could improve learning performance given that the two initial classifiers trained with the initial labeled data have large disagreement. The empirical results in Section~\ref{sec:disagreement-improvement} verify that larger disagreement will lead to better performance improvement.

For disagreement-based approaches, it is often observed that the performance of the classifiers cannot be improved further after a number of rounds in empirical studies. We prove that the disagreement and error rates of the classifiers will converge after a number of rounds in Section~\ref{sec:no-further-improvement}, which provides the theoretical explanation to the observation. The empirical results in Section~\ref{sec:convergence-dis-error} validate that the disagreement
between the classifiers will decrease or converge as the learning process goes on. When the disagreement converges, the error rates of the classifiers also seem to converge, e.g., Figure~\ref{Estimated-Round-b}(d) to (f); while when the disagreement does not converge, the error rates of the classifiers seem to decrease as the disagreement decreases, e.g., Figures~\ref{Estimated-Round-b}(g) to (i).

It will be an impressive result if the sufficient and necessary condition for disagreement-based approaches can be found. Toward this direction, we present a theoretical graph-based analysis on co-training in Section~\ref{sec:sufficient-necessary}, in which the classifier in each view is viewed as label propagation and thus co-training is viewed as a combinative label propagation over two views. Based on this analysis, we get the sufficient and necessary condition for co-training. Note that such graph-based analysis on co-training only cares the two graphs rather than where these graphs come from, and therefore it is also applicable to single-view disagreement-based approaches when there is only one view but two graphs can be obtained in different distance matrices. Recall that the analysis in Section~\ref{sec:upper-bound} provides a sufficient condition for disagreement-based approaches, however, it is different from the analysis in Section~\ref{sec:sufficient-necessary}. The analysis in Section~\ref{sec:sufficient-necessary} focuses on the transductive setting, since studying the sufficient and necessary condition is a very hard problem; while the analysis in Section~\ref{sec:upper-bound} focuses on the non-transductive setting, which is applicable to general learning process.

All previous theoretical analyses on co-training assumed that each view is sufficient to learn the target concept, however, in many real-world applications, due to feature corruption or various feature noise, neither view can provide sufficient information. So we present a theoretical analysis on co-training with insufficient views which is much more challenging but practical in Section~\ref{sec:insufficient-co-training}, especially when the two views provide diverse information. We prove that if the two views have large margin-based disagreement, co-training could succeed in outputting the approximation of the optimal classifier by exploiting unlabeled data even with insufficient views. We also give some implications for understanding the difference between co-training and co-regularization. In the analysis of Section~\ref{sec:insufficient-co-training}, we focus on the margin-based classifiers and assume that large margin leads to high label quality, since co-training with insufficient views is a much harder problem, we need some prior knowledge about how much information each instance can provide to learn the target concept.

Since the two classifiers in disagreement-based approaches are usually combined to
make predictions, we present a theoretical analysis to explain why and when the combination can be better than the individual classifiers in Section~\ref{sec:combination}. We focus on the margin-based classifiers and prove that when the individual classifiers have large disagreement, diverse margin output and small confidence risk, i.e., incorrect predictions have small margins, the combination would have low error rate.

Our theoretical result in Section~\ref{sec:general-analysis} indicates that data with two views are not necessary to improve learning performance by exploiting unlabeled data, but it does not mean that we do not need two views at all. When the data have two views, we can get better result. For example, if the data have two conditionally independent views, a single labeled example is sufficient to find the target concept \cite{Balcan:Blum2010,Zhou:Zhan:Yang2007}. It is noteworthy that in previous semi-supervised learning studies, the disagreement-based and graph-based approaches were developed separately. While our theoretical result in Section~\ref{sec:sufficient-necessary} provides a possibility of bringing them into a unified framework, which will be an interesting research direction.


\bibliography{analyze-cotraining}
\bibliographystyle{elsarticle-num}
\end{document}